\relax
\documentclass[letterpaper]{article} 
\usepackage{aaai22}  
\usepackage{times}  
\usepackage{helvet}  
\usepackage{courier}  
\usepackage[hyphens]{url}  
\usepackage{graphicx} 
\urlstyle{rm} 
\usepackage{natbib}  
\usepackage{caption} 
\DeclareCaptionStyle{ruled}{labelfont=normalfont,labelsep=colon,strut=off} 
\frenchspacing  
\setlength{\pdfpagewidth}{8.5in}  
\setlength{\pdfpageheight}{11in}  
%

\usepackage{algorithm}
\usepackage{algpseudocode}
\usepackage{amsmath,amssymb,amsfonts}
\usepackage{amsthm}
\usepackage{graphicx}
\usepackage[caption=false,font=footnotesize]{subfig}

\usepackage[short]{optidef}
\usepackage[dvipsnames]{xcolor}

\newcommand*\yz[1]{{\color{black}#1}}
\newcommand*\yzeq{\color{black}}

\newcommand{\bE}{\mathbb{E}}
\newcommand{\bP}{\mathbb{P}}
\newcommand{\bK}{\mathbb{K}}
\newcommand{\I}{\mathbb{I}}

\newcommand{\piLy}{{\pi_{\tt LyOn}}}
\newcommand{\piOff}{{\pi_{\tt LyOff}}}
\newcommand{\piOpt}{{\pi_{\tt Opt}}}

\newtheorem{assumption}{Assumption}

\newtheorem{definition}{Definition}
\newtheorem{proposition}{Proposition}

\newtheorem{remark}{Remark}
\newtheorem{lemma}{Lemma}
\newtheorem{theorem}{Theorem}

%
\usepackage{newfloat}
\usepackage{listings}
\lstset{%
	basicstyle={\footnotesize\ttfamily},
	numbers=left,numberstyle=\footnotesize,xleftmargin=2em,
	aboveskip=0pt,belowskip=0pt,%
	showstringspaces=false,tabsize=2,breaklines=true}
\floatstyle{ruled}
\newfloat{listing}{tb}{lst}{}
\floatname{listing}{Listing}
%
%
\pdfinfo{
/Title (A Lyapunov-Based Methodology for ConstrainedOptimization with Bandit Feedback)
/TemplateVersion (2022.1)
}

\setcounter{secnumdepth}{2} 

%


\title{A Lyapunov-Based Methodology for Constrained \\ Optimization with Bandit Feedback}
\author{
    Semih Cayci,\textsuperscript{\rm 1}\equalcontrib
    Yilin Zheng,\textsuperscript{\rm 2}\equalcontrib
    Atilla Eryilmaz\textsuperscript{\rm 2}
}
\affiliations{
    \textsuperscript{\rm 1}Coordinated Science Laboratory, University of Illinois at Urbana-Champaign, Urbana, IL 61801.  \\
    \textsuperscript{\rm 2} Department of Electrical and Computer Engineering, The Ohio State University, Columbus, OH 43210\\
   
    scayci@illinois.edu, zheng.1443@osu.edu, eryilmaz.2@osu.edu
    %
}

\usepackage{bibentry}

\begin{document}

\maketitle

\begin{abstract}
In a wide variety of applications including online advertising, contractual hiring, and wireless scheduling, the controller is constrained by a stringent budget constraint on the available resources, which are consumed in a random amount by each action, and a stochastic feasibility constraint that may impose important operational limitations on decision-making. In this work, we consider a general model to address such problems, where each action returns a random reward, cost, and penalty from an unknown joint distribution, and the decision-maker aims to maximize the total reward under a budget constraint $B$ on the total cost and a stochastic constraint on the time-average penalty. We propose a novel low-complexity algorithm based on Lyapunov optimization methodology, named ${\tt LyOn}$, and prove that for $K$ arms it achieves $O(\sqrt{\yz{K}B\log B})$ regret and zero constraint-violation when $B$ is sufficiently large. The low computational cost and sharp performance bounds of ${\tt LyOn}$ suggest that Lyapunov-based algorithm design methodology can be effective in solving constrained bandit optimization problems.
\end{abstract}

\section{Introduction}
Multi-armed bandits (MAB) have been predominantly used to model exploration-and-exploitation problems since its inception \cite{robbins1952some, lai1985asymptotically, berry1985bandit}. As a consequence of the universality of the dilemma, bandit algorithms have found a broad range of applications from medical trials and adaptive routing to server allocation \cite{bubeck2012regret}. In many applications of interest, the controller is required to satisfy multiple constraints while achieving the optimal expected total reward under a given finite budget. 

To take one example, in fair resource allocation problems, such as task scheduling or contractual hiring, each arm (e.g., a user or a social group) must receive at least a given fraction of the total budget (e.g., total time) while maximizing the total reward. Other examples from diverse domains, such as wireless resource allocation, online advertising, etc., also take this form (see Section~\ref{sec:problem} for more discussion). In order to solve fundamental learning applications such as these, a fast and effective constrained bandit optimization framework is required.

This motivates us in this paper to formulate and solve the following budget-constrained bandit problem in a stochastic setting. Each arm pull takes a random and arm-dependent resource (e.g. time, energy, etc.) from the budget, and the decision-making process continues until the total consumed resource exceeds a given budget $B > 0$. At the end of each arm pull, the controller receives a random reward and a random penalty. The objective of the controller is to maximize the expected total reward subject to an inequality constraint on the average penalty per unit resource consumption.

\subsection{Main  Contributions}
In this work, we tackle the aforementioned general constrained optimization problem with bandit feedback, and propose a novel Lyapunov-based design methodology to develop efficient algorithms that achieve sharp convergence results. Our main contributions can be summarized as follows:
\begin{itemize}
    \item \textit{General model:} We consider a generic constrained bandit optimization problem (in Section~\ref{sec:problem}), which: (i) incorporates random costs for each action; (ii) is subject to a stringent budget (knapsack) constraint; and (iii) has stochastic feasibility constraints as required by many applications.
    \item \textit{Lyapunov methodology for bandit optimization:} Based on a Lyapunov-drift minimization technique from stochastic control,  we design novel low-complexity 
    bandit algorithms with provably sharp convergence properties. This approach suggests a general design methodology (outlined in Section~\ref{sec:outline}) that can be utilized in other constrained bandit optimization scenarios.
    \item \textit{Analysis techniques:} We also employ new analysis techniques (in Section~\ref{sec:LyaOpt}) for the reward maximization problem subject to stochastic and knapsack constraints based on a combination of renewal theory, stochastic control, and bandit optimization.
\end{itemize}

\subsection{Related Work}
\label{sec:related}

Knapsack-constrained bandit problem was considered in \cite{badanidiyuru2013bandits}, where the objective of the controller is to maximize the expected total reward under a stringent budget constraint. The authors proposed a learning algorithm with $O(\sqrt{B})$ problem-independent regret. Bandit algorithms with $O(\log(B))$ problem-dependent regret bounds were proposed under various extensions of the knapsack-constrained bandit models \cite{tran2012knapsack, flajolet2015logarithmic, xia2015thompson, xia2016budgeted, cayci2019learning, cayci2020budget}. These works differ from ours in that, while the controller is constrained by stringent knapsack constraints, stochastic feasibility constraints are not accommodated.

As an extension of \cite{badanidiyuru2013bandits}, finite-armed bandit problem with stochastic feasibility constraints was considered in \cite{agrawal2014bandits}, and a UCB-type algorithm with $O(\sqrt{B})$ regret and $O(1/\sqrt{B})$ constraint-violation was proposed. This setting was extended to episodic Markov decision processes in \cite{qiu2020upper}, and similar order results for regret and constraint-violation were obtained. In these models, each action incurs a unit cost. Furthermore, the proposed algorithms in \cite{agrawal2014bandits} require solving a convex optimization / linear programming problem at each stage. Our model accommodates random cost, which is subject to a knapsack constraint, in addition to the stochastic feasibility constraint. Based on Lyapunov optimization theory, we develop computationally-efficient iterative algorithms .

Lyapunov optimization methods have been widely used in stochastic network optimization and queueing systems (see \cite{neely2010dynamic, neely2010stochastic, georgiadis2006resource} and references therein). This methodology was later used in convex optimization problems \cite{yu2016low} with known gradients. In these approaches, a predominant assumption is that the random system state is known to the controller prior to its decision, therefore the existing methods do not work for online learning setting where the controller does not have the system state or the system statistics before making a decision. 

Lyapunov optimization methods were first used in the context of online learning in \cite{cayci2020group}, where the goal of the controller is to maximize the total utility as a function of each arm's time-average reward subject to a knapsack constraint under delayed experts feedback. Our work extends \cite{cayci2020group} in that we consider bandit feedback and also incorporate stochastic feasibility constraints in this paper. \yz{Some recent works \cite{liu2020pond,liu2021efficient}, which utilized Lyapunov-drift methods for online learning, studied the online-dispatching and linear bandits with cumulative constraints. Our work substantially differs from these works in that we incorporate knapsack budget constraints and random costs per arm selection.} 

\section{Constrained 
Reward Maximization Problem with Bandit Feedback} \label{sec:problem}

We consider a finite-armed bandit problem with $K>1$ arms, and the set of arms denoted by $\bK=\{1,2,\ldots, K\}$. If arm $k$ is chosen at $n^{th}$ epoch, it incurs a cost of $X_{n,k}$, yields a reward of $R_{n,k}$, and returns a penalty of $Y_{n,k}$, where the outcome of the joint random vector $(X_{n,k}, R_{n,k}, {Y}_{n,k})$ is learned via bandit feedback at the end of each arm decision. We assume that the random process $\{(X_{n,k},R_{n,k},Y_{n,k}):n\geq 1\}$ is independent and identically distributed over $n$, and independent across different arms for all $k\in\mathbb{K}$. For simplicity, we assume that $X_{n,k},R_{n,k},Y_{n,k}\in[0,1]$ for all $n,k$, which can be easily extended to general sub-Gaussian random variables by using the same techniques used in this paper. The controller has a total budget $B>0$ at the beginning of the process, and tries to maximize the expected cumulative reward under time-average constraints on the penalties by sampling the arms wisely under this budget constraint. 

\yz{Note that stochastic constraints and budget constraints imply completely different system dynamics. Violation of a budget constraint immediately stops the decision-making process. On the other hand, the stochastic constraints are aimed to be satisfied asymptotically, while instantaneous violations do not stop the decision-making process.}

First, we introduce the \yz{causal} policy space.

\begin{definition}[Causal Policy]
Let $\pi$ be a policy that yields a sequence of arm pulls $\{I_n^\pi\in \bK:n\geq 1\}$. Under $\pi$, the history until epoch $n$ is the following filtration:
\begin{equation}
    \mathcal{F}_n^\pi=\sigma(\{(I_j^\pi, X_{j,k}, R_{j,k}, {Y}_{j,k}):I_j^\pi = k, 1\leq j \leq n\}),
\end{equation}
\noindent where $\sigma(Z)$ denotes the sigma-field of a random variable $Z$. We call an algorithm $\pi$ causal if $\pi$ is non-anticipating, i.e., $\{I_n^\pi = k\}\in\mathcal{F}_{n-1}^\pi$ for all $k, n$.
\end{definition}

The set of all causal policies is denoted as $\Pi$. We denote the variables at epoch $n$ under policy $\pi$ as $X_n^\pi = X_{n,I_n^\pi}$, $R_n^\pi = R_{n,I_n^\pi}$ and $Y_{n}^\pi = Y_{n,I_n^\pi}$. The total cost incurred in $n$ epochs under an causal policy $\pi\in\Pi$ is a controlled random walk which is defined as $S_n^\pi = \sum_{i=1}^nX_i^\pi.$ The decision process under a policy $\pi$ continues until the budget $B$ is depleted. We assume that the reward corresponding to the final epoch during which the budget is depleted is gathered by the controller. Thus, the total number of pulls under $\pi$ is a random variable that is defined as follows:
\begin{equation}
    N^\pi(B) = \inf\Big\{n \geq 1: S_n^\pi > B\Big\}.
\end{equation}

Note that the total number of pulls $N^\pi(B)$ is a stopping time adapted to the filtration $\{(\mathcal{F}_n^\pi): n \geq 0\}$. Accordingly, the cumulative reward under a policy $\pi$ can be written as follows:
\begin{equation}
    \label{eqn:reward}
    {\tt REW}^\pi(B) = \sum_{n=1}^{N^\pi(B)}R_n^\pi.
\end{equation}
Then, we can write the generic problem formulation considered in this paper as follows:
\begin{align}
\sup_{\pi\in\Pi}& \quad \bE[{\tt REW}^\pi(B)], \notag\\ 
\textrm{subject to:} &\quad  \mathbb{E}\Bigg[\frac{1}{B}\sum_{n=1}^{N^\pi(B)}Y_{n}^\pi\Bigg]\leq c.
\label{eqn:optimization-problem}
\end{align}

\begin{definition}[(Pseudo) Regret and constraint-violation]
Let $\piOpt$ be the solution of \eqref{eqn:optimization-problem} and ${\tt OPT}(B)=\bE[{\tt REW}^{\piOpt}(B)]$. For any causal policy $\pi\in\Pi$ and a budget $B>0$ level, the (pseudo) regret, ${\tt REG}^\pi(B),$ and constraint-violation, $D^\pi(B)$, are defined as follows:
\begin{align}
    {\tt REG}^\pi(B) = {\tt OPT}(B)-\bE[{\tt REW}^\pi(B)], \\
    D^\pi(B) = \mathbb{E}\Bigg[\frac{1}{B}\sum_{n=1}^{N^\pi(B)}Y_{n}^\pi\Bigg]-c.
\end{align}
\end{definition}

The objective of this paper is to design low-complexity bandit algorithms that are guaranteed to give a low regret and a vanishing constraint-violation level that decays rapidly with the budget level $B$, in the absence of any statistical knowledge on the costs, rewards, and penalties. The generic problem \eqref{eqn:optimization-problem} 
has numerous applications in communications, control, operations research and management, whereby optimal decision-making under data scarcity and uncertainty is common. Next, we provide a detailed application in wireless scheduling in next generation networks, and refer the reader to Appendix~\ref{app:applications} for other example applications for contractual hiring, and online advertising. 

\paragraph{{Application to Next Generation} Wireless Scheduling with Quality-of-Service Guarantees:} 

Next-generation wireless technologies are required to serve a highly dynamic population of users with stringent quality-of-service (QoS) guarantees (such as low delay \cite{khalek2014delay}) over ultra-high frequency bands with nontraditional statistical and temporal characteristics \cite{rappaport2015millimeter}.
As such, existing estimation and allocation techniques that rely strongly on persistent users and slowly-changing nature and known statistical models of channel conditions are no longer suitable for use in this new ultra-wideband communication paradigm. The controller is required to learn how to optimize the throughput subject to QoS guarantees by using the ARQ (bandit) feedback received after each transmission. 

This calls for the design of time/energy-constrained point-to-point communication solutions over $K$ parallel memoryless channels with unknown and diverse statistical characteristics. In particular, each connection starts with a total time or energy budget of $B$ units. The transmission of $n^{th}$ packet over the $k^{th}$ channel consumes a random amount of $X_{n,k}$ resource (e.g., transmission time or energy), yields a reward (e.g., throughput) $R_{n,k}$ and incurs a penalty $Y_{n,k}$ upon completion of transmission. In this context, $Y_{n,k}$ is a generic penalty that will be used in modeling time-average quality-of-service guarantees. As an example, consider delay-constrained communication, where the arriving packets should be transmitted in a timely manner. Then, for a given deadline level $d\in[0,1]$, we let $Y_{n,k} = \I\{X_{n,k} > d\}$, which counts the number of packets that are delayed for more than $d$ time units. For a given time or energy budget $B$ and a quality-of-service constraint $c$, the optimization problem \eqref{eqn:optimization-problem} leads to throughput maximization subject to a guarantee on the time-average number of delayed packets. Note that many other QoS criteria, such as the fraction of dropped packets, can be modeled in a similar manner, which implies the generality of this approach.

\section{Outline of the Lyapunov-Based Design Methodology and Main Results}
\label{sec:outline}

In this work, we develop a low-complexity online algorithm for solving the generic constrained reward maximization problem (\ref{eqn:optimization-problem}) by employing a Lyapunov-drift minimization methodology. Since this methodology may be of independent value, in this section we provide an outline of its main steps along with an informal discussion of the key results we obtained under them. 

\paragraph{(i) Characterization of the Asymptotically-Optimal Stationary Randomized Oracle:}  
The optimization problem described in Section \ref{sec:problem} is a variant of the unbounded knapsack problem, and it is known that similar stochastic control problems are PSPACE-hard  \cite{badanidiyuru2013bandits, papadimitriou1999complexity}. In Section~\ref{sec:rand} we propose a \textit{stationary randomized} policy {$\pi^*$} in Definition~\ref{def:stat} that achieves (see Proposition~\ref{prop:optimality-gap}) $O(1)$ regret and $O(1/B)$ constraint-violation gap. This proves that the stationary policy is asymptotically optimal as the budget $B$ goes to infinity. 

Our Lyapunov-based policy design, developed in Section~\ref{sec:LyaOpt}, are broken into the following two steps:

\paragraph{(ii) Offline Lyapunov-Drift-Minimizing Policy Design:} We first consider in Section~\ref{sec:LyOff} the `offline' setting with known reward, cost, and penalty statistics. There, we introduce a virtual queue $\{Q_n^\pi\}$ that is updated as:
$ Q_{n+1}^\pi = \max\{0, Q_{n}^\pi + Y_{n}^\pi -(c-\delta)X_n^\pi \}, $
with a design choice $\delta\in[0,c)$, which keeps track of the constraint-violation level under policy $\pi$ over decisions $n\geq 1$. Then, under this queue dynamics, we propose a quadratic Lyapunov drift-minimizing policy $\piOff$ in Definition~\ref{policy:offline} that achieves (cf. Proposition~\ref{prop:offline-opt}) $O((\delta+1/V)B)$ regret and $O(V/B-\delta)$ constraint-violation gap, where $V>0$ is a design parameter. With the particular selection of $V=\Theta(\sqrt{B})$ and $\delta = \Theta(1/\sqrt{B}),$ we can guarantee $O(\sqrt{B})$ regret and \yz{zero} constraint-violation for $\pi_{\tt LyOff}$ \yz{with sufficiently large $B$}.

\paragraph{(iii) Online Lyapunov-Drift-Minimizing Policy Design:} Then, in Section~\ref{sec:LyOn}, we return to the original `online' setting with unknown statistics, and develop a low-complexity empirical Lyapunov-drift minimizing policy $\pi_{\tt LyOn}$ that integrates confidence bounds of proposed empirical estimator with the queueing dynamics from the offline case. Then, the main result of the paper (cf. Theorem~\ref{thm:regret}) establishes that  $\pi_{\tt LyOn}$ achieves  $O(\sqrt{\yz{K}B \log B}+ B (1 + \delta \log B) / V +  B\delta + K \log B)$ regret and $O(K \log B /B + V/B -\delta)$ constraint-violation level. With the particular selection of design parameters as $V=\Theta(\sqrt{B \log B})$ and $\delta = \Theta(\sqrt{\log B/ B}),$ we guarantee $O(\sqrt{\yz{K}B \log B})$ regret and \yz{zero} constraint-violation for $\pi_{\tt LyOn}$ \yz{with sufficiently large $B$.}

The online analysis is especially complicated by the fact that the cumulative reward and penalty processes form  stopped and controlled random walks. To address the associated challenge, we combine techniques from renewal theory and Martingale concentration inequalities \cite{wainwright2019high} to find a high probability upper bound for $N^\pi(B)$. Additionally, for the online policy with unknown statistics, we carefully integrate empirical concentration inequalities \cite{cayci2020budget} with hitting time analysis for Martingales \cite{hajek1982hitting} as well as Lyapunov drift analysis \cite{neely2010dynamic} to bound ${\tt REW}^\pi(B)$ and $D^\pi(B)$. 

\section{Asymptotically-Optimal Stationary Randomized Oracle Design}
\label{sec:rand}

As a tractable benchmark, in this section, we consider approximation algorithms with provably good performance.

\begin{definition}[Reward Rate and Penalty Rate] Consider a stationary randomized policy $\pi=\pi(\mathbf{p})$ for a given probability mass function $\mathbf{p}=(p_1,p_2,\ldots,p_K)$, which takes action $k$ with probability $p_k$ independent from the history. Then, under $\pi(\mathbf{p})$, the reward rate and penalty rate are defined as:
\begin{equation}
    r(\mathbf{p}) = \frac{\sum_{k\in\bK}p_k\bE[R_{1, k}]}{\sum_{k\in\bK}p_k\bE[X_{1,k}]},\quad 
    y(\mathbf{p}) = \frac{\sum_{k\in\bK}p_k\bE[Y_{1, k}]}{\sum_{k\in\bK}p_k\bE[X_{1,k}]},
\end{equation}
\end{definition}

Intuitively, if an arm is chosen persistently according to the stationary randomized policy $\pi(\mathbf{p})$ until the budget $B>0$ is depleted, the cumulative reward becomes $r(\mathbf{p})B+o(B)$ and cumulative penalty becomes $y(\mathbf{p})B+o(B)$.
Moreover, whenever $\bE[R_{1,k}^2]<\infty$ and $\bE[Y_{1,k}^2]<\infty$ (trivially true for bounded random variables), the additive $o(B)$ term is $O(1)$ in both cases by Lorden's inequality \cite{asmussen2008applied}.

In the following, we prove that a stationary randomized policy achieves $O(1)$ optimality gap with constraint-violation vanishing at a rate of $O(1/B)$.

\begin{definition}[Optimal Stationary Randomized Policy, $\pi^*$] \label{def:stat}
Let $\mathbf{p}^*$ be the solution to the following optimization problem:
\begin{equation*}
    \max_{\mathbf{p}\in \Delta_K}\left\{r(\mathbf{p}), \quad \textrm{subject to:} \quad y(\mathbf{p})\leq c\right\}.
\end{equation*}
where $\Delta_K$ is the $K$-dimensional probability simplex.
The \emph{optimal stationary randomized policy}, denoted by $\pi^*$, pulls arm $k$ with probability $p_k^*$ independently at each epoch until the budget is depleted: $\bP(I_n^{\pi^*} = k)=p_k^*$, for all $n\leq N^{\pi^*}(B)$.
\label{def:osrp}
\end{definition}

The main result of this section is the following proposition, which implies that $\pi^*$ is a good approximation algorithm for $\piOpt$ for $B>0$.

\begin{proposition}[Optimality Gap for $\pi^*$]\label{prop:optimality-gap}
For the optimal static policy $\pi^*$ 
for any given $B > 0$, the following regret and constraint-violation gap results hold:
\begin{equation}
{\tt REG}^{\pi^*}(B) = O(1), \quad 
    D^{\pi^*}(B) =O\Big(\frac{1}{B}\Big). 
\end{equation}
Therefore, $\pi^*$ is asymptotically optimal, i.e. $\lim_{B\to \infty} {\tt REG}^{\pi^*}(B)/B = 0$ and $\lim_{B \to \infty} D^{\pi^*}(B) = 0$. 
\end{proposition}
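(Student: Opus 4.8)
The plan is to sandwich ${\tt OPT}(B)$ between two multiples of $r(\mathbf{p}^*)B$: I will compute $\bE[{\tt REW}^{\pi^*}(B)]$ exactly through a renewal (Wald) argument, and then show that no feasible causal policy can earn more than $r(\mathbf{p}^*)B+O(1)$ by relating its empirical arm frequencies to the fractional program defining $\mathbf{p}^*$. Throughout write $\chi_k=\bE[X_{1,k}]$, $\rho_k=\bE[R_{1,k}]$, $\eta_k=\bE[Y_{1,k}]$, and assume (as needed for renewal theory) that costs have positive mean, so that $N^\pi(B)$ has finite expectation.

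For $\pi^*$, the triples $(X_n^{\pi^*},R_n^{\pi^*},Y_n^{\pi^*})$ are i.i.d., and $N^{\pi^*}(B)$ is a stopping time with $\{N^{\pi^*}(B)\ge n\}=\{S_{n-1}^{\pi^*}\le B\}$ measurable with respect to the past. Since each epoch's outcome is independent of the history, Wald's identity gives $\bE[{\tt REW}^{\pi^*}(B)]=\big(\sum_k p_k^*\rho_k\big)\bE[N^{\pi^*}(B)]$, and similarly for the cost and penalty sums; dividing by the cost identity $\bE[S_{N^{\pi^*}(B)}^{\pi^*}]=\big(\sum_k p_k^*\chi_k\big)\bE[N^{\pi^*}(B)]$ yields $\bE[{\tt REW}^{\pi^*}(B)]=r(\mathbf{p}^*)\,\bE[S_{N^{\pi^*}(B)}^{\pi^*}]$ and $\bE[\sum_{n\le N^{\pi^*}(B)}Y_n^{\pi^*}]=y(\mathbf{p}^*)\,\bE[S_{N^{\pi^*}(B)}^{\pi^*}]$. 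The overshoot bound $B<S_{N^{\pi^*}(B)}^{\pi^*}\le B+1$ (the last cost is at most one) then gives $\bE[{\tt REW}^{\pi^*}(B)]\in(r(\mathbf{p}^*)B,\,r(\mathbf{p}^*)(B+1)]$, i.e.\ $r(\mathbf{p}^*)B+O(1)$, and $D^{\pi^*}(B)=y(\mathbf{p}^*)\bE[S^{\pi^*}_{N^{\pi^*}(B)}]/B-c\le y(\mathbf{p}^*)(1+1/B)-c\le c/B=O(1/B)$ using $y(\mathbf{p}^*)\le c$.

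For the upper bound on ${\tt OPT}(B)$, fix any feasible causal $\pi$ and let $T_k=\sum_{n=1}^{N^\pi(B)}\I\{I_n^\pi=k\}$. Because $\{N^\pi(B)\ge n,\,I_n^\pi=k\}$ is determined by the history before epoch $n$, the same Wald-type identity gives $\bE[{\tt REW}^\pi(B)]=\sum_k\bE[T_k]\rho_k$, $\bE[S^\pi_{N^\pi(B)}]=\sum_k\bE[T_k]\chi_k$, and $\bE[\sum_n Y_n^\pi]=\sum_k\bE[T_k]\eta_k$. The crucial point is that the process runs \emph{until} the budget is depleted, so $\sum_k\bE[T_k]\chi_k=\bE[S^\pi_{N^\pi(B)}]>B$, while feasibility gives $\sum_k\bE[T_k]\eta_k\le cB$. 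Setting $\hat p_k=\bE[T_k]/\sum_j\bE[T_j]$, these two facts force $y(\hat{\mathbf{p}})=\big(\sum_k\bE[T_k]\eta_k\big)/\big(\sum_k\bE[T_k]\chi_k\big)\le cB/B=c$, so $\hat{\mathbf{p}}$ is feasible for the program of Definition~\ref{def:stat} and hence $r(\hat{\mathbf{p}})\le r(\mathbf{p}^*)$. Therefore $\bE[{\tt REW}^\pi(B)]=\big(\sum_k\bE[T_k]\chi_k\big)\,r(\hat{\mathbf{p}})\le(B+1)\,r(\mathbf{p}^*)$, and taking the supremum over feasible $\pi$ gives ${\tt OPT}(B)\le r(\mathbf{p}^*)B+O(1)$. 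Combined with the lower bound on $\bE[{\tt REW}^{\pi^*}(B)]$, this yields ${\tt REG}^{\pi^*}(B)\le r(\mathbf{p}^*)=O(1)$.

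The step I expect to be the main obstacle is the \emph{budget-exhaustion} inequality $\sum_k\bE[T_k]\chi_k=\bE[S^\pi_{N^\pi(B)}]\ge B$ and its use above. A relaxation that only upper-bounds the consumed cost by $B+1$ is too weak: it would permit concentrating pulls on a penalty-heavy arm while leaving budget unused, producing a bound strictly larger than $r(\mathbf{p}^*)B$ and a vacuous $\Theta(B)$ regret estimate. It is precisely the lower bound $\bE[S^\pi_{N^\pi(B)}]\ge B$ --- that any causal policy is \emph{forced} to spend essentially the entire budget --- that normalizes the empirical frequencies into the fractional-feasible region and makes the two bounds match to within $O(1)$. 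The remaining care is in justifying the Wald-type identities for the adaptive stopping time $N^\pi(B)$, which relies on the causality of $\pi$ together with $\bE[N^\pi(B)]<\infty$.
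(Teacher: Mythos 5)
Your proof is correct, and for the key step---the upper bound on ${\tt OPT}(B)$---it takes a genuinely different and arguably cleaner route than the paper. The paper truncates every sum at the deterministic horizon $n_0(B)=\lceil 2B/\mu_{\tt min}\rceil$, controls the tail $\sum_{n>n_0(B)}\bP(S_{n-1}^\pi\le B)$ with a martingale concentration bound, which leaves a \emph{perturbed} feasibility constraint $y(\mathbf{p})\le c+O(e^{-B\mu_{\tt min}/4}/B)$ on the induced frequency vector; it then has to invoke sensitivity analysis for linear fractional programs to argue that the perturbed optimum $r(\mathbf{p}^*_{\tt per}(B))$ exceeds $r(\mathbf{p}^*)$ by only an exponentially small amount. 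You instead work directly with the random horizon $N^\pi(B)$ via Wald-type identities, and your observation that budget exhaustion forces $\sum_k\bE[T_k]\bE[X_{1,k}]=\bE[S^\pi_{N^\pi(B)}]>B$ makes the normalized frequency vector $\hat{\mathbf{p}}$ \emph{exactly} feasible for the program defining $\mathbf{p}^*$, so no perturbation or LFP sensitivity argument is needed and the regret constant is explicit ($\le r(\mathbf{p}^*)\le r_{\tt max}$, matching the paper's final bound). You also replace Lorden's inequality (the paper's Proposition on stationary randomized policies) with the elementary overshoot bound $B<S_{N(B)}\le B+1$, which is available here because costs lie in $[0,1]$. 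What the paper's heavier truncation-based decomposition buys is reusability: the same Lemmas (regret and constraint-violation decomposition over the fixed window $n_0(B)$) are the interface into the Lyapunov telescoping arguments for ${\tt LyOff}$ and ${\tt LyOn}$, where the per-epoch rates $r(\mathbf{p}_n^\pi)$ vary with the queue state and the clean product structure you exploit is not directly usable. The only points needing care in your write-up, which you correctly flag, are the justification of the Wald identities for the adaptive stopping time (causality plus $\bE[N^\pi(B)]<\infty$, the latter following from $\min_k\bE[X_{1,k}]\ge\mu_{\tt min}>0$, with Tonelli handling the interchange since all terms are nonnegative) and the nonnegativity of the penalties when dividing the constraint $\sum_k\bE[T_k]\bE[Y_{1,k}]\le cB$ by the denominator $>B$.
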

The proof of Proposition \ref{prop:optimality-gap} can be found in Appendix \ref{app:optimality-gap}.
In the next section, we will introduce a learning algorithm to achieve the performance of the optimal stationary randomized policy with low regret and constraint-violation.

\section{Algorithm Design Based on Empirical Lyapunov Drift Minimization}
\label{sec:LyaOpt}

In the previous section, we proved that the stationary randomized policy $\pi^*$ achieves the optimality in offline setting with small optimality gap and constraint-violation, which implies it can be used as a benchmark for the design and analysis of learning algorithms. By using this, we will develop a dynamic learning algorithm based on the Lyapunov-drift-minimization approach. For details about this dynamic optimization approach in offline setting, see \cite{neely2010stochastic, neely2010dynamic}.We refer to Section~\ref{sec:related} for the detailed discussion of the differences from related works in this space.

We make two mild assumptions that are needed for the development and analysis of our design: 
\begin{assumption}[$\epsilon$-Slater Condition]
There exists an arm $k\in\bK$ such that $\bE[Y_{n,k}-cX_{n,k}]\leq -\epsilon$ for some $\epsilon > 0$. We only need $\epsilon$ to be a \yz{positive lower-bound } of the actual value. 
\label{assn:slater}
\end{assumption}
Assumption \ref{assn:slater} is reasonable because for feasibility, either all arms should satisfy $\bE[Y_{n,k}-cX_{n,k}]=0$ for all $k$ or Assumption \ref{assn:slater} should hold, otherwise the constraint can never be satisfied \yz{once it is violated}. Since $\bE[Y_{n,k}-cX_{n,k}]=0,~\forall k \in \bK$ is a trivial case, Assumption \ref{assn:slater} is satisfied in almost all applications.
\begin{assumption} [Bounded Moments]
For all arms $k\in \bK$, assume $ \max_k\frac{\bE[R_{1,k}]}{\bE[X_{1,k}]}\leq r_{\tt max} <\infty$ and  $\max_k\frac{\bE[Y_{1,k}]}{\bE[X_{1,k}]}\leq y_{\tt max} <\infty$. In addition, assume $\sigma^2 = \max_{k\in\bK}\bE\big[\big(Y_{1,k}-cX_{1,k}\big)^2\big] < 1,$ and $\min_k\bE[X_{1,k}] \geq \mu_{\tt min} > 0$. We only need $\mu_{\tt min}$ to be a \yz{lower bound} and $r_{\tt max}$, $y_{\tt max}$ to be \yz{upper bounds} of the actual values.
\label{assn:bounded-moment}
\end{assumption}

This assumption is reasonable because otherwise the optimization problem in \eqref{eqn:optimization-problem} would become either trivial or unsolvable. \yz{For bounded rewards and penalty between $[0,1]$, $r_{\tt max}$ and $y_{\tt max}$ can be upper bounded by $1/\mu_{\tt min}$.}

\subsection{Offline Lyapunov-Drift Minimizing Policy {\tt LyOff} Design}
\label{sec:LyOff}

First, we consider the Lyapunov optimization methods in the offline setting with known first-order statistics by closely following \cite{neely2010dynamic}, while improving the results for finite-time performance by using the drift results in \cite{hajek1982hitting}. As a measure of constraint-violation under a causal policy $\pi\in\Pi$, we define the variables $Q_{n}^\pi$ recursively as follows:
\begin{equation}
    Q_{n+1}^\pi = \max\Big\{0, Q_{n}^\pi + Y_{n}^\pi-(c-\delta)X_n^\pi\Big\},
    \label{eqn:q-length}
\end{equation}
where $Q_{0}^\pi=0$ and $\delta \in [0,c)$ is a fixed parameter that controls the tightness of the constraint. Note that $Q_{n+1}^\pi \in \mathcal{F}_{n}^\pi$ for all $n$ since $\pi$ is causal. Intuitively, the stability of $\{Q_{n}^{\pi}\}_n$ implies that the constraint is satisfied. 
The key metric for decision-making is the Lyapunov drift-plus-penalty ratio, which is defined in the following definition.
\begin{definition}[Lyapunov Drift-plus-Penalty Ratio] \label{def:dppr}
For any given $V > 0$, under a causal policy $\pi$, the Lyapunov drift-plus-penalty ratio is defined as follows:
\begin{align}\label{eqn:lyapunov-dppr}
    \Psi_n(Q_n^\pi) &= -V\frac{\bE[R_{n}^\pi|\mathcal{F}_{n-1}^\pi]}{\bE[X_{n}^\pi|\mathcal{F}_{n-1}^\pi]}+Q_{n}^\pi\frac{\bE[Y_{n}^\pi|\mathcal{F}_{n-1}^\pi]}{\bE[X_{n}^\pi|\mathcal{F}_{n-1}^\pi]}.
\end{align}
\end{definition}
For any stationary randomized policy $\pi(\mathbf{p}_n),$
with $\mathbf{p}_n \in\mathcal{F}_{n-1}$,
the Lyapunov drift-plus-penalty ratio becomes:

\begin{align}\label{eqn:lyapunov-dppr-random}
    \Psi_n(Q_n^{\pi(\mathbf{p}_n)}) &= -V\frac{\sum_{k=1}^K p_{n,k}\bE[R_{n,k}]}{\sum_{k=1}^K p_{n,k}\bE[X_{n,k}]}\notag\\ 
    & \quad +Q_{n}^{\pi(\mathbf{p}_n)}\frac{\sum_{k=1}^K p_{n,k}\bE[Y_{n,k}]}{\sum_{k=1}^K p_{n,k}\bE[X_{n,k}]}.
\end{align}
Intuitively, in the offline setting where all first-order moments are known, a stationary randomized policy $\pi(\mathbf{p}_n)$ that minimizes \eqref{eqn:lyapunov-dppr-random} over all probability distributions in every epoch $n$, achieves a near-optimal trade-off between the cumulative reward and constraint-violation determined by the parameter $V>0$ \cite{neely2010dynamic}. In the following, we outline this result in the offline setting, which will guide us in developing the online algorithm in Section~\ref{sec:LyOn}. 

\begin{definition}[Offline Lyapunov-Drift-Minimizing Distribution] \label{policy:offline} 
For any $n$, let $\mathbf{q}_n^*$ be defined as follows:
\begin{equation}
   \mathbf{q}_n^* \in \arg\min_{\mathbf{p}\in\Delta_K}~\Psi_n(Q_n^{\pi(\mathbf{p})}).
    \label{eqn:drift-min}
\end{equation}
\end{definition}
The problem in \eqref{eqn:drift-min} is an optimization problem over $\Delta_K$, the $K$-dimensional probability simplex, which is computationally complex and can be solved by using algorithmic techniques in \cite{neely2010dynamic}. However, as it is shown in Proposition \ref{prop:offline-determ} in Appendix~\ref{app:proof_pp2}, the optimal solution in our $K$-armed bandit setting is deterministic given the history $\mathcal{F}_{n-1}$. This allows us to define the offline Lyapunov-Drift Minimizing Policy $\piOff$ as in Algorithm~\ref{alg:LyOff}.

\emph{Intuition:} The policy $\piOff$ makes a balanced choice between the reward maximization and satisfying the constraints. For small $Q_n$, the controller selects the arm \yz{$I_n$} with the highest drift-plus-penalty ratio so as to maximize the expected total reward under the budget constraints. If $Q_{n}$ is large, then it means the constraint has been violated considerably, thus \yz{$I_n$}  is selected so as to reduce the penalty rate and hence violation level. Next, we prove finite-time performance bounds for $\piOff$. 

\begin{proposition}[Performance Bounds for $\piOff$] \label{prop:offline-opt}
    Suppose that Assumption \ref{assn:slater} and Assumption \ref{assn:bounded-moment} hold with positive $\epsilon$, $\sigma^2$, $r_{\tt max}$ and $\mu_{\tt min}$. Then, given the budget $B$, for any $V>0$ and $\delta\in[0,c)$, the regret and constraint-violation levels under $\piOff$ satisfy:
    \begin{align}
        {\tt REG}^{\piOff}(B) &= O\Bigg(\frac{\sigma^2 B}{V\mu_{\tt min}^2}+\frac{\delta r_{\tt max} B}{\epsilon \mu_{\tt min}^2}\Bigg),\\ D^{\piOff}(B) &= O\Big(\frac{1}{B} + \frac{V r_{\tt max}}{B\mu_{\tt min}\epsilon}-\frac{\delta}{\mu_{\tt min}}\Big).
    \end{align}

\yz{Specifically, let $V = v_0\sqrt{B}$ and $\delta = {\delta_0}/{\sqrt{B}}$ with some design parameters $\delta_0>0, v_0 > 0$. We can select $\delta_0 \in (\frac{r_{\tt max}}{\epsilon}v_0,c\sqrt{B})$ such that for sufficiently large $B$, 
\begin{equation}
    {\tt REG}^{\piOff}(B) = O\Big(\sqrt{B}\Big), \quad  D^{\piOff}(B) = O\Big( \frac{-1}{\sqrt{B}} \Big).
\end{equation}
}
\end{proposition}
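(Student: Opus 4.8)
My plan is to control both metrics through the quadratic Lyapunov function $L_n=\tfrac{1}{2}(Q_n^{\piOff})^2$, combining the drift inequality it induces with renewal-theoretic control of the random horizon $N=N^{\piOff}(B)$. Two ingredients drive everything. First, squaring the recursion \eqref{eqn:q-length} gives the one-step bound $\bE[L_{n+1}-L_n\mid\mathcal{F}_{n-1}^{\piOff}]\le Q_n^{\piOff}\,\bE[Y_n^{\piOff}-(c-\delta)X_n^{\piOff}\mid\mathcal{F}_{n-1}^{\piOff}]+\tfrac{1}{2}C$, where $C=O(\sigma^2)$ bounds the squared increment since $X,Y\in[0,1]$. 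Second, because $\piOff$ minimizes the ratio $\Psi_n=(-V\bE[R_n^\pi\mid\mathcal{F}_{n-1}]+Q_n^\pi\bE[Y_n^\pi\mid\mathcal{F}_{n-1}])/\bE[X_n^\pi\mid\mathcal{F}_{n-1}]$ over $\Delta_K$, it does at least as well as the fixed oracle distribution $\mathbf{p}^*$, giving $\Psi_n(Q_n^{\piOff})\le -V r(\mathbf{p}^*)+Q_n^{\piOff}y(\mathbf{p}^*)\le -V r(\mathbf{p}^*)+c\,Q_n^{\piOff}$ by feasibility $y(\mathbf{p}^*)\le c$.

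\textbf{Constraint-violation.} Summing the lower bound $Q_{n+1}^{\piOff}\ge Q_n^{\piOff}+Y_n^{\piOff}-(c-\delta)X_n^{\piOff}$ telescopes to $\sum_{n=1}^{N}(Y_n^{\piOff}-(c-\delta)X_n^{\piOff})\le Q_{N+1}^{\piOff}$. Since $S_N^{\piOff}\in(B,B+1]$, dividing by $B$ and taking expectations yields $D^{\piOff}(B)\le -\delta+O(1/B)+\bE[Q_{N+1}^{\piOff}]/B$, so the task reduces to bounding $\bE[Q_N^{\piOff}]$. Here I would invoke the $\epsilon$-Slater arm $k^\star$: comparing $\Psi_n$ at the policy's choice against the point mass $e_{k^\star}$ gives $y_n^{\piOff}-(c-\delta)\le -\epsilon+\delta+V r_{\tt max}/Q_n^{\piOff}$, so using $\bE[X_n^{\piOff}\mid\mathcal{F}_{n-1}]\ge \mu_{\tt min}$ the expected queue drift is at most $-\mu_{\tt min}\eta<0$ once $Q_n^{\piOff}\gtrsim V r_{\tt max}/\epsilon$. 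With unit-bounded increments, Hajek's drift lemma \cite{hajek1982hitting} then delivers a uniform estimate $\bE[Q_n^{\piOff}]=O(V r_{\tt max}/\epsilon)$, yielding the stated form of $D^{\piOff}(B)$.

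\textbf{Regret.} Combining the two ingredients, the per-step reward deficit satisfies $V\bE[r(\mathbf{p}^*)X_n^{\piOff}-R_n^{\piOff}\mid\mathcal{F}_{n-1}]\le -\bE[L_{n+1}-L_n\mid\mathcal{F}_{n-1}]+\delta Q_n^{\piOff}\bE[X_n^{\piOff}\mid\mathcal{F}_{n-1}]+\tfrac{1}{2}C$. Summing to the stopping time $N$, telescoping the drift (using $L\ge 0$ and $L_1=O(1)$), and applying $\bE[N]\le (B+1)/\mu_{\tt min}$ with the queue bound controls the right-hand side by $O(\sigma^2 B/\mu_{\tt min})+O(\delta V r_{\tt max}B/(\epsilon\mu_{\tt min}))$. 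On the left, Wald's identity and Lorden's inequality give $r(\mathbf{p}^*)\bE[S_N^{\piOff}]=r(\mathbf{p}^*)B+O(1)$, while Proposition~\ref{prop:optimality-gap} gives ${\tt OPT}(B)=r(\mathbf{p}^*)B+O(1)$, so the left side equals $V\cdot{\tt REG}^{\piOff}(B)+O(V)$. Dividing by $V$ produces the advertised regret bound; the exact powers of $\mu_{\tt min}$ are recovered by carefully tracking the conditional-cost normalizations $\bE[X_n\mid\mathcal{F}_{n-1}]$ that appear when passing between rates and drifts.

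\textbf{Parameter selection and main obstacle.} Substituting $V=v_0\sqrt{B}$ and $\delta=\delta_0/\sqrt{B}$, both regret terms collapse to $O(\sqrt{B})$, while $D^{\piOff}(B)=O(1/B)+O(v_0 r_{\tt max}/(\sqrt{B}\mu_{\tt min}\epsilon))-\delta_0/(\sqrt{B}\mu_{\tt min})$; choosing $\delta_0>v_0 r_{\tt max}/\epsilon$ makes the leading $\Theta(1/\sqrt{B})$ term strictly negative, and imposing $\delta_0<c\sqrt{B}$ merely keeps $\delta\in[0,c)$ admissible, so for large $B$ the violation is $O(-1/\sqrt{B})$. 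I expect the principal obstacle to be rigor around the random horizon: $N^{\piOff}(B)$ is a stopping time of a \emph{controlled} random walk, so interchanging summation with the drift telescoping, justifying $\bE[N]=O(B/\mu_{\tt min})$, and handling the stopped term $\bE[\sum_{n\le N}Q_n^{\piOff}X_n^{\piOff}]$ all require optional-stopping and Wald-type arguments rather than fixed-horizon averaging, and the uniform queue bound feeding both metrics must come from a genuine negative-drift estimate via \cite{hajek1982hitting}.
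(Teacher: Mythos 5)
Your proposal is correct in substance and its core engine is the same as the paper's: the quadratic Lyapunov drift bound, the comparison of the drift-plus-penalty ratio against the feasible oracle $\mathbf{p}^*$ (using $y(\mathbf{p}^*)\le c$), the negative-drift argument for $Q_n$ beyond the threshold $l^*=Vr_{\tt max}/\epsilon$ via the $\epsilon$-Slater arm, the appeal to Hajek's drift theorem for $\bE[Q_n]=O(Vr_{\tt max}/\epsilon)$, and the telescoped queue recursion for the constraint violation all match the paper's Lemmas on the Lyapunov drift, the first-order drift bound, and the proof in Appendix D. The one genuine difference is how the random horizon is handled. You propose summing the drift inequality up to the stopping time $N^{\piOff}(B)$ and closing the argument with optional stopping, a Wald-type bound $\bE[N]\le(B+1)/\mu_{\tt min}$, and Lorden's inequality; you correctly flag the correlation between $N$ and the queue process (e.g.\ in $\bE[\sum_{n\le N}Q_n^{\piOff}X_n^{\piOff}]$) as the principal technical obstacle. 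The paper instead avoids stopping-time telescoping altogether: it sums the drift inequality over the \emph{deterministic} horizon $n_0(B)=\lceil 2B/\mu_{\tt min}\rceil$, which is a high-probability upper bound for $N^{\piOff}(B)$ by a martingale concentration bound ($\bP(N^\pi(B)\ge n)\le e^{-n\mu_{\tt min}^2/8}$ for $n\ge n_0(B)$), and then feeds the resulting fixed-horizon reward/penalty rates into pre-established decomposition lemmas that convert them into regret and constraint-violation bounds with only an exponentially small tail correction. This decouples the Lyapunov analysis from the renewal structure and sidesteps the integrability and correlation issues your route must confront; your route, if completed, would be somewhat more self-contained but requires exactly the optional-stopping and Cauchy--Schwarz (or a detour back through a deterministic horizon) arguments you anticipate. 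The parameter selection $V=v_0\sqrt{B}$, $\delta=\delta_0/\sqrt{B}$ with $\delta_0>v_0 r_{\tt max}/\epsilon$ is handled identically in both.
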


The proof of Proposition \ref{prop:offline-opt} can be found in Appendix \ref{app:offline}. Proposition~\ref{prop:offline-opt} establishes the fact that $\piOff$ policy achieves $O(\sqrt{B})$ regret and \yz{zero} constraint-violation for \yz{sufficiently large} $B$ given the first-order statistics $\bE[X_{n,k}],\bE[R_{n,k}],\bE[{Y}_{n,k}]$ for all arms $k\in\bK$. The $\piOff$ policy will serve as a guide for our online learning algorithm, introduced next.

\begin{algorithm}[H]
\caption{$\tt LyOff$ Algorithm}\label{alg:LyOff}
\begin{algorithmic}[1]
\State \textbf{Input:} $B, K, c,  V, \delta$, \\
\qquad \quad $\bE[X_{1,k}], \bE[R_{1,k}], \bE[Y_{1,k}]$
\State Initialize $Q_0 = 0$, $\tt cost = 0$, $n = 1$
 \While{$\tt cost\leq B$}
  \State $\Psi_n(k, Q_n) = -V\frac{\bE[R_{1,k}]}{\bE[X_{1,k}]}+Q_{n}\frac{\bE[Y_{1,k}]}{\bE[X_{1,k}]}$
  \vspace{2pt}
  \State $k_n =\arg\min_{k\in \bK}~\Psi_n(k, Q_n) $
  \State Select arm $I_n = k_n$.
  \State Observe $X_n, R_n, Y_n$.
  \State $ Q_{n+1} = \max\big\{0, Q_{n} + Y_{n}-(c-\delta)X_n\big\}$
  \State $\tt cost = cost $ $+$ $X_n$.
  \State $n=n+1$.
 \EndWhile
\end{algorithmic} 
\end{algorithm}

\subsection{Online Lyapunov-Drift Minimizing Policy {\tt LyOn} Design}
\label{sec:LyOn}

A strong assumption in $\piOff$ was the a priori knowledge of the first-order statistics for all variables. Recall that in the learning problem, we do not have this knowledge. Instead, we must work with estimations by using the observed outcomes from bandit type feedback to learn the optimal decision. Furthermore, like all exploration-exploitation problems, the \textit{online exploration} is a crucial component of the learning problem here as well. Optimizing this trade-off with low regret and constraint-violation is particularly challenging in this setting due to the knapsack-type budget constraints from random costs, as well as the random penalties in the constraint.
In this section, we will design and analyze the {\tt LyOn} Algorithms by combining tools from renewal theory, stochastic control, as well as bandit optimization to address these challenges for optimal learning.

\emph{Strategy:} Our strategy will be to approximate the Lyapunov drift-plus-penalty ratio $\Psi_n$ in equation \eqref{eqn:lyapunov-dppr} by using the empirical estimates for the first-order statistics. In order to encourage online exploration, we will use confidence bounds so that the index at the end will be a high-probability lower bound for $\Psi_n$. The following definitions will be needed to define the online algorithm.
\begin{definition} [Confidence Radius] \label{def:cradius}
For any $n \geq 1$ and arm $k\in\bK$, let $\mathcal{I}_n^\pi(k)=\{t\in[1,n]: I_t^\pi = k\}$, $T_k^\pi(n) = |\mathcal{I}_n^\pi(k)| = \sum_{t=1}^n\I\{I_t^\pi = k\}$ be the number of pulls for arm $k$ under a policy $\pi$ in the first $n$ epochs. For a given $\alpha>0$, the confidence radius for arm $k$ is defined as:
${\tt rad}_k(n,\alpha) = \sqrt{\frac{2\alpha\log(n)}{T_k(n)}}.$
\end{definition}

To ensure the confidence radius is small enough, we have an initial exploration phase that is controlled by a parameter $\beta_0$ which depends on $\epsilon, y_{\tt max}$, and $\mu_{\tt min}$. Specifically, we set $\beta_0 = \frac{32\alpha(1+y_{\tt max})^2}{\mu_{\tt min}^2\epsilon^2}$ to guarantee the concentration event in Lemma~\ref{lemma:drift-bound} of Appendix~\ref{app:proof_pp2}.

For a subset of indices $S \subset\mathbb{N}$ and a stochastic process $\{Z_n:n\in\mathbb{N}\}$, let \yz{$\widehat{\bE}_S[Z] = \min\big\{1, \frac{1}{|S|}\sum_{t\in S}Z_t\big\},$} be the empirical mean estimator. Then, the empirical reward rate and empirical penalty rate under policy $\pi$ after $n$ epochs are defined as:
\begin{align}
    \widehat{r}_{n,k}^\pi = \frac{\widehat{\bE}_{\mathcal{I}_n^\pi(k)}[R_k]}{\widehat{\bE}_{\mathcal{I}_n^\pi(k)}[X_k]},\quad \widehat{y}_{n,k}^\pi = \frac{\widehat{\bE}_{\mathcal{I}_n^\pi(k)}[Y_{k}]}{\widehat{\bE}_{\mathcal{I}_n^\pi(k)}[X_k]}, \quad k\in \bK.
\end{align}

\begin{definition}[Empirical Lyapunov Drift-Plus-Penalty Ratio]
Let $Q_n^\pi$ be the variable evolving under $\pi$ as in \eqref{eqn:q-length}. Then, the empirical Lyapunov drift-plus-penalty ratio at epoch $n$ is defined as follows:
\begin{equation}
    \widehat{\Psi}_n(k, Q_n^\pi) = -V\cdot \widehat{r}_{n-1,k}^\pi + Q_{n}^\pi\cdot \widehat{y}_{n-1,k}^\pi,
\end{equation}
where $V>0$ is a design parameter. Define the empirical lower confidence bound for $\widehat{\Psi}_n(k, Q_n^\pi)$ as
\begin{align}\label{eqn:ly-index}
    \widehat{\Gamma}_n(k,Q_n^{\pi}) &= \widehat{\Psi}_n(k, Q_n^\pi)
     -{\tt rad}_k(n-1,\alpha)\frac{V(1+\widehat{r}_{n-1,k}^\pi)}{\widehat{\bE}_{\mathcal{I}_{n-1}^\pi(k)}[X_k]}\notag\\
     &\quad +{\tt rad}_k(n-1,\alpha)\frac{Q_{n}^{\pi}(1+\widehat{y}_{n-1,k}^\pi)}{\widehat{\bE}_{\mathcal{I}_{n-1}^\pi(k)}[X_k]}
\end{align}
\end{definition}
With these definitions, the online Lyapunov-Drift Minimizing Algorithm $\tt LyOn$ is defined in Algorithm~\ref{alg:LyOn}.

\begin{algorithm}[H]
\caption{$\tt LyOn$ Algorithm}\label{alg:LyOn}
\begin{algorithmic}[1]
\State \textbf{Input:} $B, K, c, \alpha, V, \delta, \beta_0, \mu_{\tt min}$
\State Initialize $Q_0 = 0$, $\tt cost = 0$, $n = 1$ 
\State Select each arm $\big\lceil \beta_0 \log\big(\frac{2B}{\mu_{\tt min}}\big)\big\rceil$ times.
\State Update $n$, $\tt cost$, $\widehat{\Gamma}_n(k,Q_n)$ (eq.~\eqref{eqn:ly-index}).
 \While{$\tt cost\leq B$}
   \State $k_n = {\arg\min}_{k\in \bK}\big\{\widehat{\Gamma}_n(k,Q_n)\big\}$
  \State Select arm $I_n=k_n$. Observe $X_n, R_n, Y_n$.
  \State $ Q_{n+1} = \max\big\{0, Q_{n} + Y_{n}-(c-\delta)X_n\big\}$
  \State $\tt cost = cost $ $+$ $X_n$.
  \State Update $\widehat{\Gamma}_n(k,Q_n)$ (eq.~\eqref{eqn:ly-index}).
  \State $n=n+1$.
 \EndWhile
\end{algorithmic} 
\end{algorithm}

\begin{remark}\normalfont
Before we analyze it, we make the following observations about the {\tt LyOn} Algorithm.
\begin{enumerate}
    \item {\tt Lyon} is an extremely low-complexity, iterative algorithm, whereby in every step a simple update is performed. 
    \item The index to be minimized in \eqref{eqn:ly-index} is a high-probability lower bound for $\Psi_n(k,Q_n^\piLy)$. Thus, given the available data $\mathcal{F}_{n-1}^\piLy$, the algorithm makes an optimistic drift-minimizing arm selection in the face of uncertainty.

    \item If $I_n^\piLy = k$, then at least one of the following must be true: a) High confidence for arm $k$, large $r_k$ and small $Q_{n}^\piLy$. b) High confidence for arm $k$, large $Q_{n}^\piLy$ and small $y_{k}$. c) Low confidence for arm $k$. As such, the {\tt LyOn} Algorithm incentivizes online exploration by choosing arms with very low confidence.
    
    \item The {\tt LyOn} Algorithm extends the {\tt UCB-BwI} Algorithm proposed in \cite{cayci2019learning} to the non-trivial and useful cases with stochastic feasibility constraints. Note that $Q_{n}^\piLy = 0$ if there is no constraint, thus the {\tt LyOn} Algorithm reduces to the {\tt UCB-BwI} Algorithm.
    
\end{enumerate}
\end{remark}

\begin{theorem}[Performance Bounds for $\piLy$]\label{thm:regret}
Suppose that Assumption \ref{assn:slater} and Assumption \ref{assn:bounded-moment} hold with positive $\epsilon$, $\sigma^2$, $r_{\tt max}$, $y_{\tt max}$, and $\mu_{\tt min}$. Then, for any $V>0$ and $\delta\in[0,c)$, the regret and constraint-violation levels under $\piLy$ satisfy:
    \begin{align}
        {\tt REG}^{\piLy}(B) &= O \Bigg(\frac{r_{\tt max}\sqrt{\yz{K}B\log B}}{\mu_{\tt min}^2}+\frac{y_{\tt max}^2 K\log B}{\epsilon^2\mu_{\tt min}^2} \notag \\
        & +\frac{\sigma^2+y_{\tt max}+\delta \log B}{V\mu_{\tt min}^2}B+\frac{\delta r_{\tt max}}{\epsilon \mu_{\tt min}^2}B \Bigg),
    \end{align}
    \begin{align}
        D^{\piLy}(B) &= O\Bigg(\frac{y_{\tt max}^2K\log B}{\epsilon^2\mu_{\tt min}^2 B} + \frac{V r_{\tt max}}{B\mu_{\tt min}\epsilon}-\frac{\delta}{\mu_{\tt min}} \Bigg),
    \end{align}

\yz{Specifically, let $V = v_0\sqrt{B\log B}$  and $\delta = {\delta_0}{\sqrt{\log B /B}}$ with design parameters $\delta_0>0, v_0 > 0$. We can select $\delta_0 \in (\frac{r_{\tt max}}{\epsilon}v_0,c\sqrt{B})$ such that for sufficiently large $B$, 
\begin{equation}
    {\tt REG}^{\piLy}(B) = O\Big(\sqrt{\yz{K}B\log B}\Big), \; D^{\piLy}(B)  = O\Big(\frac{-1}{\sqrt{B}}\Big).
\end{equation}
}
\end{theorem}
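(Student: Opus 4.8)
My plan is to measure the regret and constraint-violation of $\piLy$ against the stationary randomized oracle $\pi^*$ of Definition~\ref{def:stat}, using Proposition~\ref{prop:optimality-gap} to pass from $\pi^*$ to $\piOpt$ at an additive $O(1)$ cost. The argument rests on three preliminary pillars. (a) A high-probability envelope for the random horizon, $N^{\piLy}(B)\le N_B:=2B/\mu_{\tt min}$, together with $B\le\bE[S_N^{\piLy}]\le B+1$; since each cost lies in $[0,1]$ and $\min_k\bE[X_{1,k}]\ge\mu_{\tt min}$, this follows from renewal theory (Lorden's inequality for $\bE[N]$) and an Azuma-type concentration of the martingale $S_n^{\piLy}-\sum_{i\le n}\bE[X_i^{\piLy}\mid\mathcal F_{i-1}^{\piLy}]$, as announced in Section~\ref{sec:outline}. (b) A \emph{clean event} on which, for every arm and every $n\le N_B$, the true reward and penalty rates lie below the upper-confidence versions appearing in \eqref{eqn:ly-index}; a union bound over the $K$ arms and the $O(B)$ epochs, via the empirical ratio concentration of \cite{cayci2020budget}, makes the complementary probability $O(K/B)$. (c) The initial exploration of length $\lceil\beta_0\log(2B/\mu_{\tt min})\rceil$ per arm with $\beta_0=32\alpha(1+y_{\tt max})^2/(\mu_{\tt min}^2\epsilon^2)$ forces every confidence radius, in particular the penalty radius, to be dominated by the Slater gap $\epsilon$ for all $n\le N_B$; this is exactly the event of Lemma~\ref{lemma:drift-bound}.

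Second, I would derive the per-epoch drift-plus-penalty inequality. With the quadratic Lyapunov function $\tfrac12(Q_n^{\piLy})^2$ and update \eqref{eqn:q-length}, the one-step conditional drift is bounded by $B_0+Q_n^{\piLy}\bE[Y_n^{\piLy}-(c-\delta)X_n^{\piLy}\mid\mathcal F_{n-1}^{\piLy}]$ with $B_0=O(\sigma^2+1)$, so that $\bE[\tfrac12(Q_{n+1}^{\piLy})^2-\tfrac12(Q_n^{\piLy})^2-VR_n^{\piLy}\mid\mathcal F_{n-1}^{\piLy}]\le B_0-(c-\delta)Q_n^{\piLy}\bE[X_n^{\piLy}\mid\mathcal F_{n-1}^{\piLy}]+\bE[X_n^{\piLy}\mid\mathcal F_{n-1}^{\piLy}]\,\Psi_n(I_n^{\piLy},Q_n^{\piLy})$. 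On the clean event the optimistic rule $I_n^{\piLy}=\arg\min_k\widehat\Gamma_n(k,Q_n^{\piLy})$ yields $\Psi_n(I_n^{\piLy},Q_n^{\piLy})\le\min_k\Psi_n(k,Q_n^{\piLy})+O((V+Q_n^{\piLy})\,{\tt rad}_{I_n}(n,\alpha)/\mu_{\tt min})$, and since the linear-fractional objective is minimized at a vertex (Proposition~\ref{prop:offline-determ}) I can compare with the oracle: $\min_k\Psi_n(k,Q_n^{\piLy})\le\Psi_n(\mathbf p^*,Q_n^{\piLy})=-Vr(\mathbf p^*)+Q_n^{\piLy}y(\mathbf p^*)\le -Vr(\mathbf p^*)+cQ_n^{\piLy}$ by $y(\mathbf p^*)\le c$. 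The $cQ_n^{\piLy}$ term then cancels against $-(c-\delta)Q_n^{\piLy}$ up to the benign $\delta Q_n^{\piLy}\bE[X_n^{\piLy}\mid\mathcal F_{n-1}^{\piLy}]$ drift, while the residual penalty-radius term is dominated by $\epsilon$ thanks to pillar (c).

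Third, I would bound the queue. When $Q_n^{\piLy}$ exceeds a threshold of order $V/\epsilon$, the comparison above together with Assumption~\ref{assn:slater} forces the chosen arm to have strictly negative conditional drift; the hitting-time / geometric-drift analysis of \cite{hajek1982hitting} then gives $\bE[Q_n^{\piLy}]=O(Vr_{\tt max}/(\epsilon\mu_{\tt min})+y_{\tt max}^2K\log B/(\epsilon^2\mu_{\tt min}^2))$ uniformly in $n\le N_B$, hence a bound on $\bE[Q_{N+1}^{\piLy}]$. The constraint-violation bound follows by telescoping \eqref{eqn:q-length}, $\sum_{n\le N}Y_n^{\piLy}\le Q_{N+1}^{\piLy}+(c-\delta)S_N^{\piLy}$, dividing by $B$, taking expectations, and using $S_N^{\piLy}\le B+1$, producing $D^{\piLy}(B)=O(\bE[Q_{N+1}^{\piLy}]/B-\delta+1/B)$. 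For the regret I would sum the drift-plus-penalty inequality over $n\le N^{\piLy}(B)$ by optional stopping (legitimate as $N^{\piLy}(B)$ is a stopping time), drop the nonnegative terminal Lyapunov term, and use $r(\mathbf p^*)\bE[S_N^{\piLy}]\ge r(\mathbf p^*)B={\tt OPT}(B)-O(1)$ from Proposition~\ref{prop:optimality-gap}. Dividing by $V$ turns $B_0\bE[N]$, $\delta\bE[\sum Q_n^{\piLy}X_n^{\piLy}]$, and the cumulative exploration cost into the four regret contributions; the exploration term $\bE[\sum_{n\le N}X_n^{\piLy}\,{\tt rad}_{I_n}(n,\alpha)]$ is controlled by Cauchy--Schwarz, $\sum_k\sqrt{T_k(N)}\le\sqrt{KN}$, giving the leading $O(r_{\tt max}\sqrt{KB\log B}/\mu_{\tt min}^2)$. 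Substituting $V=v_0\sqrt{B\log B}$ and $\delta=\delta_0\sqrt{\log B/B}$ with $\delta_0>r_{\tt max}v_0/\epsilon$ renders every remaining regret term $O(\sqrt{KB\log B})$ or lower order, and makes $\tfrac{Vr_{\tt max}}{B\mu_{\tt min}\epsilon}-\tfrac{\delta}{\mu_{\tt min}}$ strictly negative, yielding the zero-violation claim.

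The main obstacle I anticipate is the interaction between the \emph{random, controlled} horizon $N^{\piLy}(B)$ and the queue dynamics. Because arm selection depends simultaneously on the evolving $Q_n^{\piLy}$ and on the empirical estimates, the optional-stopping and uniform-concentration steps must be carried out over a horizon that is itself random and policy-dependent, and the contribution of the rare clean-event failure must be absorbed using the renewal envelope of pillar~(a) rather than the trivial cost-based bound $N\le B$. Most delicate is feeding pillar~(c) into the hitting-time argument: establishing strictly negative drift of the \emph{empirically} selected arm when $Q_n^{\piLy}$ is large hinges on the penalty confidence radius being provably smaller than $\epsilon$, which is precisely why the exploration length scales as $(1+y_{\tt max})^2/(\mu_{\tt min}^2\epsilon^2)$ and why $\beta_0$ is built into the algorithm.
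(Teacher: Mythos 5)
Your plan follows the paper's proof in all of its main lines: the same stationary randomized oracle benchmark with Proposition~\ref{prop:optimality-gap} closing the gap to $\piOpt$, the same quadratic drift-plus-penalty inequality (the paper's Lemma~\ref{lemma:lyapunov_drift}), the same clean-event comparison of $\Psi_n(I_n^{\piLy},Q_n)$ to $\min_k\Psi_n(k,Q_n)$ and then to $\Psi_n(\mathbf p^*,Q_n)$ with $y(\mathbf p^*)\le c$ cancelling the queue term (Lemma~\ref{lemma:bound-psi} and Proposition~\ref{prop:offline-determ}), the same Hajek-type negative-drift argument above a threshold $l^*=O(Vr_{\tt max}/\epsilon)$ enabled by the $\beta_0\log B$ forced exploration, the same Cauchy--Schwarz bound $\sum_k\sqrt{T_k(N)}\le\sqrt{KN}$ for the exploration term, the same telescoping of \eqref{eqn:q-length} for the violation, and the same parameter substitution. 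The only structural difference --- summing to the random horizon by optional stopping rather than to the deterministic envelope $n_0(B)=\lceil 2B/\mu_{\tt min}\rceil$ with renewal tail bounds --- is immaterial, since you already carry the renewal envelope as pillar~(a).

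There is, however, one concrete under-specification that the paper treats as its main technical hurdle and that your plan does not resolve. The optimism bonus in your second step is $O\big((V+Q_n^{\piLy})\,{\tt rad}_{I_n}(n,\alpha)/\mu_{\tt min}\big)$, so after summing you must control the cross term $\sum_{n\le N}\bE\big[X_n^{\piLy}\,{\tt rad}_{I_n}(n,\alpha)\,Q_n^{\piLy}\big]$, in which the confidence radius and the queue length are correlated through the entire arm-selection history. A uniform first-moment bound $\bE[Q_n^{\piLy}]=O(Vr_{\tt max}/(\epsilon\mu_{\tt min}))$, which is all your third step claims, does not let you factor this expectation; and the crude alternative of bounding ${\tt rad}$ by the constant $O(\epsilon)$ guaranteed by the forced exploration loses the $\sqrt{KN\log N}$ scaling and would only give a constant rate gap, i.e.\ linear regret. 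The paper closes exactly this hole by proving a maximal inequality $\bE[\max_{1\le n\le N}Q_n^{\piLy}]=O(\log N+l^*+1)$ (Lemma~\ref{lemma:max-Q}), obtained by upgrading the conditional negative drift under the concentration event to an exponential supermartingale bound on $e^{\eta Q_n}$ and a union over hitting times; the pathwise bound $Q_n^{\piLy}\le\max_mQ_m^{\piLy}$ then decouples the cross term from the radius sum. Your Hajek machinery is the right tool and can be pushed to this maximal (or at least second-moment) form, but as written the plan stops one lemma short of what the regret bound actually requires.
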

The proof of Theorem~\ref{thm:regret} can be found in Appendix \ref{app:online_bound_proof}. Theorem \ref{thm:regret} implies that $\piLy$ achieves $O(\sqrt{\yz{K}B\log B})$ regret and \yz{zero} constraint-violation for \yz{sufficiently large} $B$ while learning the first order statistics under a bandit feedback.

In addition to the fact that cumulative reward and penalty processes form stopped and controlled random walks, the main challenge in analyzing the $\tt LyOn$ algorithm performance is that $Q_n$ is correlated with the sample path. To address this, we prove a maximal inequality for $Q_n$ under a concentration event (Lemma~\ref{lemma:max-Q} in Appendix~\ref{app:online_bound_proof}), which can have its own value in other queuing systems.
Also note that, compared with $\tt LyOff$, the online algorithm has a very small increase on the regret bounds by a factor of $\sqrt{\yz{K}\log B}$. This is a reasonable price to pay since we are not assuming any known statistics. To the best of our knowledge, these are the best results available on both regret and constraint-violation in the current setup. In the special case of unit cost scenario, our algorithm theoretically guarantees a similar regret performance to prior designs \cite{agrawal2014bandits} while providing a stronger constraint-violation guarantee.

\begin{figure*}[htp]
\centering
\subfloat[Reward rate ($\tt REW(B)/B)$]{\includegraphics[width=2in]{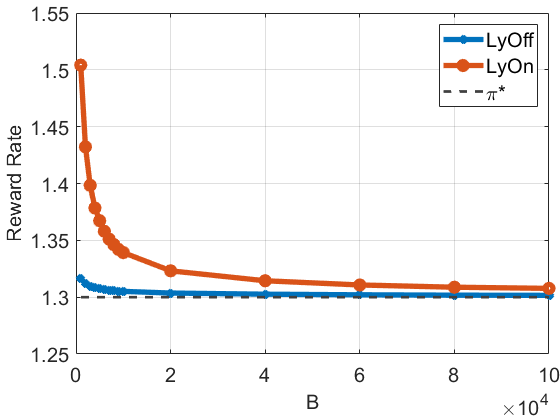}
\label{fig:Reward_Rate_0_5}}
\hfil
\subfloat[Constraint-violation]{\includegraphics[width=2in]{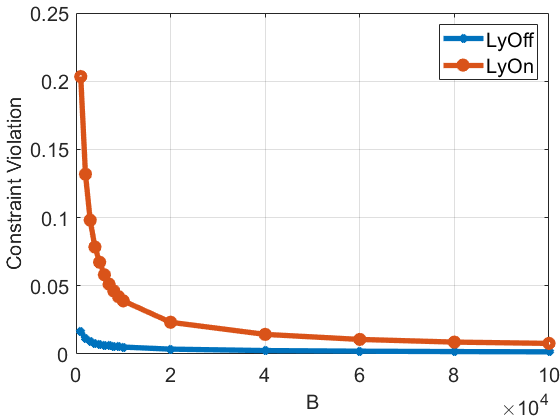}
\label{fig:Constraint_Violation_0_5}}
\hfil
\subfloat[Time allocation]{\includegraphics[width=2in]{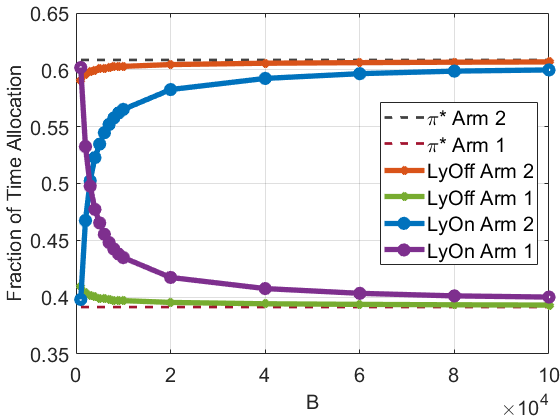}
\label{fig:Time_Allocation_0_5}}

\subfloat[Reward rate ($\tt REW(B)/B)$]{\includegraphics[width=2in]{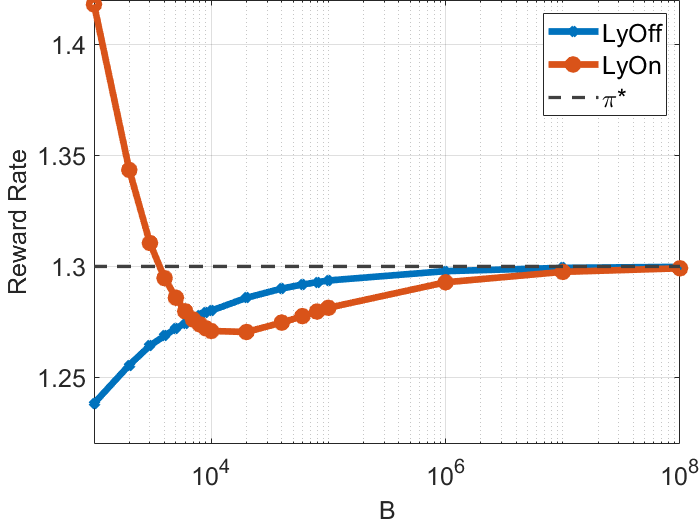}
\label{fig:Reward_Rate_3}}
\hfil
\subfloat[Constraint-violation]{\includegraphics[width=2in]{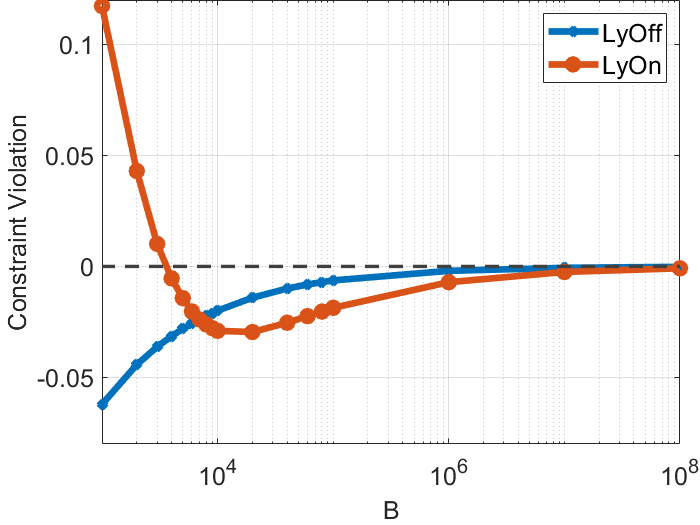}
\label{fig:Constraint_Violation_3}}
\hfil
\subfloat[Time allocation]{\includegraphics[width=2in]{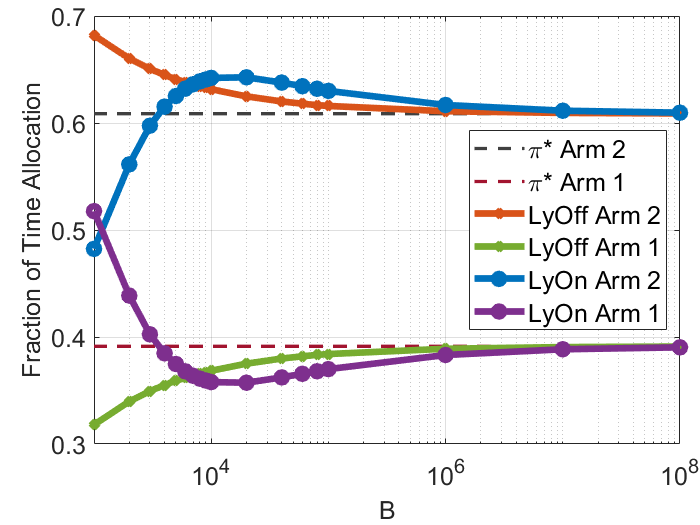}
\label{fig:Time_Allocation_3}}

\caption{Performance of $\tt LyOff$ and $\tt LyOn$ for different parameters. (a), (b), and (c) use $v_0=1$, $\delta_0 = 0.5$, (d), (e), and (f) use $v_0=1$, $\delta_0=15$.}
\label{fig:simulation_main}
\end{figure*}

\section{Simulations}\label{sec:simulations}

We implement both $\tt LyOff$ and $\tt LyOn$ algorithms for $K=2$ arms with Bernoulli distributed rewards, costs, and penalties. Assuming $c = 0.8$, arm 1 is selected to have a high reward rate and
a high penalty rate with $\bE[X_1] = 0.4, \bE[Y_1] = 0.6$, and $\bE[R_1] = 0.8$. Arm 2 is selected to have a low reward rate and a low penalty rate with $\bE[X_2] = 0.6, \bE[Y_2] = 0.3$, and $\bE[R_2] = 0.6$. These values are interesting in that, an optimal controller will have to make a trade-off between the two arms, whereas any static policy selecting one of the arms will result in either linear regret or linear constraint-violation.

Figure~\ref{fig:Reward_Rate_0_5}, \ref{fig:Constraint_Violation_0_5}, and \ref{fig:Time_Allocation_0_5} show the simulation results (averaged over $10^4$ runs) with $v_0=1$  and $\delta_0=0.5$ for $\tt LyOff$ and $\tt LyOn$ algorithms. 
To observe the reward rate behavior, in Figure~\ref{fig:Reward_Rate_0_5}, we plot the reward rates ${\tt REW}^{\pi}(B)/B$ of $\piOff$ and $\piLy$ and the optimal randomized policy $\pi^*$, with varying budgets $B$. 
This figure shows that both the offline and the online designs reach the rate of the optimal design, as predicted by our analysis. Also, Figure~\ref{fig:Constraint_Violation_0_5} verifies the fast decaying of constraint-violation with rate $\Tilde{O}(1/B)$ as $B$ increases, which confirms the scaling behaviour revealed in our analyses. Figure~\ref{fig:Time_Allocation_0_5} further confirms the convergence of $\tt LyOff$ and $\tt LyOn$ towards $\pi^*$ by showing the proportion of time allocated to each arm. \yz{As predicted by Theorem~\ref{thm:regret}, Figure~\ref{fig:Reward_Rate_3}, \ref{fig:Constraint_Violation_3}, and \ref{fig:Time_Allocation_3} show that we can indeed select specific $v_0$ and $\delta_0$ values such that the constraint-violation becomes negative when $B$ is sufficiently large. At the same time, the reward rate and proportion of time allocated to each arm still converge to the rate of the optimal design.}

In Appendix~\ref{app:simulations}, to check the performance of our algorithms for larger $K$, we increase the number of arms by adding arms with the principle that high reward rate arm also has high penalty rate (otherwise the arms are not competitive). We also investigate the effect of design choices $V$ and $\delta$ to capture the tradeoff between constraint-violation and regret under the $\tt LyOff$ and $\tt LyOn$ algorithms.

\section{Conclusion}
In this paper, we proposed a broadly applicable computationally efficient methodology based on Lyapunov-drift-minimization for solving a penalty-constrained reward maximization problem with a limited budget, random costs, and bandit feedback. Both offline and online algorithms are developed based on this design methodology, which are also proven to have sharp regret and
constraint-violation performance. The approach and algorithms are applicable in diverse domains whereby knapsack budget constraints and stochastic feasibility constraints are required. 
An interesting future work that can benefit from the same methodology would be to extend our setting to the scenario of \yz{multiple constraints} and infinitely many arms.


\section*{Acknowledgments}
This work is supported in part by the NSF grants: CNS-NeTS-1717045, CNS-SpecEES-1824337, CNS-NeTS-2007231, CNS-NeTS-2106679, IIS-2112471, CCF-1934986; and the ONR Grant N00014-19-1-2621.

\bibliography{aaai22}

\begin{thebibliography}{33}
\providecommand{\natexlab}[1]{#1}

\bibitem[{Agrawal and Devanur(2014)}]{agrawal2014bandits}
Agrawal, S.; and Devanur, N.~R. 2014.
\newblock Bandits with concave rewards and convex knapsacks.
\newblock In \emph{Proceedings of the fifteenth ACM conference on Economics and
  computation}, 989--1006. ACM.

\bibitem[{Asmussen(2008)}]{asmussen2008applied}
Asmussen, S. 2008.
\newblock \emph{Applied probability and queues}, volume~51.
\newblock Springer Science \& Business Media.

\bibitem[{Badanidiyuru, Kleinberg, and
  Slivkins(2013)}]{badanidiyuru2013bandits}
Badanidiyuru, A.; Kleinberg, R.; and Slivkins, A. 2013.
\newblock Bandits with knapsacks.
\newblock In \emph{2013 IEEE 54th Annual Symposium on Foundations of Computer
  Science}, 207--216. IEEE.

\bibitem[{Balseiro and Gur(2019)}]{balseiro2019learning}
Balseiro, S.~R.; and Gur, Y. 2019.
\newblock Learning in repeated auctions with budgets: Regret minimization and
  equilibrium.
\newblock \emph{Management Science}, 65(9): 3952--3968.

\bibitem[{Berry and Fristedt(1985)}]{berry1985bandit}
Berry, D.~A.; and Fristedt, B. 1985.
\newblock Bandit problems: sequential allocation of experiments (Monographs on
  statistics and applied probability).
\newblock \emph{London: Chapman and Hall}, 5: 71--87.

\bibitem[{Bitran and Magnanti(1976)}]{bitran1976duality}
Bitran, G.~R.; and Magnanti, T.~L. 1976.
\newblock Duality and sensitivity analysis for fractional programs.
\newblock \emph{Operations Research}, 24(4): 675--699.

\bibitem[{Bonnans and Shapiro(2000)}]{bonnans2000stability}
Bonnans, J.~F.; and Shapiro, A. 2000.
\newblock Stability and Sensitivity Analysis.
\newblock In \emph{Perturbation Analysis of Optimization Problems}, 260--400.
  Springer.

\bibitem[{Bubeck, Cesa-Bianchi et~al.(2012)}]{bubeck2012regret}
Bubeck, S.; Cesa-Bianchi, N.; et~al. 2012.
\newblock Regret analysis of stochastic and nonstochastic multi-armed bandit
  problems.
\newblock \emph{Foundations and Trends{\textregistered} in Machine Learning},
  5(1): 1--122.

\bibitem[{Cayci, Eryilmaz, and Srikant(2019)}]{cayci2019learning}
Cayci, S.; Eryilmaz, A.; and Srikant, R. 2019.
\newblock Learning to Control Renewal Processes with Bandit Feedback.
\newblock \emph{Proceedings of the ACM on Measurement and Analysis of Computing
  Systems}, 3(2): 43.

\bibitem[{Cayci, Eryilmaz, and Srikant(2020)}]{cayci2020budget}
Cayci, S.; Eryilmaz, A.; and Srikant, R. 2020.
\newblock Budget-constrained bandits over general cost and reward
  distributions.
\newblock In \emph{International Conference on Artificial Intelligence and
  Statistics}, 4388--4398. PMLR.

\bibitem[{Cayci, Gupta, and Eryilmaz(2020)}]{cayci2020group}
Cayci, S.; Gupta, S.; and Eryilmaz, A. 2020.
\newblock Group-Fair Online Allocation in Continuous Time.
\newblock \emph{Advances in Neural Information Processing Systems}, 33.

\bibitem[{Flajolet and Jaillet(2015)}]{flajolet2015logarithmic}
Flajolet, A.; and Jaillet, P. 2015.
\newblock Logarithmic regret bounds for bandits with knapsacks.
\newblock \emph{arXiv preprint arXiv:1510.01800}.

\bibitem[{Georgiadis, Neely, and Tassiulas(2006)}]{georgiadis2006resource}
Georgiadis, L.; Neely, M.~J.; and Tassiulas, L. 2006.
\newblock \emph{Resource allocation and cross-layer control in wireless
  networks}.
\newblock Now Publishers Inc.

\bibitem[{Ghosh et~al.(2009)Ghosh, McAfee, Papineni, and
  Vassilvitskii}]{ghosh2009bidding}
Ghosh, A.; McAfee, P.; Papineni, K.; and Vassilvitskii, S. 2009.
\newblock Bidding for representative allocations for display advertising.
\newblock In \emph{International workshop on internet and network economics},
  208--219. Springer.

\bibitem[{Hajek(1982)}]{hajek1982hitting}
Hajek, B. 1982.
\newblock Hitting-time and occupation-time bounds implied by drift analysis
  with applications.
\newblock \emph{Advances in Applied probability}, 502--525.

\bibitem[{Hann{\'a}k et~al.(2017)Hann{\'a}k, Wagner, Garcia, Mislove,
  Strohmaier, and Wilson}]{hannak2017bias}
Hann{\'a}k, A.; Wagner, C.; Garcia, D.; Mislove, A.; Strohmaier, M.; and
  Wilson, C. 2017.
\newblock Bias in online freelance marketplaces: Evidence from taskrabbit and
  fiverr.
\newblock In \emph{Proceedings of the 2017 ACM conference on computer supported
  cooperative work and social computing}, 1914--1933.

\bibitem[{Harchol-Balter(2000)}]{harchol2000task}
Harchol-Balter, M. 2000.
\newblock Task assignment with unknown duration.
\newblock In \emph{Proceedings 20th IEEE International Conference on
  Distributed Computing Systems}, 214--224. IEEE.

\bibitem[{Ipeirotis(2010)}]{ipeirotis2010analyzing}
Ipeirotis, P.~G. 2010.
\newblock Analyzing the amazon mechanical turk marketplace.
\newblock \emph{XRDS: Crossroads, The ACM Magazine for Students}, 17(2):
  16--21.

\bibitem[{Khalek, Caramanis, and Heath(2014)}]{khalek2014delay}
Khalek, A.~A.; Caramanis, C.; and Heath, R.~W. 2014.
\newblock Delay-constrained video transmission: Quality-driven resource
  allocation and scheduling.
\newblock \emph{IEEE Journal of Selected Topics in Signal Processing}, 9(1):
  60--75.

\bibitem[{Lai and Robbins(1985)}]{lai1985asymptotically}
Lai, T.~L.; and Robbins, H. 1985.
\newblock Asymptotically efficient adaptive allocation rules.
\newblock \emph{Advances in applied mathematics}, 6(1): 4--22.

\bibitem[{Liu et~al.(2020)Liu, Li, Shi, and Ying}]{liu2020pond}
Liu, X.; Li, B.; Shi, P.; and Ying, L. 2020.
\newblock POND: Pessimistic-Optimistic oNline Dispatch.
\newblock \emph{arXiv preprint arXiv:2010.09995}.

\bibitem[{Liu et~al.(2021)Liu, Li, Shi, and Ying}]{liu2021efficient}
Liu, X.; Li, B.; Shi, P.; and Ying, L. 2021.
\newblock An Efficient Pessimistic-Optimistic Algorithm for Stochastic Linear
  Bandits with General Constraints.
\newblock \emph{arXiv preprint arXiv:2102.05295}.

\bibitem[{Neely(2010)}]{neely2010stochastic}
Neely, M.~J. 2010.
\newblock Stochastic network optimization with application to communication and
  queueing systems.
\newblock \emph{Synthesis Lectures on Communication Networks}, 3(1): 1--211.

\bibitem[{Neely(2012)}]{neely2010dynamic}
Neely, M.~J. 2012.
\newblock Dynamic optimization and learning for renewal systems.
\newblock \emph{IEEE Transactions on Automatic Control}, 58(1): 32--46.

\bibitem[{Papadimitriou and Tsitsiklis(1999)}]{papadimitriou1999complexity}
Papadimitriou, C.~H.; and Tsitsiklis, J.~N. 1999.
\newblock The complexity of optimal queuing network control.
\newblock \emph{Mathematics of Operations Research}, 24(2): 293--305.

\bibitem[{Qiu et~al.(2020)Qiu, Wei, Yang, Ye, and Wang}]{qiu2020upper}
Qiu, S.; Wei, X.; Yang, Z.; Ye, J.; and Wang, Z. 2020.
\newblock Upper confidence primal-dual optimization: Stochastically constrained
  Markov decision processes with adversarial losses and unknown transitions.
\newblock \emph{arXiv preprint arXiv:2003.00660}.

\bibitem[{Rappaport et~al.(2015)Rappaport, Heath~Jr, Daniels, and
  Murdock}]{rappaport2015millimeter}
Rappaport, T.~S.; Heath~Jr, R.~W.; Daniels, R.~C.; and Murdock, J.~N. 2015.
\newblock \emph{Millimeter wave wireless communications}.
\newblock Pearson Education.

\bibitem[{Robbins(1952)}]{robbins1952some}
Robbins, H. 1952.
\newblock Some aspects of the sequential design of experiments.
\newblock \emph{Bulletin of the American Mathematical Society}, 58(5):
  527--535.

\bibitem[{Tran-Thanh et~al.(2012)Tran-Thanh, Chapman, Rogers, and
  Jennings}]{tran2012knapsack}
Tran-Thanh, L.; Chapman, A.; Rogers, A.; and Jennings, N.~R. 2012.
\newblock Knapsack based optimal policies for budget--limited multi--armed
  bandits.
\newblock In \emph{Twenty-Sixth AAAI Conference on Artificial Intelligence}.

\bibitem[{Wainwright(2019)}]{wainwright2019high}
Wainwright, M.~J. 2019.
\newblock \emph{High-dimensional statistics: A non-asymptotic viewpoint},
  volume~48.
\newblock Cambridge University Press.

\bibitem[{Xia et~al.(2016)Xia, Ding, Zhang, Yu, and Qin}]{xia2016budgeted}
Xia, Y.; Ding, W.; Zhang, X.-D.; Yu, N.; and Qin, T. 2016.
\newblock Budgeted bandit problems with continuous random costs.
\newblock In \emph{Asian conference on machine learning}, 317--332.

\bibitem[{Xia et~al.(2015)Xia, Li, Qin, Yu, and Liu}]{xia2015thompson}
Xia, Y.; Li, H.; Qin, T.; Yu, N.; and Liu, T.-Y. 2015.
\newblock Thompson sampling for budgeted multi-armed bandits.
\newblock In \emph{Twenty-Fourth International Joint Conference on Artificial
  Intelligence}.

\bibitem[{Yu and Neely(2020)}]{yu2016low}
Yu, H.; and Neely, M.~J. 2020.
\newblock A Low Complexity Algorithm with $O(\sqrt {T})$ Regret and $O(1)$
  Constraint Violations for Online Convex Optimization with Long Term
  Constraints.
\newblock \emph{Journal of Machine Learning Research}, 21(1): 1--24.

\end{thebibliography}

\newpage
\appendix
\onecolumn
\section{Applications}\label{app:applications}

\paragraph{$\bullet$ Contractual Hiring }

Our problem formulation also finds interesting applications in fair contractual hiring \cite{hannak2017bias,ipeirotis2010analyzing} and server allocation \cite{harchol2000task}. Consider a task allocation problem, where tasks are sequentially allocated to a worker/server who belongs to one of $K$ groups. If $n^{th}$ task is allocated to a worker from group $k$, it takes $X_{n,k}$ time to complete the task, and yields a reward $R_{n,k}$ upon completion. For reward maximization objective, the tasks are allocated to the worker group that maximizes the expected total reward, and the other groups do not receive any tasks (see \cite{cayci2020group}). However, in many systems, the "fair" allocation of tasks are desired. One way to impose fairness is to impose constraints on the total time budget $B$ allocated to user groups. Let $A\subset \mathbb{K}$ be a class of user groups. By letting $Y_{n,k} =\omega\cdot \I\{k\notin A\}$ for all $k$ and some $\omega \geq 0$, the solution to the optimization problem \eqref{eqn:optimization-problem} yields a fair allocation rule where the fraction of the budget allocated to the worker groups in $A$ should be at least proportional to $c$.

\paragraph{$\bullet$ Online Advertising}
The same idea can be applied to online advertising. Consider an advertiser with a budget $B$ and $K$ advertising spots (\textit{impressions}) to select from. In order to win an impression, the advertiser has to bid in an auction \cite{ghosh2009bidding, balseiro2019learning}, which means the budget consumption $X_{n,k}$ is random. Winning an impression will generate a random return $R_{n,k}$ which measures the value of that impression. The goal of an advertiser is to maximize the expected total return under the budget constraint. In addition, the advertiser may want to set a constraint on the number of bids for certain type of impressions to balance between different demographic groups and locations, which can be represented by $Y_{n,k}$ defined above.

\section{Preliminary Results}\label{app:preliminary_results}
\subsection{Results from Renewal Theory}\label{app:renewal_theory}
In this section, we will prove results on the counting process $N^\pi(B)$ based on renewal theory, which will be used throughout the proofs in the following sections. The proofs are based on \cite{cayci2020budget}.

\begin{proposition}[High-probability upper bound for $N^\pi(B)$]
Consider an i.i.d. process $X_{n,k}\in[0,1]$ a.s. for each $k\in\bK$, and define $n_0(B)=\lceil 2B/\mu_{\tt min}\rceil$ for $\mu_{\tt min}=\min_{k\in\bK}~\bE[X_{n,k}]$. Then, under any policy $\pi\in\Pi$, we have the following inequality for all $B>1$:
\yz{
\begin{equation}
    \bP(N^\pi(B) \geq n) = \bP\Big(\sum_{t=1}^nX_t^\pi \leq B\Big) \leq e^{-n\mu_{\tt min}^2/8},
    \label{eqn:renewal-concentration}
\end{equation}
}
\noindent for any $n \geq n_0(B)$.
\label{prop:renewal-concentration}
\end{proposition}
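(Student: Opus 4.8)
The plan is to reduce the stopping-time tail to a lower-tail (large-deviation) estimate for the cumulative-cost walk, and then to control that walk despite its adaptivity through a martingale decomposition and the Azuma--Hoeffding inequality. The stated identity $\bP(N^\pi(B)\ge n)=\bP(\sum_{t=1}^n X_t^\pi\le B)$ follows directly from the definition of the first-passage time $N^\pi(B)=\inf\{n:S_n^\pi>B\}$ together with the monotonicity of the nonnegative-increment walk $S_n^\pi=\sum_{t=1}^n X_t^\pi$: the budget has not been exhausted by epoch $n$ precisely when $S_n^\pi\le B$. Hence it suffices to prove $\bP(S_n^\pi\le B)\le e^{-n\mu_{\tt min}^2/8}$ for every $n\ge n_0(B)$. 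The essential difficulty is that $S_n^\pi$ is a \emph{controlled} random walk: the increment $X_t^\pi=X_{t,I_t^\pi}$ depends on the past through the policy's arm choice $I_t^\pi$, so the summands are neither independent nor identically distributed, and a direct i.i.d. Chernoff/Hoeffding bound is unavailable.

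To get around this, I would exploit causality. Since $\pi\in\Pi$ is causal, $\{I_t^\pi=k\}\in\mathcal{F}_{t-1}^\pi$, while the fresh cost draw $X_{t,k}$ at epoch $t$ is independent of $\mathcal{F}_{t-1}^\pi$; therefore $\bE[X_t^\pi\mid\mathcal{F}_{t-1}^\pi]=\bE[X_{t,I_t^\pi}\mid\mathcal{F}_{t-1}^\pi]\ge\mu_{\tt min}$, using $\bE[X_{1,k}]\ge\mu_{\tt min}$ for every arm. Define the centered process $M_n=\sum_{t=1}^n\big(X_t^\pi-\bE[X_t^\pi\mid\mathcal{F}_{t-1}^\pi]\big)$, which is a martingale with $M_0=0$ and increments bounded in absolute value by $1$, because $X_t^\pi\in[0,1]$ and its conditional mean also lies in $[0,1]$. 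By construction $S_n^\pi=M_n+\sum_{t=1}^n\bE[X_t^\pi\mid\mathcal{F}_{t-1}^\pi]\ge M_n+n\mu_{\tt min}$.

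The final step is a one-sided concentration. On the event $\{S_n^\pi\le B\}$ the inequality above forces $-M_n\ge n\mu_{\tt min}-B$, and the choice $n\ge n_0(B)=\lceil 2B/\mu_{\tt min}\rceil$ guarantees $n\mu_{\tt min}\ge 2B$, so that $n\mu_{\tt min}-B\ge n\mu_{\tt min}/2>0$. Thus $\{S_n^\pi\le B\}\subseteq\{-M_n\ge n\mu_{\tt min}/2\}$. Applying the Azuma--Hoeffding inequality to the martingale $-M_n$, whose increments are bounded by $1$ so that the sum of squared bounds equals $n$, yields
\begin{equation}
\bP\big(-M_n\ge n\mu_{\tt min}/2\big)\le\exp\!\Big(-\tfrac{(n\mu_{\tt min}/2)^2}{2n}\Big)=\exp\!\Big(-\tfrac{n\mu_{\tt min}^2}{8}\Big),
\end{equation}
which is exactly the claimed bound after chaining the inclusions. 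The main obstacle throughout is the adaptivity of the cost sequence; the martingale decomposition together with the uniform lower bound $\mu_{\tt min}$ on the conditional means is precisely the device that converts the controlled walk into a clean large-deviation estimate, and the threshold $n_0(B)$ is what forces the deviation target $n\mu_{\tt min}-B$ to exceed half the drift, producing the clean exponent rather than the messier $(n\mu_{\tt min}-B)^2/(2n)$.
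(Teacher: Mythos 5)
Your proof is correct and follows essentially the same route as the paper's: the same renewal identity, the same martingale decomposition $D_t = X_t^\pi - \bE[X_t^\pi\mid\mathcal{F}_{t-1}^\pi]$ with the uniform lower bound $\mu_{\tt min}$ on conditional means, the same inclusion $\{S_n^\pi\le B\}\subseteq\{\sum_t D_t\le -n\mu_{\tt min}/2\}$ for $n\ge n_0(B)$, and the same Azuma--Hoeffding bound (the paper cites Corollary 2.20 of Wainwright) yielding the exponent $n\mu_{\tt min}^2/8$. No gaps.
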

\begin{proof}
The equality in \eqref{eqn:renewal-concentration} is due to the renewal relation $\{N^\pi(B)\geq n\} = \Big\{\sum_{t=1}^n X_t^\pi \leq B\Big\}.$ In order to prove the inequality, first note that $\bE[X_n^\pi|\mathcal{F}_{n-1}^\pi] \geq \mu_{\tt min}$ for any $n\geq 1$, and $D_n = X_n^\pi - \bE[X_n^\pi|\mathcal{F}_{n-1}^\pi]$ is a martingale difference sequence with $|D_n| \leq 1$ almost surely. Then, we have the following relation: 
\begin{align*}\{S_n^\pi \leq B\} &= \Big\{\sum_{t=1}^nD_t \leq B-\sum_{t=1}^n\bE[X_{t}^\pi|\mathcal{F}_{t-1}^\pi]\Big\} \subset \{\sum_{t=1}^nD_t \leq B-n\mu_{\tt min}\}.
\end{align*}
For any $n \geq n_0(B)$, we have $B-n\mu_{\tt min} \leq -n\mu_{\tt min}/2$. \yz{Therefore, we have:
\begin{align*}
    \bP(S_n^\pi \leq B) &\leq \bP\Big(\sum_{t=1}^nD_t \leq -\frac{n\mu_{\tt min}}{2}\Big),\\
    &\leq e^{-n\mu_{\tt min}^2/8},
\end{align*}}
where the second inequality follows from concentration bounds for martingale difference sequence \yz{(Corollary 2.20 in \cite{wainwright2019high}) with $|D_n|\leq 1$ and deviation $t = \mu_{\tt min}/2$} .
\end{proof}

In the following, we characterize the performance of a stationary randomized policy $\pi = \pi(\mathbf{p})$ by proving tight bounds on the expected cumulative reward and penalty under $\pi(\mathbf{p})$.

\begin{proposition}[Performance of a Stationary Randomized Policy]\label{prop:srp}
    For any $\mathbf{p}\in\Delta_K$, let $\pi(\mathbf{p})$ be a stationary randomized policy, and let $r(\mathbf{p})=\frac{\sum_{k\in\bK}p_k\bE[R_{1,k}]}{\sum_{k\in\bK}p_k\bE[X_{1,k}]}$ and $y(\mathbf{p})=\frac{\sum_{k\in\bK}p_k\bE[Y_{1,k}]}{\sum_{k\in\bK}p_k\bE[X_{1,k}]}$. Then, we have the following inequalities:
    \begin{equation}
       r(\mathbf{p}) B \leq \bE[{\tt REW}^{\pi(\mathbf{p})}(B)] \leq r(\mathbf{p})\Big(B+\frac{1}{\mu_{\tt min}^2}\Big).
    \end{equation}
    
    \begin{equation}
       y(\mathbf{p}) B \leq  \bE\Bigg[\sum_{n=1}^{N^{\pi(\mathbf{p})}(B)}Y_{n}^{\pi(\mathbf{p})}\Bigg]\leq y(\mathbf{p})\Big(B+\frac{1}{\mu_{\tt min}^2}\Big).
    \end{equation}
\end{proposition}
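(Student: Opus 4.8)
The plan is to reduce all three cumulative quantities (reward, penalty, and cost) to the single scalar $\bE[N^{\pi(\mathbf{p})}(B)]$ by Wald's identity, and then to sandwich this expectation using the overshoot of the stopped cost walk. First I would record that under the stationary randomized policy $\pi(\mathbf{p})$ the epoch-wise triples $(X_n^{\pi},R_n^{\pi},Y_n^{\pi})$ are i.i.d.\ across $n$: at each epoch the arm is drawn independently from $\mathbf{p}$, and the per-arm outcomes are i.i.d.\ and independent across arms. Writing $\bar{\mu}=\sum_{k\in\bK}p_k\bE[X_{1,k}]$, $\bar{r}=\sum_{k\in\bK}p_k\bE[R_{1,k}]$, and $\bar{y}=\sum_{k\in\bK}p_k\bE[Y_{1,k}]$ for the common per-pull means, we have by definition $r(\mathbf{p})=\bar{r}/\bar{\mu}$ and $y(\mathbf{p})=\bar{y}/\bar{\mu}$, so once $\bE[N^{\pi(\mathbf{p})}(B)]$ is controlled, every bound in the statement follows by scaling.

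The identity to establish is $\bE\big[\sum_{n=1}^{N^{\pi(\mathbf{p})}(B)}R_n^{\pi}\big]=\bar{r}\,\bE[N^{\pi(\mathbf{p})}(B)]$, together with its $X$- and $Y$-analogues. With $N=N^{\pi(\mathbf{p})}(B)$, the stopping event satisfies $\{N\geq n\}=\{S_{n-1}^{\pi}\leq B\}\in\mathcal{F}_{n-1}^{\pi}$, so $\bI\{N\geq n\}$ is independent of the fresh epoch-$n$ triple. Rewriting the stopped sum as $\sum_{n\geq 1}R_n^{\pi}\,\bI\{N\geq n\}$ and interchanging sum and expectation (legitimate by nonnegativity, i.e.\ Tonelli), the expectation factors as $\sum_{n\geq 1}\bE[R_n^{\pi}]\,\bP(N\geq n)=\bar{r}\sum_{n\geq 1}\bP(N\geq n)=\bar{r}\,\bE[N]$. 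Finiteness of $\bE[N]$, which makes this meaningful, is guaranteed by the geometric tail $\bP(N\geq n)\leq e^{-n\mu_{\tt min}^2/8}$ from Proposition~\ref{prop:renewal-concentration}. The same computation applied to the cost gives Wald's identity $\bE[S_N^{\pi}]=\bar{\mu}\,\bE[N]$, and applied to the penalty gives the $\bar{y}$-analogue.

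It then remains to bound $\bE[N]$ on both sides via the cost overshoot. For the lower bound, the budget is exceeded at the stopping epoch, so $S_N^{\pi}>B$ surely and hence $\bar{\mu}\,\bE[N]=\bE[S_N^{\pi}]\geq B$, giving $\bE[N]\geq B/\bar{\mu}$. For the upper bound I would use $S_N^{\pi}=S_{N-1}^{\pi}+X_N^{\pi}\leq B+X_N^{\pi}\leq B+1$, since $S_{N-1}^{\pi}\leq B$ by definition of $N$ and $X_N^{\pi}\in[0,1]$, whence $\bE[N]\leq (B+1)/\bar{\mu}$. Multiplying the two-sided bound on $\bE[N]$ by $\bar{r}$ and using $r(\mathbf{p})=\bar{r}/\bar{\mu}$ yields $r(\mathbf{p})B\leq\bE[{\tt REW}^{\pi(\mathbf{p})}(B)]\leq r(\mathbf{p})(B+1)$; the stated form follows from $1\leq 1/\mu_{\tt min}^2$ (as $\mu_{\tt min}\leq 1$). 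The penalty inequalities are identical with $\bar{y}$ and $y(\mathbf{p})$ replacing $\bar{r}$ and $r(\mathbf{p})$.

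The main obstacle is the rigorous justification of the Wald factorization despite the correlation between the cost walk (which drives $N$) and the reward or penalty collected at the same epoch; the resolution is the observation that $\{N\geq n\}$ depends only on data strictly before epoch $n$, so the epoch-$n$ outcome is independent of the stopping event and the expectation splits. The constant $1/\mu_{\tt min}^2$ rather than $1$ is the only concession to generality: for the $[0,1]$ case the crude overshoot bound $X_N^{\pi}\leq 1$ already suffices, while for general sub-Gaussian costs one would instead invoke Lorden's inequality to control the expected overshoot, which is precisely where an $O(1/\mu_{\tt min}^2)$ term enters.
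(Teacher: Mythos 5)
Your proof is correct. It reaches the same renewal-theoretic conclusion as the paper, but by a more self-contained route: the paper's proof argues via regenerative cycles and then simply cites the elementary renewal theorem and Lorden's inequality (Propositions 6.1 and 6.2 in Asmussen) to get the two-sided bound on the renewal function, whereas you derive everything from scratch via Wald's identity --- factoring $\bE\big[\sum_{n=1}^{N}R_n^{\pi}\big]=\bar{r}\,\bE[N]$ using $\{N\geq n\}\in\mathcal{F}_{n-1}^{\pi}$ and independence of the fresh epoch-$n$ triple --- and then sandwiching $\bE[N]$ with $B\leq \bE[S_N^{\pi}]\leq B+1$ from the almost-sure overshoot bound $X_N^{\pi}\leq 1$. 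Your justification of the Wald factorization (Tonelli plus finiteness of $\bE[N]$ from the geometric tail in Proposition~\ref{prop:renewal-concentration}) is exactly the point the paper leaves implicit, and your final bound $r(\mathbf{p})(B+1)$ is in fact slightly tighter than the stated $r(\mathbf{p})(B+1/\mu_{\tt min}^2)$, which you correctly recover via $\mu_{\tt min}\leq 1$. What the paper's citation-based approach buys is generality: Lorden's inequality controls the expected overshoot by $\bE[(X^{\pi})^2]/\bE[X^{\pi}]$ without requiring bounded costs, which is the right tool for the sub-Gaussian extension the authors mention --- a trade-off you explicitly acknowledge in your closing remark.
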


\begin{proof}
    The proof is based on Proposition 2 in \cite{cayci2020group}. Under $\pi(\mathbf{p})$, it can be shown that arm $k$ is pulled exactly once in a regenerative cycle with total expected cost $\big(\sum_jp_j\bE[X_{1,j}]\big)/p_k$. Therefore, the reward per unit cost from arm $k$ under $\pi(\mathbf{p})$ is  $r_k(\mathbf{p})=\frac{p_k\bE[R_{1,k}]}{\sum_jp_j\bE[X_{1,j}]}$, which implies the reward per unit cost from all arms is $r(\mathbf{p})=\sum_kr_k(\mathbf{p})$. Similarly, the penalty per unit cost from arm $k$ under $\pi(\mathbf{p})$ is $y_k(\mathbf{p})=\frac{p_k\bE[Y_{1,k}]}{\sum_jp_j\bE[X_{1,j}]}$, which implies the penalty per unit cost from all arms is $y(\mathbf{p})=\sum_ky_k(\mathbf{p})$. Hence, the upper and lower bounds follow from Proposition 6.1 and Proposition 6.2 (Lorden's inequality) in section V of \cite{asmussen2008applied}, respectively.
\end{proof}

\subsection{Decomposition of Regret and Constraint-violation}\label{app:regret_decomp}
\begin{lemma}[Regret Decomposition] \label{lemma:regret-decomp}
For any causal policy $\pi\in\Pi$, the regret with respect to the optimal stationary policy $\pi^*$ can be decomposed as:
\begin{align*}
     \bE[{\tt REW}^{\pi^*}(B)]-\bE[{\tt REW}^{\pi}(B)] &\leq \Big(r(\mathbf{p}^*)- \frac{\bE[\sum_{n=1}^{n_0(B)}R_n^\pi]}{\bE[\sum_{n=1}^{n_0(B)} X_n^\pi]}\Big)\cdot \frac{2B}{\mu_{\tt min}}  +\frac{r_{\tt max}}{\mu_{\tt min}^2}\Big(1+8e^{-B\mu_{\tt min}/4}\Big)
\end{align*}
where $\mathbf{p^*} = (p_1^*, \ldots, p_K^*)$ is the distribution of budget under $\pi^*$ and $n_0(B)=\lceil 2B/\mu_{\tt min}\rceil$ is a high probability upper bounds for $N^\pi(B)$.
\end{lemma}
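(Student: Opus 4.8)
The plan is to bound the two terms of the regret separately and then reassemble them against the common horizon $\tfrac{2B}{\mu_{\tt min}}$. For the benchmark, $\pi^*=\pi(\mathbf{p}^*)$ is stationary randomized, so Proposition~\ref{prop:srp} applies verbatim and gives $\bE[{\tt REW}^{\pi^*}(B)]\le r(\mathbf{p}^*)\big(B+1/\mu_{\tt min}^2\big)$; since $r(\mathbf{p}^*)\le r_{\tt max}$ by Assumption~\ref{assn:bounded-moment}, the $O(1)$ remainder here is at most $r_{\tt max}/\mu_{\tt min}^2$, which will become the ``$1$'' inside the final bracket.

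The substantive step is a matching lower bound on $\bE[{\tt REW}^{\pi}(B)]=\bE\big[\sum_{n=1}^{N^\pi(B)}R_n^\pi\big]$ expressed through the empirical rate $\widehat r_\pi:=\bE[\sum_{n=1}^{n_0}R_n^\pi]/\bE[\sum_{n=1}^{n_0}X_n^\pi]$, with $n_0=n_0(B)$. Using $R_n^\pi\ge0$ I would first truncate the stopping time at the deterministic horizon, $\sum_{n=1}^{N^\pi(B)}R_n^\pi\ge\sum_{n=1}^{N^\pi(B)\wedge n_0}R_n^\pi$, which lets me relate the stopped reward to $\widehat r_\pi$ times the accumulated cost $S^\pi_{N^\pi(B)\wedge n_0}$. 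On the event $\{N^\pi(B)\le n_0\}$ the accumulated cost is exactly $S_{N^\pi(B)}^\pi>B$, so it remains only to discard the complementary event. This is where Proposition~\ref{prop:renewal-concentration} is indispensable: $\bP(N^\pi(B)\ge n)\le e^{-n\mu_{\tt min}^2/8}$ for $n\ge n_0$, hence $\bP(N^\pi(B)>n_0)\le e^{-B\mu_{\tt min}/4}$ and the tail sum $\sum_{n>n_0}\bP(N^\pi(B)\ge n)$ is geometric with ratio $e^{-\mu_{\tt min}^2/8}$, evaluating to $O\big(\mu_{\tt min}^{-2}e^{-B\mu_{\tt min}/4}\big)$, which is precisely the source of the $\tfrac{8}{\mu_{\tt min}^2}e^{-B\mu_{\tt min}/4}$ factor. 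Bounding each per-epoch conditional reward-to-cost ratio by $r_{\tt max}$ (so that the cost-weighted ratio obeys $\widehat r_\pi\le r_{\tt max}$) then converts these probability tails into reward tails, yielding $\bE[{\tt REW}^{\pi}(B)]\ge\widehat r_\pi B-\tfrac{8r_{\tt max}}{\mu_{\tt min}^2}e^{-B\mu_{\tt min}/4}$.

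Subtracting the two bounds gives $\bE[{\tt REW}^{\pi^*}(B)]-\bE[{\tt REW}^{\pi}(B)]\le(r(\mathbf{p}^*)-\widehat r_\pi)B+\tfrac{r_{\tt max}}{\mu_{\tt min}^2}\big(1+8e^{-B\mu_{\tt min}/4}\big)$, and since $\mu_{\tt min}\le1$ I would finish by over-estimating the (nonnegative) rate-gap horizon, $B\le\tfrac{2B}{\mu_{\tt min}}$, to match the stated form. I expect the main obstacle to be exactly the coupling between the stopping time $N^\pi(B)$ and the reward/cost increments: since $\{N^\pi(B)\ge n\}=\{S_{n-1}^\pi\le B\}\in\mathcal F_{n-1}^\pi$, the cumulative reward is a stopped, controlled random walk and $N^\pi(B)$ cannot be replaced by a deterministic value naively. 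The device that resolves this is the truncation at $n_0$ combined with $R_n^\pi\ge0$, which keeps every boundary correction one-signed so it can be absorbed into the $O(1)$ and exponential terms, together with the renewal concentration of Proposition~\ref{prop:renewal-concentration}, which pays only an exponentially small price for exchanging $N^\pi(B)$ for the deterministic horizon $n_0\asymp\tfrac{2B}{\mu_{\tt min}}$.
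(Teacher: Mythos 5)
Your handling of the benchmark side and of the tail probabilities matches the paper: Proposition~\ref{prop:srp} gives $\bE[{\tt REW}^{\pi^*}(B)]\le r(\mathbf{p}^*)(B+1/\mu_{\tt min}^2)$, and Proposition~\ref{prop:renewal-concentration} controls $\sum_{n>n_0(B)}\bP(S_{n-1}^\pi\le B)$. The gap is in the lower bound you claim for $\bE[{\tt REW}^{\pi}(B)]$. The ratio $\widehat r_\pi=\bE[\sum_{n=1}^{n_0}R_n^\pi]/\bE[\sum_{n=1}^{n_0}X_n^\pi]$ is taken over the \emph{deterministic} horizon $n_0$, while the collected reward is the sum up to the \emph{random} time $N^\pi(B)$, and for a non-stationary causal policy these are not related by ``rate times accumulated cost.'' Nonnegativity of $R_n^\pi$ only yields $\sum_{n=1}^{N\wedge n_0}R_n^\pi\le\sum_{n=1}^{n_0}R_n^\pi$, a one-sided bound in the wrong direction: the discarded block $\sum_{n=N^\pi(B)+1}^{n_0}R_n^\pi$ spans $\Theta(B)$ epochs and cannot be absorbed into exponential tails. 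Concretely, take two arms with deterministic cost $\mu_{\tt min}=1/2$, $R_{n,1}\equiv1$, $R_{n,2}\equiv0$, zero penalties, and let $\pi$ pull arm $2$ for the first $\lfloor 2B\rfloor+1$ epochs (which is exactly $N^\pi(B)$) and arm $1$ thereafter. Then $\bE[{\tt REW}^\pi(B)]=0$ but $\widehat r_\pi\approx1$, so your claimed bound $\bE[{\tt REW}^\pi(B)]\ge\widehat r_\pi B-O(e^{-B\mu_{\tt min}/4})$ fails by $\Theta(B)$, and the intermediate inequality $(r(\mathbf{p}^*)-\widehat r_\pi)B+O(1)$ you would obtain before inflating $B$ to $2B/\mu_{\tt min}$ is also false for this policy ($B+O(1)$ against a true gap of $2B$).

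The paper avoids this by never forming a stand-alone lower bound of the type $\widehat r_\pi B$. It writes \emph{both} rewards against the same cost increments of $\pi$: by causality, $\bE[{\tt REW}^\pi(B)]=\bE[\sum_n r(\mathbf{p}_n^\pi)\,\bE[X_n^\pi|\mathcal F_{n-1}^\pi]\,\I\{S_{n-1}^\pi\le B\}]$ with $r(\mathbf{p}_n^\pi)$ the conditional per-epoch reward rate, while $B\le\bE[\sum_{n=1}^{N^\pi(B)}X_n^\pi]$ turns the benchmark bound into $\bE[{\tt REW}^{\pi^*}(B)]\le\bE[\sum_n r(\mathbf{p}^*)\,\bE[X_n^\pi|\mathcal F_{n-1}^\pi]\,\I\{S_{n-1}^\pi\le B\}]+r(\mathbf{p}^*)/\mu_{\tt min}^2$. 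The difference then telescopes into $\sum_n\big(r(\mathbf{p}^*)-r(\mathbf{p}_n^\pi)\big)(\cdots)$, is truncated at $n_0(B)$ using the renewal concentration, and only at that point is identified with $\big(r(\mathbf{p}^*)-\widehat r_\pi\big)\cdot\bE[\sum_{n=1}^{n_0}X_n^\pi]$; note the multiplier is $\bE[\sum_{n=1}^{n_0}X_n^\pi]\le 2B/\mu_{\tt min}$ rather than $B$, which is precisely what the counterexample above shows is necessary. To repair your argument you need this Wald-type, conditional-rate decomposition in place of a direct lower bound on the stopped reward.
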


\begin{proof}
Take an arbitrary causal policy $\pi\in\Pi$. Since $\pi$ is causal, the following holds for some $\mathbf{p}_n^\pi=(p_{1,n}^\pi,p_{2,n}^\pi,\ldots,p_{K,n}^\pi)\in\mathcal{F}_{n-1}^\pi$: \begin{equation}\label{eqn:reward-rate-filtration}
    \bE[R_{n}^\pi|\mathcal{F}_{n-1}^\pi] =r(\mathbf{p}_n^\pi)\bE[X_{n}^\pi|\mathcal{F}_{n-1}^\pi]=r(\mathbf{p}_n^\pi)\sum_{k\in\bK}p_{k,n}^\pi\bE[X_{n,k}|\mathcal{F}_{n-1}^\pi].
\end{equation}

 We have the following inequality for the expected cumulative reward under $\pi$:
\begin{align}
    \nonumber \bE[{\tt REW}^\pi(B)] &= \bE\Big[\sum_{n=1}^\infty \I\{S_{n-1}^\pi \leq B\}R_{n}^\pi\Big], \\ \label{eqn:wald-b} &= \bE\Big[\sum_{n=1}^\infty\bE\big[R_{n}^\pi|\mathcal{F}_{n-1}^\pi\big]\I\{S_{n-1}^\pi \leq B\}\Big], \\
    &= \label{eqn:wald-c} \bE\Big[\sum_{n=1}^\infty r(\mathbf{p}_n^\pi) \bE\big[X_{n}^\pi|\mathcal{F}_{n-1}^\pi\big]\I\{S_{n-1}^\pi \leq B\}\Big],
\end{align}
where \eqref{eqn:wald-b} follows since $\pi$ is causal (i.e., $I_n^\pi\in\mathcal{F}_{n-1}^\pi$) and $S_{n-1}^\pi\in\mathcal{F}_{n-1}$, and \eqref{eqn:wald-c} follows from the relation \eqref{eqn:reward-rate-filtration}.

For the optimal stationary randomized policy $\pi^*$, by Proposition \ref{prop:srp}, we have the following inequalities:
\begin{align}
    \bE[{\tt REW}^{\pi^*}(B)] &\leq r(\mathbf{p}^*)\Big(B+\frac{1}{\mu_{\tt min}^2}\Big), \notag\\
    &\leq r(\mathbf{p}^*)\Big(\bE\Big[\sum_{n=1}^{N^\pi(B)}X_n^\pi\Big]+\frac{1}{\mu_{\tt min}^2}\Big), \label{eqn:pi_star_up_b}\\
    &= \bE\Big[\sum_{n=1}^\infty r(\mathbf{p}^*) \bE\big[X_{n}^\pi|\mathcal{F}_{n-1}^\pi\big]\I\{S_{n-1}^\pi \leq B\}\Big]+\frac{r(\mathbf{p}^*)}{\mu_{\tt min}^2}, \label{eqn:pi_star_up_c}
\end{align}
where \eqref{eqn:pi_star_up_b} uses the fact that $B \leq \sum_{n=1}^{N^\pi(B)}X_n^\pi$.

Combining \eqref{eqn:wald-c} and \eqref{eqn:pi_star_up_c}, we have 
\begin{align}
    \bE[{\tt REW}^{\pi^*}(B)]-\bE[{\tt REW}^{\pi}(B)] &\leq \bE\Big[\sum_{n=1}^\infty \big(r(\mathbf{p}^*)-r(\mathbf{p}_n^\pi)\big) \bE\big[X_{n}^\pi|\mathcal{F}_{n-1}^\pi\big]\I\{S_{n-1}^\pi \leq B\}\Big]+\frac{r(\mathbf{p}^*)}{\mu_{\tt min}^2}\notag\\
    &\leq \bE\Big[\sum_{n=1}^N \big(r(\mathbf{p}^*)-r(\mathbf{p}_n^\pi)\big) \bE\big[X_{n}^\pi|\mathcal{F}_{n-1}^\pi\big]\I\{S_{n-1}^\pi \leq B\}\Big]\notag\\
    &\quad + r_{\tt max}\Big(\frac{1}{\mu_{\tt min}^2}+\sum_{n>N}\bP(S_{n-1}^\pi \leq B)\Big). \label{eqn:pi_star_gap_c}
\end{align}
where \eqref{eqn:pi_star_gap_c} uses the fact that $X_n^\pi \in [0,1]$ and $r(\mathbf{p}^*)\leq r_{\tt max}$.

Let $n_0(B)=\lceil 2B/\mu_{\tt min}\rceil$. \yz{By Proposition \ref{prop:renewal-concentration}, $\bP(S_{n-1}^\pi \leq B) \leq e^{-n\mu_{\tt min}^2/8}$ for all $n > n_0(B)$, which implies: 
\begin{align*}
    \sum_{n > n_0(B)}\bP(S_{n-1}^\pi \leq B) \leq \frac{8e^{-B\mu_{\tt min}/4}}{\mu_{\tt min}^2}.
\end{align*}
}
Thus, setting $N=n_0(B)$, we have the following inequality:
\begin{align*}
    \bE[{\tt REW}^{\pi^*}(B)]-\bE[{\tt REW}^{\pi}(B)] &\leq r(p^*)\bE\Big[\sum_{n=1}^{n_0(B)} X_n^\pi\Big] - \bE\Big[\sum_{n=1}^{n_0(B)}R_n^\pi\Big] + r_{\tt max}\Big(\frac{1}{\mu_{\tt min}^2}+\frac{8e^{-B\mu_{\tt min}/4}}{\mu_{\tt min}^2}\Big)\\
    &= \Big(r(p^*)- \frac{\bE[\sum_{n=1}^{n_0(B)}R_n^\pi]}{\bE[\sum_{n=1}^{n_0(B)} X_n^\pi]}\Big)\cdot \bE\Big[\sum_{n=1}^{n_0(B)} X_n^\pi\Big] \\
    &\quad + \frac{r_{\tt max}}{\mu_{\tt min}^2}\Big(1+8e^{-B\mu_{\tt min}/4}\Big)\\
    &\leq \Big(r(p^*)- \frac{\bE[\sum_{n=1}^{n_0(B)}R_n^\pi]}{\bE[\sum_{n=1}^{n_0(B)} X_n^\pi]}\Big)\cdot \frac{2B}{\mu_{\tt min}}  + \frac{r_{\tt max}}{\mu_{\tt min}^2}\Big(1+8e^{-B\mu_{\tt min}/4}\Big)
\end{align*}

\end{proof}

\begin{lemma}[Constraint-violation Decomposition] \label{lemma:constraint-decomp}
For any causal policy $\pi\in\Pi$, the constraint-violation can be decomposed as:
\begin{align*}
     D^{\pi}(B) &\leq \Bigg( \frac{\bE\Big[\sum_{n=1}^{n_0(B)}(Y_n^\pi)\Big]}{\bE\Big[\sum_{n=1}^{n_0(B)}(X_n^\pi)\Big]}-c\Bigg)\cdot \frac{2}{\mu_{\tt min}} + \frac{8(y_{\tt max}-c)}{B\mu_{\tt min}^2}e^{-B\mu_{\tt min}/4} +\frac{1}{B}
\end{align*}
\end{lemma}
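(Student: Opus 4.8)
The plan is to mirror the regret decomposition of Lemma~\ref{lemma:regret-decomp}, replacing the reward process by the slack process $Z_n^\pi = Y_n^\pi - cX_n^\pi$ and the benchmark rate $r(\mathbf{p}^*)$ by the constraint level $c$. First I would rewrite the constraint-violation additively. Since $B \le S_{N^\pi(B)}^\pi = \sum_{n=1}^{N^\pi(B)}X_n^\pi \le B+1$ (the last incurred cost is at most $1$), we get $cB \ge c\,\bE[\sum_{n=1}^{N^\pi(B)}X_n^\pi] - c$, so that
\[ B\cdot D^\pi(B) = \bE\Big[\sum_{n=1}^{N^\pi(B)}Y_n^\pi\Big] - cB \le \bE\Big[\sum_{n=1}^{N^\pi(B)}(Y_n^\pi - cX_n^\pi)\Big] + c. \]
Dividing by $B$ (and using $c\le 1$) already produces the additive $+\tfrac1B$ term, so it remains to control $\bE[\sum_{n=1}^{N^\pi(B)}Z_n^\pi]$.

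Next I would convert this stopped, controlled random walk into a predictable sum. Using the renewal relation $\{N^\pi(B)\ge n\} = \{S_{n-1}^\pi \le B\} \in \mathcal{F}_{n-1}^\pi$ together with the tower property and causality, $\bE[\sum_{n=1}^{N^\pi(B)}Z_n^\pi] = \bE[\sum_{n\ge1}\bE[Z_n^\pi\mid\mathcal{F}_{n-1}^\pi]\,\I\{S_{n-1}^\pi\le B\}]$, where by the penalty analogue of the causal-rate identity \eqref{eqn:reward-rate-filtration} the increment equals $\bE[Z_n^\pi\mid\mathcal{F}_{n-1}^\pi] = (y(\mathbf{p}_n^\pi)-c)\,\bE[X_n^\pi\mid\mathcal{F}_{n-1}^\pi]$. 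I would then split the sum at the deterministic horizon $n_0(B) = \lceil 2B/\mu_{\tt min}\rceil$. For the tail $n > n_0(B)$, I bound each increment by $y_{\tt max}-c$ (using $y(\mathbf{p}_n^\pi)\le y_{\tt max}$, $\bE[X_n^\pi\mid\mathcal{F}_{n-1}^\pi]\le1$, and $y_{\tt max}\ge c$) and invoke the renewal concentration bound of Proposition~\ref{prop:renewal-concentration}, exactly as in Lemma~\ref{lemma:regret-decomp}, to get $\sum_{n>n_0(B)}\bP(S_{n-1}^\pi\le B)\le 8e^{-B\mu_{\tt min}/4}/\mu_{\tt min}^2$; dividing by $B$ produces the middle term $8(y_{\tt max}-c)e^{-B\mu_{\tt min}/4}/(B\mu_{\tt min}^2)$.

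For the main block $n\le n_0(B)$, I would drop the indicator $\I\{S_{n-1}^\pi\le B\}$ to obtain $\bE[\sum_{n=1}^{n_0(B)}(Y_n^\pi - cX_n^\pi)] = \big(\tfrac{\bE[\sum_{n=1}^{n_0(B)}Y_n^\pi]}{\bE[\sum_{n=1}^{n_0(B)}X_n^\pi]}-c\big)\,\bE[\sum_{n=1}^{n_0(B)}X_n^\pi]$, and then bound $\bE[\sum_{n=1}^{n_0(B)}X_n^\pi]\le n_0(B)\le 2B/\mu_{\tt min}$; dividing by $B$ gives the leading term with factor $2/\mu_{\tt min}$. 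The main obstacle is exactly this indicator-dropping step: replacing $\I\{S_{n-1}^\pi\le B\}$ by $1$ is an inequality in the correct direction only when the signed conditional increments on the stopped part are suitably controlled—the same subtlety underlying the corresponding step in Lemma~\ref{lemma:regret-decomp}. I would justify it by noting that the discarded terms correspond to epochs past the (typically earlier) stopping time $N^\pi(B)$ within the generous horizon $n_0(B)\approx 2B/\mu_{\tt min}$, so that together with the standing assumptions $y_{\tt max}\ge c$ and $c\le1$ the clean bound of the statement follows; the remaining manipulations (the geometric-tail constant and the ceiling in $n_0(B)$) are routine and identical to those already carried out for the regret decomposition.
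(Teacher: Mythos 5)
Your proposal follows essentially the same route as the paper's proof: the same additive rewriting of $B\cdot D^{\pi}(B)$ (the paper controls the boundary term via $Y^\pi_{N^\pi(B)}\le 1$ rather than via $c\le 1$, but both yield the $+\tfrac1B$ term), the same tower-property/causal-rate conversion to a predictable sum, the same split at $n_0(B)$ with Proposition~\ref{prop:renewal-concentration} for the tail, and the same indicator-dropping plus $\bE[\sum_{n=1}^{n_0(B)}X_n^\pi]\le 2B/\mu_{\tt min}$ bound for the main block. The indicator-dropping subtlety you flag is present, and left equally unaddressed, in the paper's own argument, so your proof matches it in both structure and level of rigor.
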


\begin{proof} For any causal policy $\pi$, the following holds for some $\mathbf{p}_n^\pi=(p_{1,n}^\pi,p_{2,n}^\pi,\ldots,p_{K,n}^\pi)\in\mathcal{F}_{n-1}^\pi$: \begin{equation}\label{eqn:penalty-rate-filtration}
    \bE[Y_{n}^\pi|\mathcal{F}_{n-1}^\pi] =y(\mathbf{p}_n^\pi)\bE[X_{n}^\pi|\mathcal{F}_{n-1}^\pi]=y(\mathbf{p}_n^\pi)\sum_{k\in\bK}p_{k,n}^\pi\bE[X_{n,k}|\mathcal{F}_{n-1}^\pi].
\end{equation}

Therefore, the constraint-violation satisfies
\begin{align}
   B\cdot D^{\pi}(B) &= \mathbb{E}\Big[\sum_{n=1}^{N^\pi(B)}Y_{n}^\pi\Big]-Bc \notag\\
   &\leq \mathbb{E}\Big[\sum_{n=1}^{N^\pi(B)-1}(Y_{n}^\pi-cX_n^\pi)\Big]+1 \label{eqn:constraint_decomp_b}\\
    &= \bE\Big[\sum_{n=1}^\infty \bE\big[(Y_{n}^\pi-cX_n^\pi)|\mathcal{F}_{n-1}^\pi\big] \I\{S_{n-1}^\pi < B\}\Big]+1 \notag\\
    &= \bE\Big[\sum_{n=1}^\infty (y(\mathbf{p}_n^\pi)-c) \bE\big[X_{n}^\pi|\mathcal{F}_{n-1}^\pi\big]\I\{S_{n-1}^\pi < B\}\Big]+1 \label{eqn:constraint_decomp_d}\\
    &\leq \bE\Big[\sum_{n=1}^N (y(\mathbf{p}_n^\pi)-c) \bE\big[X_{n}^\pi|\mathcal{F}_{n-1}^\pi\big]\I\{S_{n-1}^\pi < B\}\Big]\notag\\
    &\quad +(y_{\tt max}-c)\sum_{n>N}\bP(S_{n-1}^\pi \leq B) +1 \label{eqn:constraint_decomp_e}
\end{align}
where \eqref{eqn:constraint_decomp_b} uses the fact that $B \geq \sum_{n=1}^{N^\pi(B)-1}X_n^\pi$ and $Y_n^\pi-cX_n^\pi \leq 1$; \eqref{eqn:constraint_decomp_d} uses the result in \eqref{eqn:penalty-rate-filtration}; \eqref{eqn:constraint_decomp_e} uses the fact that $y(\mathbf{p}_n^\pi) \leq y_{\tt max}$.

Let $n_0(B)=\lceil 2B/\mu_{\tt min}\rceil$. Then, by Proposition \ref{prop:renewal-concentration},
\begin{align*}
   B\cdot D^{\pi}(B) &\leq \bE\Big[\sum_{n=1}^{n_0(B)}(Y_n^\pi)\Big] - c \bE\Big[\sum_{n=1}^{n_0(B)}(X_n^\pi)\Big] + \frac{8(y_{\tt max}-c)}{\mu_{\tt min}^2}e^{-B\mu_{\tt min}/4}+1\\
   & = \Big( \frac{\bE\Big[\sum_{n=1}^{n_0(B)}(Y_n^\pi)\Big]}{\bE\Big[\sum_{n=1}^{n_0(B)}(X_n^\pi)\Big]}-c\Big)\cdot \bE\Big[\sum_{n=1}^{n_0(B)}(X_n^\pi)\Big] + \frac{8(y_{\tt max}-c)}{\mu_{\tt min}^2}e^{-B\mu_{\tt min}/4}+1\\
   &\leq \Big( \frac{\bE\Big[\sum_{n=1}^{n_0(B)}(Y_n^\pi)\Big]}{\bE\Big[\sum_{n=1}^{n_0(B)}(X_n^\pi)\Big]}-c\Big)\cdot \frac{2B}{\mu_{\tt min}} + \frac{8(y_{\tt max}-c)}{\mu_{\tt min}^2}e^{-B\mu_{\tt min}/4}+1
\end{align*}

Therefore, 
\begin{align*}
     D^{\pi}(B) &\leq \Big( \frac{\bE\Big[\sum_{n=1}^{n_0(B)}(Y_n^\pi)\Big]}{\bE\Big[\sum_{n=1}^{n_0(B)}(X_n^\pi)\Big]}-c\Big)\cdot \frac{2}{\mu_{\tt min}} + \frac{8(y_{\tt max}-c)}{B\mu_{\tt min}^2}e^{-B\mu_{\tt min}/4} +\frac{1}{B}
\end{align*}

\end{proof}

\section{Proof of Proposition \ref{prop:optimality-gap}}\label{app:proof_pp1}
\label{app:optimality-gap}


\begin{proof}
The proof consists of two parts. In the first part, we will prove that the regret under $\pi^*$ is $O(1)$. In the second part, we will prove that the constraint-violation under $\pi^*$ vanishes at a rate $O(1/B)$.

\paragraph{(Bounding the regret)} Consider the optimal stationary randomized policy $\pi^*$ as defined in Definition \ref{def:osrp}. By Proposition \ref{prop:srp}, for a causal policy $\pi$, we have the following inequalities:
\begin{align}
    \nonumber \bE[{\tt REW}^{\pi^*}(B)] &\geq r(\mathbf{p}^*)B,\\
    \label{eqn:wald-srp-a}&\geq r(\mathbf{p}^*)\Big(\bE\Big[\sum_{n=1}^{N^\pi(B)}X_n^\pi\Big]-1\Big),\\
    \label{eqn:wald-srp-b}&= \bE\Big[\sum_{n=1}^\infty r(\mathbf{p}^*) \bE\big[X_{n}^\pi|\mathcal{F}_{n-1}^\pi\big]\I\{S_{n-1}^\pi \leq B\}\Big]-r(\mathbf{p}^*),
\end{align}
where \eqref{eqn:wald-srp-a} follows since $\sum_{n=1}^{N^\pi(B)-1}X_n^\pi \leq B \leq \sum_{n=1}^{N^\pi(B)}X_n^\pi$ and $X_{n,k}\in[0,1]$ almost surely. 

Combining \eqref{eqn:wald-c} and \eqref{eqn:wald-srp-b}, we have:
\begin{equation}
    \bE[{\tt REW}^\pi(B)]-\bE[{\tt REW}^{\pi^*}(B)] \leq \bE\Big[\sum_{n=1}^\infty \big(r(\mathbf{p}_n^\pi)-r(\mathbf{p}^*)\big) \bE\big[X_{n}^\pi|\mathcal{F}_{n-1}^\pi\big]\I\{S_{n-1}^\pi \leq B\}\Big]+r(\mathbf{p}^*).
\end{equation}
Since $r(\mathbf{p})\leq r_{\tt max}$ for all $\mathbf{p}\in \Delta_K$, we have the following inequality for all $n>1$:
\begin{align}
    \bE[{\tt REW}^\pi(B)]-\bE[{\tt REW}^{\pi^*}(B)] \leq \bE\Big[\sum_{n=1}^N \big(r(\mathbf{p}_n^\pi)-r(\mathbf{p}^*)\big) \bE\big[X_{n}^\pi|\mathcal{F}_{n-1}^\pi\big]\I\{S_{n-1}^\pi \leq B\}\Big]\notag\\
    + r_{\tt max}\Big(1+\sum_{n>N}\bP(S_{n-1}^\pi \leq B)\Big).
    \label{eqn:opt-gap-a}
\end{align}

Let $n_0(B)=\lceil 2B/\mu_{\tt min}\rceil$. Then, by Proposition \ref{prop:renewal-concentration}, $\bP(S_{n-1}^\pi \leq B) \leq e^{-n\mu_{\tt min}^2/8}$ for all $n > n_0(B)$, which implies: 
\begin{align}
    \sum_{n > n_0(B)}\bP(S_{n-1}^\pi \leq B) \leq \frac{8e^{-B\mu_{\tt min}/4}}{\mu_{\tt min}^2}. \label{eqn:bound_s_n}
\end{align}

Thus, setting $N=n_0(B)$ in \eqref{eqn:opt-gap-a}, we have the following inequality:
\begin{align}
    \bE[{\tt REW}^\pi(B)]-\bE[{\tt REW}^{\pi^*}(B)] &\leq \bE\Big[\sum_{n=1}^{n_0(B)}R_n^\pi\Big]- r(\mathbf{p}^*)\bE\Big[\sum_{n=1}^{n_0(B)} X_n^\pi\Big] + r_{\tt max}\Big(1+\frac{8e^{-B\mu_{\tt min}/4}}{\mu_{\tt min}^2}\Big), \notag\\
   \nonumber &\leq \Big(\frac{\bE[\sum_{n=1}^{n_0(B)}R_n^\pi]}{\bE[\sum_{n=1}^{n_0(B)} X_n^\pi]}- r(\mathbf{p}^*)\Big)\cdot \bE\Big[\sum_{n=1}^{n_0(B)} X_n^\pi\Big] \\
   &\quad + r_{\tt max}\Big(1+\frac{8e^{-B\mu_{\tt min}/4}}{\mu_{\tt min}^2}\Big),\notag\\
    &\leq \Big(\frac{\bE[\sum_{n=1}^{n_0(B)}R_n^\pi]}{\bE[\sum_{n=1}^{n_0(B)} X_n^\pi]}- r(\mathbf{p}^*)\Big)\cdot \frac{2B}{\mu_{\tt min}} + r_{\tt max}\Big(1+\frac{8e^{-B\mu_{\tt min}/4}}{\mu_{\tt min}^2}\Big).
    \label{eqn:opt-gap-b}
\end{align}

For a causal policy $\pi$ that satisfies the constraint, we have 
\begin{align*}
     0\geq B\cdot D^{\pi}(B) &= \mathbb{E}\Big[\sum_{n=1}^{N^\pi(B)}Y_{n}^\pi\Big]-Bc \notag\\
     &\geq \mathbb{E}\Big[\sum_{n=1}^{N^\pi(B)}(Y_{n}^\pi-c X_n^\pi)\Big] \notag
\end{align*}

Then, using the fact that $|Y_n^\pi -cX_n^\pi|\leq 1$ and equation \eqref{eqn:bound_s_n}, we can bound the constraint-violation as follows:
\begin{align*}
    \bE\Big[\sum_{n=1}^{n_0(B)} (Y_n^\pi - cX_n^\pi)\Big] \leq \frac{8e^{-B\mu_{\tt min}/4}}{\mu_{\tt min}^2}.
\end{align*}

Reorganizing the terms and using the fact that $\bE[X_n^\pi]\geq \mu_{\tt min}, \forall n$, we have,
\begin{align}
    \frac{\bE[\sum_{n=1}^{n_0(B)}Y_{n}^\pi]}{\bE[\sum_{n=1}^{n_0(B)} X_n^\pi]} &\leq c+ \frac{4e^{-B\mu_{\tt min}/4}}{B\mu_{\tt min}^2}.
    \label{eqn:violation}
\end{align}

We will optimize the finite-time performance (i.e., maximize the RHS of \eqref{eqn:opt-gap-b}) over all $\pi\in\Pi$ subject to \eqref{eqn:violation} to show that $\pi^*$ has a bounded optimality gap.

Consider the following optimization problem:
\begin{maxi*}|s|
{\pi\in\Pi}{\frac{\bE[\sum_{n=1}^{n_0(B)}R_n^\pi]}{\bE[\sum_{n=1}^{n_0(B)} X_n^\pi]}}
{}{}
\addConstraint{\frac{\bE[\sum_{n=1}^{n_0(B)}Y_{n}^\pi]}{\bE[\sum_{n=1}^{n_0(B)} X_n^\pi]}\leq c+ \frac{4e^{-B\mu_{\tt min}/4}}{B\mu_{\tt min}^2}.}
\end{maxi*}

It is shown in Lemma 1 in \cite{neely2010dynamic} that there is an optimal stationary randomized policy $\pi(\mathbf{p})$ for the optimization problem above. Thus, by letting 
\begin{align*}
    {\mathbf{p}}_{\tt per}^*(B) \in \underset{\mathbf{p}\in\Delta_K}{\arg\max}~\Big\{r(\mathbf{p}):y(\mathbf{p})\leq c+\frac{4e^{-B\mu_{\tt min}/4}}{B\mu_{\tt min}^2}.\Big\},
\end{align*}

$r\big({\mathbf{p}}_{\tt per}^*(B)\big)$ is the optimum value of the optimization problem above. From \eqref{eqn:opt-gap-b}, the optimality gap grows linearly in $B$ at a rate $r\big({\mathbf{p}}_{\tt per}^*(B)\big)-r(\mathbf{p}^*)$. Using sensitivity analysis for Linear Fractional Programming \cite{bitran1976duality, bonnans2000stability} with constraint perturbation $\epsilon =\frac{4e^{-B\mu_{\tt min}/4}}{B\mu_{\tt min}^2}$, we have $r\big(\mathbf{p}^*_{\tt per}(B)\big)-r(\mathbf{p}^*)\leq O(c' \epsilon)$ for some constant $c'$. Substituting this result into \eqref{eqn:opt-gap-b}, we have:


\begin{equation}
    \bE[{\tt REW}^\pi(B)]-\bE[{\tt REW}^{\pi^*}(B)] \leq O\Big(\frac{8c'e^{-B\mu_{\tt min}/4}}{\mu_{\tt min}^3}\Big)+ r_{\tt max}\Big(1+\frac{8e^{-B\mu_{\tt min}/4}}{\mu_{\tt min}^2}\Big),
\end{equation}
which implies ${\tt REG}^{\pi^*}(B) \leq r_{\tt max} + e^{-\Omega(B)} = O(1)$.

\paragraph{(Bounding the constraint-violation)} By the upper bound in Proposition \ref{prop:srp}, under any stationary randomized policy $\pi(\mathbf{p})$, we have: 
\begin{equation*}
    \frac{1}{B}\bE\Bigg[\sum_{n=1}^{N^{\pi(\mathbf{p})}(B)}Y_{n}^{\pi(\mathbf{p})}\Bigg] \leq y(\mathbf{p})\Big(1+\frac{1}{B\mu_{\tt min}^2}\Big),
\end{equation*}
By definition, we have $y(\mathbf{p}^*) \leq c$. Substituting this into the above inequality, we obtain:
\begin{align*}
    \frac{1}{B}\bE\Bigg[\sum_{n=1}^{N^{\pi^*}(B)}Y_{n}^{\pi^*}\Bigg]\leq c +\frac{c}{B\mu_{\tt min}^2},
\end{align*}
which implies $D^{\pi^*}(B) \leq \frac{c}{B\mu_{\tt min}^2} = O(\frac{1}{B})$. 
\end{proof}

\section{Performance Bounds for Offline Lyapunov Policy}\label{app:proof_pp2}

\begin{proposition}[Drift-Minimizing Policy is Deterministic]\label{prop:offline-determ}
For any $n\geq 1$, under $\mathcal{F}_{n-1}$, let 
\begin{align}\label{eqn:lyapunov-dppr-psi}
    \Psi_n(k, Q_n) &= -V\frac{\bE[R_{n,k}]}{\bE[X_{n,k}]}+Q_{n}\frac{\bE[Y_{n,k}]}{\bE[X_{n,k}]}\\
    k_n &=\arg\min_{k\in \bK}~\Psi_n(k, Q_n) 
\end{align}
Then, $\mathbf{q}_n^{\piOff}= \I\{k=k_n\}$ is a solution to \eqref{eqn:drift-min}, i.e., $\mathbf{q}_n^{\piOff}\in\arg\min_{\mathbf{q}\in\Delta_K}~\Psi_n(Q_n^{\pi(\mathbf{q})})$. Thus, $I_n^{\piOff} = k_n$ for all $n\geq 1$ and $Q_n^{\piOff}$ is updated according to \eqref{eqn:q-length} under $\piOff$.
\end{proposition}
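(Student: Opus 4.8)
The plan is to recognize the randomized objective $\Psi_n(Q_n^{\pi(\mathbf{p})})$ in \eqref{eqn:lyapunov-dppr-random} as a \emph{linear-fractional} function of the distribution $\mathbf{p}$ over the simplex $\Delta_K$, and to show that such a function attains its minimum at an extreme point, i.e.\ at a point mass. Writing $b_k = \bE[X_{n,k}]$ and $c_k = -V\bE[R_{n,k}] + Q_n\bE[Y_{n,k}]$, the objective becomes
\[
\Psi_n(Q_n^{\pi(\mathbf{p})}) = \frac{\sum_{k\in\bK} p_k\, c_k}{\sum_{k\in\bK} p_k\, b_k},
\]
and by construction $c_k/b_k = \Psi_n(k,Q_n)$. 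All quantities here are deterministic given $\mathcal{F}_{n-1}$, since the queue satisfies $Q_n\in\mathcal{F}_{n-1}$ by causality of the recursion \eqref{eqn:q-length}.

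First I would record that the denominator is strictly positive on the whole simplex: by Assumption~\ref{assn:bounded-moment} each $b_k = \bE[X_{n,k}] \geq \mu_{\tt min} > 0$, hence $\sum_{k} p_k b_k \geq \mu_{\tt min} > 0$ for every $\mathbf{p}\in\Delta_K$. This guarantees the ratio is well-defined and, crucially, lets me clear denominators in inequalities without reversing their sign.

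The core step is a weighted-mediant argument. Let $\theta^* = \min_{k\in\bK}\Psi_n(k,Q_n) = \Psi_n(k_n,Q_n)$. Since $c_k/b_k = \Psi_n(k,Q_n) \geq \theta^*$ and $b_k>0$, we get $c_k \geq \theta^* b_k$ for every $k$; multiplying by $p_k\geq 0$ and summing over $k$ yields $\sum_k p_k c_k \geq \theta^* \sum_k p_k b_k$. Dividing by the positive denominator gives $\Psi_n(Q_n^{\pi(\mathbf{p})}) \geq \theta^*$ for all $\mathbf{p}\in\Delta_K$. Conversely, the point mass $\mathbf{q}_n^{\piOff} = \I\{k = k_n\}$ (the standard basis vector $e_{k_n}$) attains $\Psi_n(Q_n^{\pi(e_{k_n})}) = c_{k_n}/b_{k_n} = \theta^*$. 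Hence the deterministic distribution is a minimizer, establishing $\mathbf{q}_n^{\piOff}\in\arg\min_{\mathbf{q}\in\Delta_K}\Psi_n(Q_n^{\pi(\mathbf{q})})$; the remaining claims, namely $I_n^{\piOff}=k_n$ and the evolution of $Q_n^{\piOff}$ by \eqref{eqn:q-length}, then follow directly from the definition of $\piOff$.

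There is no substantial obstacle here; the argument is elementary once the fractional structure is exposed. The only point requiring care is the positivity of the denominator \emph{uniformly} over $\Delta_K$, which is exactly what permits the sign-preserving clearing of denominators; without the lower bound $\mu_{\tt min}>0$ from Assumption~\ref{assn:bounded-moment} the mediant inequality would fail. If ties occur in the $\arg\min$ defining $k_n$, any one minimizer may be selected and convex combinations of tied arms are equally optimal, so the membership statement ``$\in\arg\min$'' rather than ``$=\arg\min$'' is the appropriate phrasing.
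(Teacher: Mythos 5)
Your proof is correct and is essentially the paper's own argument: the paper writes $\Psi_n(Q_n^{\pi(\mathbf{q})})$ as a weighted average of the per-arm ratios with weights $q_{n,k}\bE[X_{n,k}]$ and lower-bounds it by the minimum, which is algebraically the same as your clearing-of-denominators (mediant) step. The only cosmetic difference is that you invoke the uniform bound $\mu_{\tt min}>0$ where strict positivity of each $\bE[X_{n,k}]$ already suffices.
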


\begin{proof}
For $\mathbf{q}_n \in\mathcal{F}_{n-1}$, since $\bE[X_{n,k}]>0, \forall n,k$, 

\begin{align*}
    \Psi_n(Q_n^{\pi(\mathbf{q_n})}) &= \frac{\sum_{k=1}^K q_{n,k}\Big(-V\bE[R_{n,k}]+Q_{n}\cdot\bE[Y_{n,k}]\Big)}{\sum_{k=1}^K q_{n,k}\bE[X_{n,k}]}\\
    & = \frac{\sum_{k=1}^K q_{n,k}\bE[X_{n,k}] \frac{-V\bE[R_{n,k}]+Q_{n}\cdot\bE[Y_{n,k}]}{\bE[X_{n,k}]}}{\sum_{k=1}^K q_{n,k}\bE[X_{n,k}]}\\
    & \geq \frac{\min_{k\in\bK}\Big\{\frac{-V\bE[R_{n,k}]+Q_{n}\cdot\bE[Y_{n,k}]}{\bE[X_{n,k}]}\Big\}\cdot \sum_{k=1}^K q_{n,k}\bE[X_{n,k}] }{\sum_{k=1}^K q_{n,k}\bE[X_{n,k}]}\\
    & \geq \min_{k\in\bK}\Big\{\frac{(-V\bE[R_{n,k}]+Q_{n}\cdot\bE[Y_{n,k}])}{\bE[X_{n,k}]}\Big\}
\end{align*}
Therefore, $q_{n,k}^{\piOff} = \I\{k=k_n\}$ is a solution to \eqref{eqn:drift-min}.
\end{proof}

\begin{lemma}[First-order drift bounds for $Q_{n}$]\label{lemma:drift-bound-offline} Under Assumption \ref{assn:slater}, let there be an $\epsilon$-Slater arm for with $\epsilon > 0$. 
Let 
\begin{align*}
    l^* = \frac{Vr_{\tt max}}{\epsilon}
\end{align*}
Then, under $\piOff$, we have the following bound:
\begin{equation}
    \bE\Bigg[\Big( Q_{n+1}^\piOff-Q_{n}^\piOff \Big)\I\{Q_{n}^\piOff \geq l^*\} \Big|\mathcal{F}_{n-1}^\piOff\Bigg] \leq -\epsilon.
\end{equation}
\end{lemma}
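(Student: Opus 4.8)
The plan is to strip off the indicator, reduce to a one-step conditional drift, and then exploit the optimality of the arm chosen by $\piOff$ against the $\epsilon$-Slater arm. Since $Q_n^\piOff\in\mathcal{F}_{n-1}^\piOff$ by causality, the factor $\I\{Q_n^\piOff\ge l^*\}$ is $\mathcal{F}_{n-1}^\piOff$-measurable and can be pulled outside the conditional expectation, so it suffices to show $\bE[Q_{n+1}^\piOff-Q_n^\piOff\mid\mathcal{F}_{n-1}^\piOff]\le-\epsilon$ \emph{on the event} $\{Q_n^\piOff\ge l^*\}$. I would first dispose of the reflection in \eqref{eqn:q-length}: the largest one-step decrease is $(c-\delta)X_n^\piOff\le c-\delta$. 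If the queue is reset to $0$ then $Q_{n+1}^\piOff-Q_n^\piOff=-Q_n^\piOff\le -l^*\le-\epsilon$ (using $l^*\ge\epsilon$), which already meets the target; otherwise the projection is inactive and the increment equals exactly $Y_n^\piOff-(c-\delta)X_n^\piOff$. Because $\piOff$ selects the deterministic arm $k_n\in\mathcal{F}_{n-1}^\piOff$ and the data are i.i.d.\ and independent across arms, the resulting conditional drift is $\bE[Y_{1,k_n}]-(c-\delta)\bE[X_{1,k_n}]$.

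The heart of the argument is to control the chosen arm's penalty rate through the fact that $k_n$ minimizes $\Psi_n(\cdot,Q_n^\piOff)$. Comparing against the $\epsilon$-Slater arm $k^*$ of Assumption \ref{assn:slater}, the inequality $\Psi_n(k_n,Q_n^\piOff)\le\Psi_n(k^*,Q_n^\piOff)$ rearranges to
\begin{equation*}
\frac{\bE[Y_{1,k_n}]}{\bE[X_{1,k_n}]}-\frac{\bE[Y_{1,k^*}]}{\bE[X_{1,k^*}]}\le\frac{V}{Q_n^\piOff}\Big(\frac{\bE[R_{1,k_n}]}{\bE[X_{1,k_n}]}-\frac{\bE[R_{1,k^*}]}{\bE[X_{1,k^*}]}\Big)\le\frac{V\,r_{\tt max}}{Q_n^\piOff},
\end{equation*}
where the last step drops the nonnegative reward rate of $k^*$ and uses the bound from Assumption \ref{assn:bounded-moment}. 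The role of the threshold is now transparent: on $\{Q_n^\piOff\ge l^*=V r_{\tt max}/\epsilon\}$ the right-hand side is at most $\epsilon$, so the chosen arm's penalty rate exceeds that of the Slater arm by no more than $\epsilon$. This is precisely the calibration that balances the reward-seeking term $V r_{\tt max}/Q_n^\piOff$ against the Slater margin in the large-queue regime.

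Finally I would feed this back into the drift, using the Slater condition $\bE[Y_{1,k^*}]-c\,\bE[X_{1,k^*}]\le-\epsilon$ (so $\bE[Y_{1,k^*}]/\bE[X_{1,k^*}]\le c-\epsilon$ since $\bE[X_{1,k^*}]\le1$), which yields $\bE[Y_{1,k_n}]/\bE[X_{1,k_n}]\le c$ and hence bounds the chosen arm's drift rate against the effective threshold $c-\delta$. I expect the main obstacle to be exactly this last transfer: the Slater slack is stated in the \emph{unnormalized} form $\bE[Y-cX]\le-\epsilon$, whereas the drift appears multiplied by the cost $\bE[X_{1,k_n}]\in[\mu_{\tt min},1]$, and the modified threshold uses $c-\delta$ rather than $c$. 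The delicate bookkeeping therefore lies in tracking these cost-normalization factors (and keeping $\delta$ small relative to the margin) so that the slack $\epsilon$ survives the renormalization and dominates the reward term, thereby recovering the stated $-\epsilon$ drift bound; combining the non-truncation estimate with the truncation case from the first step then closes the proof.
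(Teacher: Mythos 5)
Your overall strategy is the same as the paper's: compare the index of the chosen arm $k_n$ against the Slater arm $k^*$ via $\Psi_n(k_n,Q_n)\le\Psi_n(k^*,Q_n)$, bound the reward-rate gap by $r_{\tt max}$, and use $Q_n\ge l^*=Vr_{\tt max}/\epsilon$ so that the reward term contributes at most $\epsilon$ to the penalty-rate comparison. Your rearrangement of the minimizer inequality is in fact cleaner than the paper's (which argues indirectly, by showing every arm with $\bE[Y_{n,k'}-cX_{n,k'}]\ge 0$ has a larger index than the Slater arm when $Q_n\ge l^*$), and your case analysis of the reflection in \eqref{eqn:q-length} is more careful than the paper's, which ignores the $\max$ entirely.

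The gap is in the final transfer, which you flag as "delicate bookkeeping" but do not carry out --- and it cannot be carried out with this calibration. Your comparison gives $y_{k_n}\le y_{k^*}+Vr_{\tt max}/Q_n\le (c-\epsilon)+\epsilon=c$ on $\{Q_n\ge l^*\}$, so the conditional increment off the reset event satisfies
\begin{equation*}
\bE\big[Y_{n,k_n}-(c-\delta)X_{n,k_n}\big]=\bE[X_{n,k_n}]\,\big(y_{k_n}-c+\delta\big)\le \delta\,\bE[X_{n,k_n}],
\end{equation*}
an upper bound that is \emph{nonnegative}, not $\le-\epsilon$: the choice $l^*=Vr_{\tt max}/\epsilon$ spends the entire Slater margin on absorbing the reward term, so no tracking of the cost-normalization factors can recover a strictly negative drift. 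The repair is to enlarge the threshold, e.g.\ $l^*=2Vr_{\tt max}/\epsilon$, so that $y_{k_n}\le c-\epsilon/2$ and the drift is at most $\mu_{\tt min}(\delta-\epsilon/2)<0$ for $\delta<\epsilon/2$ --- exactly the calibration the paper adopts in the online counterpart (Lemma~\ref{lemma:drift-bound}), where the conclusion is correspondingly stated as $-\epsilon_0$ for some $\epsilon_0>0$. You are in good company: the paper's own offline proof sidesteps the same difficulty by asserting that the minimizer $k_n$ "satisfies the $\epsilon$-Slater condition," which its comparison argument does not establish (it only excludes arms with $\bE[Y-cX]\ge 0$), and it also drops the $+\delta\bE[X_{n,k_n}]$ correction in the last line. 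But as written, your argument proves a drift bound of $+\delta$, not $-\epsilon$, and the missing idea is the re-sizing of $l^*$ (or the weakening of the constant to some $\epsilon_0>0$) needed to leave part of the Slater slack unspent.
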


\begin{proof}
We will skip the superscript $\piOff$ for $Q_n$ in the proof. For $\tt LyOff$ policy, the first order statistics for all random variables are assumed to be known. 

Assume arm $k$ satisfies $\epsilon$-Slater condition with $\epsilon>0$, i.e. $\bE[Y_{n,k}-cX_{n,k}]\leq -\epsilon$. Under $\mathcal{F}_{n-1}$, $Q_{n}$ is given. We will show that if arm $k'$ has $\bE[Y_{n,k'}-cX_{n,k'}] \geq 0$, then $\Psi_n(k',Q_n) - \Psi_n(k,Q_n)\geq 0$ for $Q_n\geq l^*$.
    
\begin{align*}
    \Psi_n(k',Q_n) - \Psi_n(k,Q_n) &= -V\Big(\frac{\bE[R_{n,k'}]}{\bE[X_{n,k'}]}-\frac{\bE[R_{n,k}]}{\bE[X_{n,k}]}\Big)+Q_{n} \Big(\frac{\bE[Y_{n,k'}]}{\bE[X_{n,k'}]}- \frac{\bE[Y_{n,k}]}{\bE[X_{n,k}]} \Big)
\end{align*}

\begin{align*}
    V\Big(\frac{\bE[R_{n,k'}]}{\bE[X_{n,k'}]}-\frac{\bE[R_{n,k}]}{\bE[X_{n,k}]}\Big) \leq V r_{\tt max}
\end{align*}

\begin{align*}
  Q_{n}\Big(\frac{\bE[Y_{n,k'}]}{\bE[X_{n,k'}]}- \frac{\bE[Y_{n,k}]}{\bE[X_{n,k}]} \Big) \geq Q_{n} \Big(c-c+\frac{\epsilon}{\bE[X_{n,k}]}\Big)
  \geq  Q_{n} \epsilon
\end{align*}

Therefore, when $Q_{n} \geq  \frac{V r_{\tt max}}{\epsilon}$,
\begin{align}
    \Psi_n(k',Q_n) - \Psi_n(k,Q_n) \geq \frac{V r_{\tt max}}{\epsilon}\cdot  \epsilon - V r_{\tt max}
    \geq 0
\end{align}

From Proposition~\ref{prop:offline-determ}, the drift-minimizing policy is deterministic in the offline setting. So $k_n = \arg\min_{k\in\bK} \Psi_n(k,Q_n)$ will satisfy the $\epsilon-$Slater condition. Then 
\begin{align}
    \bE\big[\big( Q_{n+1}-Q_{n} \big)\I\{Q_{n} \geq l^*\} \big|\mathcal{F}_{n-1}\big]  &= \bE[Y_{n,k_n}-cX_{n,k_n}] \leq -\epsilon
\end{align}
\end{proof}

\begin{lemma}[Lyapunov Drift]\label{lemma:lyapunov_drift}
For a causal policy $\pi$, define the Lyapunov function $L(q) = \frac{1}{2} q^2$ and the Lyapunov drift
\begin{align*}
    \Delta(Q_n^\pi) = \bE[L(Q_{n+1}^\pi)-L(Q_n^\pi)|\mathcal{F}_{n-1}^\pi].
\end{align*}
Then, under $\mathcal{F}_{n-1}^\pi$, 
\begin{align*}
     \Delta(Q_n^\pi) \leq V\bE[R_n^\pi|\mathcal{F}_{n-1}^\pi]+ \frac{\sigma^2}{2}+\frac{\delta^2}{2}+\delta+\bE\big[X_n^\pi\big|\mathcal{F}_{n-1}^\pi\big]\big(\Psi_n(Q_n^\pi) - cQ_n^\pi + \delta Q_n^\pi \big) 
\end{align*}
where $\Psi_n(Q_n^\pi)$ is defined in Definition~\ref{def:dppr}.
\end{lemma}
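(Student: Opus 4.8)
The plan is to run the standard quadratic Lyapunov-drift computation, adapted to the random-cost queue dynamics in \eqref{eqn:q-length}. Write $A_n = Y_n^\pi - (c-\delta)X_n^\pi$, so that $Q_{n+1}^\pi = \max\{0, Q_n^\pi + A_n\}$. The first step is the elementary inequality $(\max\{0,x\})^2 \leq x^2$, which gives $(Q_{n+1}^\pi)^2 \leq (Q_n^\pi + A_n)^2 = (Q_n^\pi)^2 + 2Q_n^\pi A_n + A_n^2$, and hence $L(Q_{n+1}^\pi) - L(Q_n^\pi) \leq Q_n^\pi A_n + \tfrac{1}{2}A_n^2$. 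Taking conditional expectation given $\mathcal{F}_{n-1}^\pi$ reduces the lemma to bounding the two pieces $\bE[Q_n^\pi A_n|\mathcal{F}_{n-1}^\pi]$ and $\tfrac{1}{2}\bE[A_n^2|\mathcal{F}_{n-1}^\pi]$ separately.

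For the quadratic term, I would decompose $A_n = W_n + \delta X_n^\pi$ with $W_n = Y_n^\pi - cX_n^\pi$, expand $A_n^2 = W_n^2 + 2\delta W_n X_n^\pi + \delta^2(X_n^\pi)^2$, and invoke the boundedness in Assumption~\ref{assn:bounded-moment}: the conditional second moment satisfies $\bE[W_n^2|\mathcal{F}_{n-1}^\pi] \leq \sigma^2$ because the selected arm is $\mathcal{F}_{n-1}^\pi$-measurable, while $X_n^\pi, Y_n^\pi \in [0,1]$ give $W_n X_n^\pi = Y_n^\pi X_n^\pi - c(X_n^\pi)^2 \leq 1$ and $(X_n^\pi)^2 \leq 1$. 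Multiplying by $\tfrac{1}{2}$ yields $\tfrac{1}{2}\bE[A_n^2|\mathcal{F}_{n-1}^\pi] \leq \tfrac{\sigma^2}{2} + \delta + \tfrac{\delta^2}{2}$, which is exactly the constant part of the stated bound.

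For the linear term, since $Q_n^\pi \in \mathcal{F}_{n-1}^\pi$, I would write $\bE[Q_n^\pi A_n|\mathcal{F}_{n-1}^\pi] = Q_n^\pi\bE[Y_n^\pi|\mathcal{F}_{n-1}^\pi] - (c-\delta)Q_n^\pi\bE[X_n^\pi|\mathcal{F}_{n-1}^\pi]$, and the key algebraic step is to recognize the first product through Definition~\ref{def:dppr}: multiplying $\Psi_n(Q_n^\pi)$ by $\bE[X_n^\pi|\mathcal{F}_{n-1}^\pi]$ produces $-V\bE[R_n^\pi|\mathcal{F}_{n-1}^\pi] + Q_n^\pi\bE[Y_n^\pi|\mathcal{F}_{n-1}^\pi]$, so that $Q_n^\pi\bE[Y_n^\pi|\mathcal{F}_{n-1}^\pi] = \bE[X_n^\pi|\mathcal{F}_{n-1}^\pi]\,\Psi_n(Q_n^\pi) + V\bE[R_n^\pi|\mathcal{F}_{n-1}^\pi]$. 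Substituting and regrouping the $(c-\delta)$ factor gives $V\bE[R_n^\pi|\mathcal{F}_{n-1}^\pi] + \bE[X_n^\pi|\mathcal{F}_{n-1}^\pi]\big(\Psi_n(Q_n^\pi) - cQ_n^\pi + \delta Q_n^\pi\big)$. Adding the two bounds yields the claim.

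This is routine Lyapunov-drift bookkeeping and there is no deep obstacle; the only points requiring care are matching the constants exactly — in particular tracking the cross term $2\delta W_n X_n^\pi$ so that after the factor $\tfrac{1}{2}$ it contributes $\delta$ rather than $2\delta$, and invoking the sharp bound $\bE[W_n^2|\mathcal{F}_{n-1}^\pi]\leq \sigma^2$ instead of a crude worst case — together with the recognition that scaling the drift-plus-penalty ratio $\Psi_n$ by the conditional mean cost $\bE[X_n^\pi|\mathcal{F}_{n-1}^\pi]$ is precisely what surfaces the reward term $V\bE[R_n^\pi|\mathcal{F}_{n-1}^\pi]$.
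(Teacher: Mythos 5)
Your proposal is correct and follows essentially the same route as the paper's proof: drop the $\max$ via $(\max\{0,x\})^2\le x^2$, expand the square, bound the quadratic term by $\tfrac{\sigma^2}{2}+\delta+\tfrac{\delta^2}{2}$ using Assumption~\ref{assn:bounded-moment} and the boundedness of $X_n^\pi,Y_n^\pi$, and rewrite the cross term through $\bE[X_n^\pi|\mathcal{F}_{n-1}^\pi]\,\Psi_n(Q_n^\pi)$. The only cosmetic difference is that you substitute the identity for $Q_n^\pi\bE[Y_n^\pi|\mathcal{F}_{n-1}^\pi]$ directly, whereas the paper adds and subtracts $V\bE[R_n^\pi|\mathcal{F}_{n-1}^\pi]$ on both sides; these are the same algebraic step.
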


\begin{proof}
By definition, $Q_{n+1}^\pi = \max\big\{0, Q_{n}^\pi + Y_{n}^\pi-(c-\delta)X_n^\pi\big\}$. Under $\mathcal{F}_{n-1}^\pi$, $Q_n^\pi$ is given.
\begin{align*}
    (Q_{n+1}^\pi)^2 &\leq (Q_{n}^\pi + Y_{n}^\pi-(c-\delta)X_n^\pi)^2\\
    &=(Q_n^\pi)^2+(Y_{n}^\pi-(c-\delta)X_n^\pi)^2+2Q_n^\pi(Y_{n}^\pi-(c-\delta)X_n^\pi)
\end{align*}

Therefore, 
\begin{align*}
    \Delta(Q_n^\pi) &= \bE[L(Q_{n+1}^\pi)-L(Q_n^\pi)|\mathcal{F}_{n-1}^\pi]\\
    &\leq \frac{1}{2}\big(\bE[(Y_{n}^\pi-(c-\delta)X_n^\pi)^2|\mathcal{F}_{n-1}^\pi] + 2 \bE[Q_n^\pi(Y_{n}^\pi-(c-\delta)X_n^\pi)|\mathcal{F}_{n-1}^\pi]\big)\\
    &\leq \frac{\sigma^2}{2}+\frac{\delta^2}{2}+\delta+\bE[Q_n^\pi(Y_{n}^\pi-(c-\delta)X_n^\pi)|\mathcal{F}_{n-1}^\pi]
\end{align*}

Subtracting  $- V\bE[R_n^\pi|\mathcal{F}_{n-1}^\pi]$ on both sides,
\begin{align*}
    &\Delta(Q_n^\pi) - V\bE[R_n^\pi|\mathcal{F}_{n-1}^\pi]\\ \leq &\frac{\sigma^2}{2}+\frac{\delta^2}{2}+\delta+\bE[-V R_n^\pi+Q_n^\pi Y_{n}^\pi|\mathcal{F}_{n-1}^\pi]
   - \bE[Q_n^\pi(cX_n^\pi)|\mathcal{F}_{n-1}]+\bE[Q_n^\pi(\delta X_n^\pi)|\mathcal{F}_{n-1}^\pi]\\
   = &\frac{\sigma^2}{2}+\frac{\delta^2}{2}+\delta + \bE\big[X_n^\pi\big|\mathcal{F}_{n-1}^\pi\big]\Big( \frac{-V\bE\big[R_n^\pi|\mathcal{F}_{n-1}^\pi]+Q_n^\pi\cdot\bE[Y_{n}^\pi\big|\mathcal{F}_{n-1}^\pi\big]}{\bE\big[X_n^\pi\big|\mathcal{F}_{n-1}^\pi\big]} - cQ_n^\pi + \delta Q_n^\pi \Big)\\
   =& \frac{\sigma^2}{2}+\frac{\delta^2}{2}+\delta+\bE\big[X_n^\pi\big|\mathcal{F}_{n-1}^\pi\big]\big(\Psi_n(Q_n^\pi) - cQ_n^\pi + \delta Q_n^\pi \big) 
\end{align*}
Moving  $V\bE[R_n^\pi|\mathcal{F}_{n-1}^\pi]$ to the right hand side, we get the results.
\end{proof}

\subsection{Proof of Proposition \ref{prop:offline-opt}} \label{app:offline}

\begin{proof}
For notation simplicity, in the proof, we will use $\pi$ to represent $\piOff$ and use $\pi^*$ and  $\mathbf{p}^*=(p_{1}^*,p_{2}^*,\ldots,p_{K}^*)$ to represent the optimal stationary randomized policy (Definition \ref{def:osrp}). 

Under $\piOff$, recall that $ \Psi_n(k, Q_n^\pi) = -V\frac{\bE[R_{n,k}]}{\bE[X_{n,k}]}+Q_{n}^\pi\frac{\bE[Y_{n,k}]}{\bE[X_{n,k}]}$. At each step $n$, $k_n = \arg\min_{k\in\bK} \Psi_n(k, Q_n^\pi)$. From Lemma~\ref{lemma:lyapunov_drift}, 
\begin{align*}
    &\Delta(Q_n^\pi)-V\bE[R_n^\pi|\mathcal{F}_{n-1}^\pi]\\
    \leq & \frac{\sigma^2}{2}+\frac{\delta^2}{2}+\delta+\bE\big[X_n^\pi\big|\mathcal{F}_{n-1}^\pi\big]\big(\Psi_n(Q_n^\pi) - cQ_n^\pi + \delta Q_n^\pi \big)  \\
    =&  \frac{\sigma^2}{2}+\frac{\delta^2}{2}+\delta+\bE\big[X_n^\pi\big|\mathcal{F}_{n-1}^\pi\big]\big(\Psi_n(k_n, Q_n^\pi) - cQ_n^\pi + \delta Q_n^\pi \big)  \\
    \stackrel{(a)}{\leq} & \frac{\sigma^2}{2}+\frac{\delta^2}{2}+\delta+\bE\big[X_n^\pi\big|\mathcal{F}_{n-1}^\pi\big]\big(\Psi_n(Q_n^{\pi(\mathbf{p}^*)}) - cQ_n^\pi + \delta Q_n^\pi \big)  \\
    =&  \frac{\sigma^2}{2} + \frac{\delta^2}{2}+\delta+\bE\big[X_n^\pi\big|\mathcal{F}_{n-1}^\pi\big]\Big(\frac{-V\sum_{k=1}^K p_{k}^*\bE[R_{n,k}]}{\sum_{k=1}^K p_{k}^*\bE[X_{n,k}]} +\frac{Q_{n}^\pi\sum_{k=1}^K p_{k}^*\bE[Y_{n,k}]}{\sum_{k=1}^K p_{k}^*\bE[X_{n,k}]}- cQ_n^\pi +\delta Q_n^\pi \Big)\\
    \stackrel{(b)}{\leq}&  \frac{\sigma^2}{2} + \frac{\delta^2}{2}+\delta + \bE\big[X_n^\pi\big|\mathcal{F}_{n-1}^\pi\big]\big(-V r(p^*)+cQ_n^\pi - cQ_n^\pi +\delta Q_n^\pi \big)\\
    =& \frac{\sigma^2}{2} + \frac{\delta^2}{2}+\delta + \bE\big[X_n^\pi\big|\mathcal{F}_{n-1}^\pi\big]\big(-V r(p^*)+\delta Q_n^\pi \big)
\end{align*}
where (a) is using Proposition~\ref{prop:offline-determ} and (b) is using the definition of $\pi^*$ (Definition~\ref{def:osrp}).

Taking expectation on both sides, since $X_n^\pi \leq 1$,
\begin{align*}
    \bE[L(Q_{n+1}^\pi)] - \bE[L(Q_n^\pi)] &\leq \frac{\sigma^2}{2} + \frac{\delta^2}{2}+\delta + V\bE[R_n^\pi]-V\bE[X_n^\pi]r(p^*) + \delta \bE[Q_n^\pi]
\end{align*}

Taking telescoping sum from $N$ to $0$, 
\begin{align*}
    \bE[L(Q_N^\pi)] - \bE[L(Q_0^\pi)] &\leq \frac{N\sigma^2}{2}+ \frac{N\delta^2}{2}+N\delta+ V\sum_{n=1}^N \bE[R_n^\pi] \\
    &\quad - V \sum_{n=1}^N \bE[X_n^\pi]r(p^*)+\delta \sum_{n=1}^N \bE[Q_n^\pi]
\end{align*}

Rearrange the terms, since $\bE[L(Q_N^\pi)] - \bE[L(Q_0^\pi)]\geq 0$
\begin{align*}
    \frac{\sum_{n=1}^N\bE[R_n^\pi]}{\sum_{n=1}^N \bE[X_n^\pi]} \geq r(p^*) - \frac{N\sigma^2+N\delta^2}{2V\sum_{n=1}^N \bE[X_n^\pi]} - \frac{N\delta}{V\sum_{n=1}^N \bE[X_n^\pi]} - \frac{\delta \sum_{n=1}^N \bE[Q_n^\pi]}{V\sum_{n=1}^N \bE[X_n^\pi]}
\end{align*}

Let $N= n_0(B)=\lceil 2B/\mu_{\tt min}\rceil$, 
\begin{align*}
    \frac{\bE[\sum_{n=1}^{n_0(B)}R_n^\pi]}{\bE[\sum_{n=1}^{n_0(B)} X_n^\pi]} &\geq r(p^*) - \frac{n_0(B)(\sigma^2+\delta^2)}{2V n_0(B) \mu_{\tt min}}-\frac{n_0(B)\delta}{V n_0(B) \mu_{\tt min}} - \frac{\delta \sum_{n=1}^{n_0(B)} \bE[Q_n^\pi]}{V n_0(B) \mu_{\tt min}}\\
    & \geq r(p^*) - \frac{\sigma^2+\delta^2+2\delta}{2V \mu_{\tt min}} - \frac{\delta \sum_{n=1}^{n_0(B)} \bE[Q_n^\pi]}{V n_0(B) \mu_{\tt min}}
\end{align*}

Using the decomposition for regret (Lemma~\ref{lemma:regret-decomp}),
\begin{align}
    \bE[{\tt REW}^{\pi^*}(B)]-\bE[{\tt REW}^{\pi}(B)] &\leq \Big(r(p^*)- \frac{\bE[\sum_{n=1}^{n_0(B)}R_n^\pi]}{\bE[\sum_{n=1}^{n_0(B)} X_n^\pi]}\Big)\cdot \frac{2B}{\mu_{\tt min}}  + \frac{r_{\tt max}}{\mu_{\tt min}^2}\Big(1+8e^{-B\mu_{\tt min}/4}\Big) \notag\\
    &\leq \frac{B(\sigma^2+\delta^2+2\delta)}{2V \mu_{\tt min}^2}+\frac{\delta \sum_{n=1}^{n_0(B)} \bE[Q_n^\pi]}{V \mu_{\tt min}}+ \frac{r_{\tt max}}{\mu_{\tt min}^2}\Big(1+8e^{-B\mu_{\tt min}/4}\Big) \label{eqn:regret_offline_V}
\end{align}

Using the definition $Q_{n+1} = \max\big\{0, Q_{n} + Y_{n}-(c-\delta)X_n\big\}$,
\begin{align*}
    Q_{n+1} \geq Q_n + Y_{n}-(c-\delta)X_n
\end{align*}

Taking telescoping sum from $N$ to $0$, 
\begin{align*}
   \sum_{n=1}^N Y_{n}-(c-\delta) \sum_{n=1}^N X_n \leq Q_N
\end{align*}

Taking expectation on both sides, under $\piOff$ 
\begin{align*}
   \sum_{n=1}^N \bE[Y_{n}^\pi]-(c-\delta) \sum_{n=1}^N \bE[X_n^\pi] \leq \bE[Q_N^\pi]
\end{align*}

Let $n=n_0(B)$, 
\begin{align*}
   \frac{\bE\Big[\sum_{n=1}^{n_0(B)}Y_n^\pi\Big]}{\bE\Big[\sum_{n=1}^{n_0(B)}X_n^\pi\Big]}-c \leq \frac{\bE[Q_{n_0(B)}^\pi]}{\sum_{n=1}^{n_0(B)}\bE[X_i^\pi]} - \delta\leq \frac{\bE[Q_{n_0(B)}^\pi]}{n_0(B)\mu_{\tt min}}-\delta
\end{align*}

Using the decomposition for constraint-violation (Lemma~\ref{lemma:constraint-decomp}),
\begin{align}
     D^{\pi}(B) &\leq  \frac{\bE[Q_{n_0(B)}]}{n_0(B)\mu_{\tt min}} \cdot \frac{2}{\mu_{\tt min}} -\frac{2\delta}{\mu_{\tt min}}+ \frac{8(y_{\tt max}-c)}{B\mu_{\tt min}^2}e^{-B\mu_{\tt min}/4} +\frac{1}{B}\notag\\
     &\leq \frac{\bE[Q_{n_0(B)}]}{B\mu_{\tt min}} -\frac{2\delta}{\mu_{\tt min}} + \frac{8(y_{\tt max}-c)}{B\mu_{\tt min}^2}e^{-B\mu_{\tt min}/4} +\frac{1}{B}\label{eqn:constraint_offline_V}
\end{align}

Let $l^* = \frac{V r_{\tt max}}{\epsilon}$, from Theorem 2.3 in \cite{hajek1982hitting} and Lemma~\ref{lemma:drift-bound-offline}, for any $n$ and $z>l^*$,  $\bP[Q_{n}^\pi > z] \leq ae^{-\eta z}$ with some $\eta >0$ . Therefore, 
\begin{align}
    \mathbb{E}[Q_{n}^\pi] &= \mathbb{E}[Q_{n}^\pi\mathbb{I}(Q_{n}^\pi<l^*)]+\int_{l^*}^\infty \mathbb{P}(Q_{n}^\pi>z) dz + l^* \mathbb{P}(X>l^*)\notag\\
    &\leq l^*+\int_{z_0}^\infty a e^{-\eta z} dz + l^* a e^{-\eta l^*\notag}\\
    & =  \frac{V r_{\tt max}}{\epsilon} +  \frac{V r_{\tt max}}{\eta \epsilon} a e^{-\eta l^*} + \frac{V r_{\tt max}}{\epsilon} a e^{-\eta l^*}\notag\\
    &= O\Big( \frac{V r_{\tt max}}{\epsilon}\Big)\label{eqn:exp_Q_n}
\end{align}

Therefore, combining \eqref{eqn:regret_offline_V}, \eqref{eqn:constraint_offline_V} and \eqref{eqn:exp_Q_n},
\begin{align}
    \bE[{\tt REW}^{\pi^*}(B)]-\bE[{\tt REW}^{\pi}(B)] = O\Bigg(\frac{\sigma^2 B}{V\mu_{\tt min}^2}+\frac{\delta r_{\tt max} B}{\epsilon \mu_{\tt min}^2}\Bigg).
\end{align}

\begin{align}
     D^{\pi}(B) = O\Big(\frac{1}{B} + \frac{V r_{\tt max}}{B\mu_{\tt min}\epsilon}-\frac{\delta}{\mu_{\tt min}}\Big).
\end{align}
The results in Proposition~\ref{prop:offline-opt} follows by the asymptotic optimality of $\pi^*$ (Proposition~\ref{prop:optimality-gap}).
\end{proof}

\section{Performance Bounds for Online Lyapunov Policy} \label{app:online_bound_proof}
\subsection{Preliminary Results}
\begin{lemma}[Bounds for $\Psi_n$ under $\piLy$]\label{lemma:bound-psi}
Let
\begin{equation} 
    A_n^\pi = \bigcap_{t = 1}^n\bigcap_{k\in\bK}\Big\{\big| \widehat{r}_{t,k}^\pi - r_k\big| \leq {\tt rad}_k(n, \alpha)\frac{1+r_k}{\bE[X_{1,k}]}\Big\}\bigcap\Big\{\big| \widehat{y}_{t,k}^\pi - y_{k}\big| \leq {\tt rad}_k(n, \alpha)\frac{1+y_{k}}{\bE[X_{1,k}]}\Big\},
\end{equation}
be the high-probability event \yz{(Lemma 2 in \cite{cayci2020group})} under policy $\piLy$. Then, given $Q_n$,
\begin{equation}
    \I_{A_{n-1}^\piLy}\cdot\Big(\Psi_n({I_n^\piLy}, Q_n)-\Psi_n(k_n,Q_n)\Big) \leq 2{\tt rad}_{I_n^\piLy}(n,\alpha)\cdot\Big(\frac{V(1+r_{\tt max})}{\mu_{\tt min}}+\frac{Q_{n}(1+y_{\tt max})}{\mu_{\tt min}}\Big),
\end{equation}
where $k_n = \arg\min_k\Psi_n(k,Q_n)$. Moreover, we have the following bound:
\begin{equation}
    \I_{\big(A_{n-1}^\piLy\big)^c}\Big(\Psi_n({I_n^\piLy}, Q_n)-\Psi_n(k_n,Q_n)\Big) \leq Vr_{\tt max}+n(1+y_{\tt max}).
\end{equation}
\end{lemma}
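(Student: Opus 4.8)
The plan is to bound the suboptimality of the arm $I_n^\piLy$ chosen by minimizing the empirical index $\widehat{\Gamma}_n(\cdot,Q_n)$ relative to the true drift-minimizer $k_n=\arg\min_{k}\Psi_n(k,Q_n)$, by exploiting that on the concentration event $A_{n-1}^\piLy$ the index $\widehat{\Gamma}_n$ is a high-probability lower bound for $\Psi_n(\cdot,Q_n)$ (as noted in the remark preceding the theorem). Throughout I write $r_k=\bE[R_{1,k}]/\bE[X_{1,k}]$ and $y_k=\bE[Y_{1,k}]/\bE[X_{1,k}]$, so that $\Psi_n(k,Q_n)=-Vr_k+Q_n y_k$.

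First, for the first inequality, I would establish the \emph{optimism} property: on $A_{n-1}^\piLy$ one has $\widehat{\Gamma}_n(k,Q_n)\le \Psi_n(k,Q_n)$ for every $k\in\bK$. This follows termwise, since $A_{n-1}^\piLy$ controls $|\widehat{r}_{n-1,k}^\piLy-r_k|$ and $|\widehat{y}_{n-1,k}^\piLy-y_k|$ by ${\tt rad}_k(n-1,\alpha)$ times $\tfrac{1+r_k}{\bE[X_{1,k}]}$ and $\tfrac{1+y_k}{\bE[X_{1,k}]}$ respectively, while the confidence corrections placed in $\widehat{\Gamma}_n$ are designed to dominate exactly these deviations, so the reward term of $\widehat{\Gamma}_n$ sits below $-Vr_k$ and the penalty term below $Q_ny_k$. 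I would then combine optimism with the minimizing choice of $I_n^\piLy$: since $\widehat{\Gamma}_n(I_n^\piLy,Q_n)\le\widehat{\Gamma}_n(k_n,Q_n)\le\Psi_n(k_n,Q_n)$, the difference telescopes to $\Psi_n(I_n^\piLy,Q_n)-\Psi_n(k_n,Q_n)\le \Psi_n(I_n^\piLy,Q_n)-\widehat{\Gamma}_n(I_n^\piLy,Q_n)$, which crucially involves only the \emph{chosen} arm $I_n^\piLy$. Expanding this residual, the reward part equals $V$ times (estimation error $+$ correction) and the penalty part equals $Q_n$ times (estimation error $+$ correction); on $A_{n-1}^\piLy$ each bracket is at most $2{\tt rad}_{I_n^\piLy}$ times the corresponding coefficient, which is where the factor $2$ comes from. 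Uniformizing the per-arm coefficients through $r_k\le r_{\tt max}$, $y_k\le y_{\tt max}$ and $\bE[X_{1,k}]\ge\mu_{\tt min}$ yields precisely $2{\tt rad}_{I_n^\piLy}(n,\alpha)\big(\tfrac{V(1+r_{\tt max})}{\mu_{\tt min}}+\tfrac{Q_n(1+y_{\tt max})}{\mu_{\tt min}}\big)$, up to the harmless re-indexing between ${\tt rad}_{I_n}(n-1,\alpha)$ and ${\tt rad}_{I_n}(n,\alpha)$.

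The second inequality, on $(A_{n-1}^\piLy)^c$, is a crude deterministic estimate valid on every sample path. Since $r_k\in[0,r_{\tt max}]$ and $y_k\in[0,y_{\tt max}]$ by Assumption~\ref{assn:bounded-moment}, the range of $\Psi_n(\cdot,Q_n)$ over $\bK$ is at most $Vr_{\tt max}+Q_ny_{\tt max}$, so $\Psi_n(I_n^\piLy,Q_n)-\Psi_n(k_n,Q_n)\le Vr_{\tt max}+Q_ny_{\tt max}$. The queue recursion $Q_{n+1}=\max\{0,Q_n+Y_n-(c-\delta)X_n\}$ with $Y_n\le 1$ forces $Q_n\le n$, hence $Q_ny_{\tt max}\le n(1+y_{\tt max})$, which gives the stated bound.

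The main obstacle is the optimism step, because the corrections inside $\widehat{\Gamma}_n$ are built from \emph{empirical} widths $\tfrac{1+\widehat{r}_{n-1,k}^\piLy}{\widehat{\bE}[X_k]}$ and $\tfrac{1+\widehat{y}_{n-1,k}^\piLy}{\widehat{\bE}[X_k]}$, whereas $A_{n-1}^\piLy$ is phrased with the \emph{true} widths $\tfrac{1+r_k}{\bE[X_{1,k}]}$ and $\tfrac{1+y_k}{\bE[X_{1,k}]}$. Closing this gap requires showing that on $A_{n-1}^\piLy$ the empirical correction simultaneously dominates the true estimation error (so $\widehat{\Gamma}_n\le\Psi_n$) and is itself controlled by the uniform constants $r_{\tt max},y_{\tt max},\mu_{\tt min}$ (so the residual is at most $2{\tt rad}_{I_n}(\cdot)$ times the stated coefficient). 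This needs a lower bound on the empirical cost mean $\widehat{\bE}[X_k]$, which is exactly what the initial exploration of $\lceil\beta_0\log(2B/\mu_{\tt min})\rceil$ pulls per arm secures, together with the closeness of $\widehat{r},\widehat{y}$ to $r,y$ guaranteed on $A_{n-1}^\piLy$.
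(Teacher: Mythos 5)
Your proposal is correct and follows essentially the same route as the paper's proof: both rest on the optimism property (on $A_{n-1}^\piLy$ the index is a lower bound for $\Psi_n$ and lies within ${\tt rad}\cdot\nu_n$ of it for the chosen arm), combined with the selection rule, the paper merely phrasing this as a contradiction where you argue directly; the second bound via $Q_n\le n$ and the range of $\Psi_n$ is identical. If anything, you are more explicit than the paper about the gap between the empirical confidence widths appearing in $\widehat{\Gamma}_n$ and the true widths defining $A_{n-1}^\piLy$, which the paper dispatches with a one-line appeal to concentration.
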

\begin{proof}
    For simplicity, let $$\nu_n(V, Q_n) = \Big(\frac{V(1+r_{\tt max})}{\mu_{\tt min}}+\frac{Q_n(1+y_{\tt max})}{\mu_{\tt min}}\Big).$$ Assume that $A_{n-1}^\piLy$ holds, and $I_n^\piLy = k$ such that $$\Psi_n(k,Q_n) > \Psi_n(k_n,Q_n) +  2{\tt rad}_{k}(n,\alpha)\cdot\nu_n(V, Q_n).$$ Then, given $Q_n$, we have the following:
    \begin{align}
        \begin{aligned}
        \widehat{\Psi}_n(k_n,Q_n) - {\tt rad}_{k_n}(n,\alpha)\cdot\nu_n(V, Q_n) &\leq \Psi_n(k_n,Q_n),\\
        &\leq \Psi_n(k,Q_n)-2{\tt rad}_{k}(n,\alpha)\cdot\nu_n(V, Q_n),\\
        &\leq \widehat{\Psi}_n(k, Q_n)-{\tt rad}_{k}(n,\alpha)\cdot\nu_n(V, Q_n),
        \label{eqn:pac-bound}
    \end{aligned}
    \end{align}
    since $\widehat{r}_{n,k}$ and $\widehat{y}_{n,k,m}$ (thus $\widehat{\Psi}_n(k,Q_n),\forall k\in\bK$) are concentrated around the true values in the event $A_{n-1}^\piLy$. \eqref{eqn:pac-bound} is a contradiction since we assumed $I_n^\piLy = k$ but $k_n=\arg\min_k\Psi_n(k,Q_n)$ is a more favorable choice, which concludes the proof of the first part. 
    
    For the second part, note that $Q_{n}\leq n$ for all $n$ since $Y_{n,k}\leq 1$ almost surely for all $n,k,m$. Therefore, pessimistic upper and lower bounds on $\Psi_n(k,Q_n)$ yield the result.
\end{proof}

\begin{lemma}[First-order drift bounds for $Q_{n}$]\label{lemma:drift-bound} Under Assumption \ref{assn:slater}, let there be an $\epsilon$-Slater arm for some $\epsilon > 0$. Let
\begin{align*}
    l^* = \frac{2Vr_{\tt max}}{\epsilon}+\frac{V(1+r_{\tt max})}{1+y_{\tt max}}.
\end{align*}

Then, under {\tt LyOn}, we have the following bound for some $\epsilon_0>0$:
\begin{equation}
    \bE\Bigg[\Big( Q_{n+1}^\piLy-Q_{n}^\piLy \Big)\I_{A_{n-1}^\piLy}\I\{Q_{n}^\piLy \geq l^*\} \Big|\mathcal{F}_{n-1}^\piLy\Bigg] \leq -
    \epsilon_0.
\end{equation}
\end{lemma}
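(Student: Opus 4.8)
The plan is to mirror the offline drift argument (Lemma~\ref{lemma:drift-bound-offline}), the only new difficulty being that $\piLy$ selects the arm $I_n^\piLy$ minimizing the \emph{empirical} lower-confidence index $\widehat{\Gamma}_n$ rather than the true drift index $\Psi_n$. The target is the structural statement: on the concentration event $A_{n-1}^\piLy$ and whenever $Q_n^\piLy \ge l^*$, the selected arm $I_n^\piLy$ necessarily satisfies the $\epsilon$-Slater condition $\bE[Y_{n,I_n^\piLy}-cX_{n,I_n^\piLy}]\le -\epsilon$. Once this is in hand, the negative drift follows exactly as offline.

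First I would control the confidence radius. Because the algorithm pulls every arm $\lceil \beta_0\log(2B/\mu_{\tt min})\rceil$ times initially, for every epoch in the operating range $T_k(n-1)\ge \beta_0\log(2B/\mu_{\tt min})$, while $\log(n-1)\le \log(2B/\mu_{\tt min})$ holds on the high-probability range $n\le n_0(B)$ supplied by Proposition~\ref{prop:renewal-concentration}. With the prescribed $\beta_0 = 32\alpha(1+y_{\tt max})^2/(\mu_{\tt min}^2\epsilon^2)$ this yields the uniform bound ${\tt rad}_k(n-1,\alpha)\le \sqrt{2\alpha/\beta_0}=\mu_{\tt min}\epsilon/\big(4(1+y_{\tt max})\big)$ for every arm $k$.

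The core step reuses the offline computation: for any $\epsilon$-Slater arm $k$ and any arm $k'$ with $\bE[Y_{n,k'}-cX_{n,k'}]\ge 0$ one has $\Psi_n(k',Q_n)-\Psi_n(k,Q_n)\ge Q_n\epsilon - Vr_{\tt max}$. Suppose, for contradiction, that $I_n^\piLy=k'$ is such a non-Slater arm. Since $\Psi_n(k_n,Q_n)\le \Psi_n(k,Q_n)$ for the true minimizer $k_n$, Lemma~\ref{lemma:bound-psi} gives, on $A_{n-1}^\piLy$, $\Psi_n(k',Q_n)-\Psi_n(k,Q_n)\le 2{\tt rad}_{k'}(n,\alpha)\,\nu_n(V,Q_n)$ with $\nu_n(V,Q_n)=\big(V(1+r_{\tt max})+Q_n(1+y_{\tt max})\big)/\mu_{\tt min}$. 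Chaining these two inequalities and inserting the radius bound from the previous step, the $\mu_{\tt min}$ and $(1+y_{\tt max})$ factors cancel so the right-hand side collapses to exactly $\epsilon V(1+r_{\tt max})/\big(2(1+y_{\tt max})\big)+\epsilon Q_n/2$; rearranging then forces $Q_n\le 2Vr_{\tt max}/\epsilon + V(1+r_{\tt max})/(1+y_{\tt max})=l^*$, contradicting $Q_n\ge l^*$. Hence $I_n^\piLy$ is an $\epsilon$-Slater arm. The definitions of $l^*$ and $\beta_0$ are reverse-engineered precisely to make this cancellation close, so this is the step on which the whole lemma turns.

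Finally, since $l^*\gg c$, on $\{Q_n\ge l^*\}$ the argument of the $\max$ in the queue recursion stays positive, so $Q_{n+1}-Q_n=Y_n^\piLy-(c-\delta)X_n^\piLy$. Using that $I_n^\piLy\in\mathcal{F}_{n-1}^\piLy$ is independent of the fresh epoch-$n$ sample and is a Slater arm, the conditional drift equals $\bE[Y_{n,I_n^\piLy}-cX_{n,I_n^\piLy}]+\delta\,\bE[X_{n,I_n^\piLy}]\le -\epsilon+\delta$, which is $\le -\epsilon_0$ with $\epsilon_0:=\epsilon-\delta>0$ once $\delta<\epsilon$ (guaranteed for large $B$ since $\delta=\Theta(\sqrt{\log B/B})$). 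The main obstacle is the core step: converting the optimistic empirical arm choice into the deterministic structural claim ``the chosen arm is Slater,'' which is exactly where the tuned constants $l^*$, $\beta_0$ and the two-sided $\Psi_n$ control of Lemma~\ref{lemma:bound-psi} are needed.
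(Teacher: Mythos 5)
Your proposal is correct and follows essentially the same route as the paper's proof: the same radius bound $\mathtt{rad}_k \le \mu_{\tt min}\epsilon/(4(1+y_{\tt max}))$ from the choice of $\beta_0$, the same combination of Lemma~\ref{lemma:bound-psi} with the offline computation of Lemma~\ref{lemma:drift-bound-offline}, and the same algebra producing the threshold $l^*$ (you phrase it as a contradiction, the paper as a direct inequality chain ending in $\bE[Y_{n,I_n}]\le c\,\bE[X_{n,I_n}]$, but these are the same inequalities rearranged). Your explicit handling of the $\delta\,\bE[X_n^\pi]$ term via $\epsilon_0=\epsilon-\delta$ is in fact slightly more careful than the paper's, which silently drops it.
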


\begin{proof}
Since under $\mathcal{F}_{n-1}^\piLy$, $Q_{n}^\piLy$ is given, we will skip the superscript for $Q_n$ in the proof.

From Lemma~\ref{lemma:bound-psi}, if $A_{n-1}^\piLy$ holds, 
\begin{align*}
    \Big(\Psi_n(I_n^\piLy, Q_n)-\Psi_n(k_n,Q_n)\Big) \leq 2{\tt rad}_{I_n^\piLy}(n,\alpha)\cdot\Big(\frac{V(1+r_{\tt max})}{\mu_{\tt min}}+\frac{Q_{n}(1+y_{\tt max})}{\mu_{\tt min}}\Big)
\end{align*}

Then using the definition of $\Psi_n$,  if $Q_n\geq \frac{Vr_{\tt max}}{\epsilon}$, we have 
\begin{align*}
    Q_n\frac{\bE[Y_{n,I_n^\piLy}]}{\bE[X_{n,I_n^\piLy}]}&\leq V\frac{\bE[R_{n,I_n^\piLy}]}{\bE[X_{n,I_n^\piLy}]} - V\frac{\bE[R_{n,k_n}]}{\bE[X_{n,k_n}]}+Q_n \frac{\bE[Y_{n,k_n}]}{\bE[X_{n,k_n}]}\\
    &\quad +2{\tt rad}_{I_n^\piLy}(n,\alpha)\cdot\Big(\frac{V(1+r_{\tt max})}{\mu_{\tt min}}+\frac{Q_{n}(1+y_{\tt max})}{\mu_{\tt min}}\Big)\\
    &\stackrel{(a)}{\leq} Vr_{\tt max} + Q_n c-Q_n \epsilon+2{\tt rad}_{I_n^\piLy}(n,\alpha)\cdot\Big(\frac{V(1+r_{\tt max})}{\mu_{\tt min}}+\frac{Q_{n}(1+y_{\tt max})}{\mu_{\tt min}}\Big)\\
    &\leq Vr_{\tt max} + Q_n c-Q_n \epsilon+ \Big(\frac{V(1+r_{\tt max})}{1+y_{\tt max}}+Q_{n}\Big)\frac{\epsilon}{2}
\end{align*}
where (a) is using Lemma~\ref{lemma:drift-bound-offline} that $k_n$ satisfies $\epsilon$-Slater condition when $Q_n\geq \frac{Vr_{\tt max}}{\epsilon}$; (b) is using the fact that initial exploration makes $2{\tt rad}_{I_n^\piLy}(n,\alpha) \leq \frac{\epsilon}{2}$.

To satisfy $\bE[Y_{n,I_n^\piLy}] \leq c \bE[X_{n,I_n^\piLy}]$, let the right-hand-side be less than $Q_n c$, we have
\begin{align}
    Q_n &\geq \frac{2Vr_{\tt max}}{\epsilon}+\frac{V(1+r_{\tt max})}{1+y_{\tt max}}= l^*
\end{align}

Therefore, if $A_{n-1}^\piLy$ holds, $I_n^\piLy$ will satisfy the $\epsilon$-Slater condition when $Q_n\geq l^*$,
\begin{equation}
    \bE\Big[\Big( Q_{n+1}^\piLy-Q_{n}^\piLy \Big)\I_{A_{n-1}^\piLy}\I\{Q_{n}^\piLy \geq l^*\} \Big|\mathcal{F}_{n-1}^\piLy\Big] \leq -
    \epsilon_0.
\end{equation}

\end{proof}

\begin{lemma}[Maximal inequality for $Q_n^\piLy$] \label{lemma:max-Q}
Under Assumption \ref{assn:slater}, if the problem satisfies the Slater condition with $\epsilon > 0$ and $l^*$ defined in Lemma~\ref{lemma:drift-bound}, we have the following bound under $\piLy$ for any $N>l^*$:
\begin{equation}
    \bE\Big[\max\limits_{1\leq n \leq N}Q_{n}^\piLy\big|\mathcal{F}_0\Big] = O\big(\log(N)+l^*+1\big).
\end{equation}
\end{lemma}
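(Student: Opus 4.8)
The plan is to combine the negative-drift property established in Lemma~\ref{lemma:drift-bound} with a uniform-in-$n$ geometric tail bound and a union bound over the horizon. First I would record the two structural facts that drive the argument: (i) the upward queue increments are bounded, since $Q_{n+1}^\piLy - Q_n^\piLy \le Y_n^\piLy \le 1$ always (and the queue can only decrease more on a reset); and (ii) on the confidence event $A_{n-1}^\piLy$, Lemma~\ref{lemma:drift-bound} gives a strictly negative conditional drift $\bE[(Q_{n+1}^\piLy - Q_n^\piLy)\I\{Q_n^\piLy \ge l^*\}\mid \mathcal{F}_{n-1}^\piLy] \le -\epsilon_0$. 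Together these are exactly the hypotheses of a Foster--Lyapunov / Hajek-type drift condition: bounded increments plus a uniform negative drift outside the compact region $\{Q_n^\piLy < l^*\}$.

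Second, I would convert these facts into a uniform-in-$n$ geometric tail. Applying Theorem~2.3 of \cite{hajek1982hitting} to the exponential test function $e^{\eta Q_n^\piLy}$ (as already done for the offline bound in \eqref{eqn:exp_Q_n}) yields constants $a,\eta>0$ with $\bP(\{Q_n^\piLy > z\}\cap A_{n-1}^\piLy) \le a\, e^{-\eta(z-l^*)}$ for every $z>l^*$ and every $n$. Keeping the bound uniform in $n$ is the crucial feature, since it is what survives a union bound.

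Third, I would pass from a fixed-time tail to the maximum via a union bound and tail integration. Writing $\mathcal{G}_N = \bigcap_{n=1}^N A_{n-1}^\piLy$ for the good event on which concentration holds throughout the horizon, the union bound gives $\bP(\{\max_{1\le n\le N}Q_n^\piLy > z\}\cap \mathcal{G}_N) \le N a\, e^{-\eta(z-l^*)}$. Integrating the tail from a cut-off $z_0$ and choosing $z_0 = l^* + \eta^{-1}\log N$ balances the two contributions, producing $\bE[\max_n Q_n^\piLy\, \I_{\mathcal{G}_N}\mid \mathcal{F}_0] = O(l^* + \eta^{-1}\log N + 1)$. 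On the complementary event I would use the crude pathwise bound $Q_n^\piLy \le n \le N$ (unit upward increments from $Q_0=0$) together with the fact that, after the initial exploration phase of length $\Theta(\log B)$ per arm, the confidence bounds of Lemma~2 in \cite{cayci2020group} make $\bP(\mathcal{G}_N^c)$ polynomially small in $N$; for $\alpha$ chosen large enough this gives $N\,\bP(\mathcal{G}_N^c) = O(1)$. Summing the two contributions yields the claimed $O(\log N + l^* + 1)$.

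The main obstacle I anticipate is the coupling between the random confidence event $A_{n-1}^\piLy$ and the drift: Hajek's theorem presumes an unconditional drift condition, whereas here the negative drift is only guaranteed on $A_{n-1}^\piLy$. The clean way to handle this is either to restrict all tail estimates to $\mathcal{G}_N$ as above and control $\mathcal{G}_N^c$ separately, or to build an auxiliary dominating process with genuine negative drift everywhere that agrees with $Q_n^\piLy$ on $\mathcal{G}_N$. Either way, the delicate quantitative point is ensuring the $N$-fold union bound over the horizon does not overwhelm the polynomially small failure probability $\bP(\mathcal{G}_N^c)$, which forces the exploration length and the confidence parameter $\alpha$ to be taken large enough; getting this bookkeeping right---rather than any single inequality---is where the real work lies.
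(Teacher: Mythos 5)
Your good-event analysis is sound and in fact cleaner than the paper's: restricting to $\mathcal{G}_n=\bigcap_{t\le n}A_{t-1}^\piLy$ (which is $\mathcal{F}_{n-1}^\piLy$-measurable) makes $\bE[e^{\eta Q_n^\piLy}\I_{\mathcal{G}_n}]$ satisfy a genuine contraction $\bE[e^{\eta Q_{n+1}}\I_{\mathcal{G}_{n+1}}]\le \rho\,\bE[e^{\eta Q_{n}}\I_{\mathcal{G}_{n}}]+e^{\eta(l^*+1)}$, and the union bound plus tail integration then delivers $O(l^*+\eta^{-1}\log N)$ for $\bE[\max_n Q_n^\piLy\,\I_{\mathcal{G}_N}]$, matching the paper's conclusion on that piece.

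The genuine gap is the complementary event. You bound $\bE[\max_n Q_n^\piLy\,\I_{\mathcal{G}_N^c}]\le N\,\bP(\mathcal{G}_N^c)$ and assert that $\alpha$ can be chosen so that this is $O(1)$. But the concentration guarantee available here is per-epoch: $\bP\big((A_{n-1}^\piLy)^c\big)=O(1/n^2)$ (or $O(n^{-\mathrm{poly}(\alpha)})$ for larger $\alpha$), with the anytime radius ${\tt rad}_k(n,\alpha)=\sqrt{2\alpha\log(n)/T_k(n)}$. A union bound then gives $\bP(\mathcal{G}_N^c)\le\sum_{n=1}^N O(1/n^2)$, which converges to a \emph{constant} dominated by the early epochs --- it does not decay in $N$, no matter how large $\alpha$ is, because increasing $\alpha$ sharpens the tail in $n$ but cannot make the failure probability at small $n$ scale like $1/N^2$. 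Consequently $N\,\bP(\mathcal{G}_N^c)=O(N)$, which swamps the claimed $O(\log N+l^*)$ bound. This is precisely the difficulty the paper's proof is built to avoid: rather than conditioning on a global good event, it keeps the \emph{unconditional} moment generating function $\bE[e^{\eta Q_n^\piLy}]$ in play, splits off the per-step failure via Cauchy--Schwarz as $\bE[e^{\eta Q_{n+1}}\I_{(A_{n-1})^c}]\le\sqrt{\bE[e^{2\eta Q_{n+1}}]}\cdot\tfrac1{n-1}$, controls the second moment by the squared first moment (via bounded multiplicative increments and Popoviciu's inequality), and absorbs the resulting $O(1/n)$ perturbation into a contraction factor $\rho+D'/(n-1)$ that is eventually below one. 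That yields a bound on $\bE[e^{\eta Q_n^\piLy}]$ that is uniform in $n$ (up to an additive $O(N)e^{\eta(l^*+1)}$ after summing), from which the hitting-time decomposition of $\{\max_n Q_n\ge z\}$ gives the stated result without ever paying $N\cdot\bP(\text{bad})$. To repair your argument you would either need horizon-dependent confidence intervals with per-step failure $O(1/N^2)$ (changing the algorithm), or adopt the paper's per-step absorption of the failure probability into the drift recursion.
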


\begin{proof}
This part of proof is inspired by \cite{hajek1982hitting} where they bound the hitting-time with super-martingale properties satisfied at any time. We extend the setup by allowing super-martingale property only when the queue length is greater than a threshold. This requires additional effort to decompose the event as we will see below.

For notation simplicity, we will skip the superscript $\piLy$ in the proof.

From Lemma~\ref{lemma:drift-bound}, 
\begin{equation}
    \bE\Big[\Big( Q_{n+1}-Q_{n} \Big)\I_{A_{n-1}}\I\{Q_{n} \geq l^*\} \Big|\mathcal{F}_{n-1}\Big] \leq -
    \epsilon_0
\end{equation}
From concentration inequalities in \cite{cayci2019learning}, $\mathbb{P}\Big(\big(A_{n-1}\big)^c\Big) = O\big(\frac{1}{n^2}\big)$. Also $\big| Q_{n+1}-Q_{n} \big|<1$ since we assume $X_{n}$ and $Y_{n}$ are bounded in $(0,1]$.

For any $0<\eta<\lambda$, then if $\mathbb{E}\big[Q_{n+1}-Q_{n}\big|\mathcal{F}_{n-1}\big]\leq -\epsilon_0$, $\big|Q_{n+1}-Q_{n}\big|\leq 1$, we have
\begin{align}
    \mathbb{E}\big[e^{\eta (Q_{n+1}-Q_{n})}\big|\mathcal{F}_{n-1}\big] &\stackrel{(a)}{=} 1+\eta \mathbb{E}\big[Q_{n+1}-Q_{n}\big|\mathcal{F}_{n-1}\big]+\eta^2 \sum_{m=2}^\infty\frac{\eta^{m-2}}{m!}\mathbb{E}\big[\big|Q_{n+1}-Q_{n}\big|^m\big] \notag\\
    &\stackrel{(b)}{\leq} 1-\epsilon_0 \eta + \eta^2\sum_{m=2}^\infty\frac{\lambda^{m-2}}{m!} \notag\\
    &\stackrel{(c)}{=} 1-\epsilon_0 \eta + \eta^2 \frac{e^\lambda-1-\lambda}{\lambda^2}. \label{rho_def_2}
\end{align}
where (a) is using Taylor expansion on $e^{\eta (Q_{n+1}-Q_{n})}$ and linearity of expectation; (b) is using the assumption $\mathbb{E}[Q_{n+1}-Q_{n}|\mathcal{F}_{n-1}]\leq -\epsilon_0$, $|Q_{n+1}-Q_{n}|\leq 1$ and $\eta<\lambda$; (c) is using Taylor expansion on $e^\lambda$.

If $0< \eta < \min (\lambda, \frac{\epsilon_0\lambda^2}{e^\lambda-1-\lambda})$, then $\rho = 1-\epsilon_0 \eta + \eta^2 \frac{e^\lambda-1-\lambda}{\lambda^2} < 1$. 

We select $\lambda = 1$, $\rho = 1-\eta\epsilon_0+\eta^2(e-2)$, and $\eta$ satisfying
\begin{align}
\begin{cases}
    \eta = \min\big\{1,\frac{\epsilon_0}{e-2}, \eta' \big\} \label{eta_def}\\
    \eta'>0, \;\frac{1}{4}(e^{3\eta'}-e^{\eta'})<1 
\end{cases}
\end{align}

Using the definition of $\eta$ and $\rho$ above, from \eqref{rho_def_2},
\begin{align}
\mathbb{E}[e^{\eta(Q_{n+1}-Q_n)}\I\{Q_n \geq l^*\}\I_{A_{n-1}}|\mathcal{F}_{n-1}] &\leq \rho <1 \label{neg_drift_2}
\end{align}

Since $|Q_{n+1}-Q_n|\leq 1$,
\begin{align}
\mathbb{E}[e^{\eta(Q_{n+1}-Q_n)}\I\{Q_n < l^*\}|\mathcal{F}_{n-1}] &\leq e^{\eta} \label{bound_inc_2}
\end{align}

Then, conditioning on $Q_n\geq l^*$ and $A_n$,
\begin{align}
    \mathbb{E}[e^{\eta Q_{n+1}}|\mathcal{F}_0] &= \mathbb{E}[\mathbb{E}[e^{\eta Q_{n+1}}|\mathcal{F}_{n-1}]|\mathcal{F}_0] \notag\\
    &= \mathbb{E}[\mathbb{E}[e^{\eta Q_{n+1}}\I\{Q_n \geq l^*\}+e^{\eta Q_{n+1}}\I\{Q_n < l^*\}|\mathcal{F}_{n-1}]|\mathcal{F}_0] \notag\\
    &= \mathbb{E}[\mathbb{E}[e^{\eta Q_{n+1}}\I\{Q_n \geq l^*\}|\mathcal{F}_{n-1}]|\mathcal{F}_0] + \mathbb{E}[\mathbb{E}[e^{\eta Q_{n+1}}\I\{Q_n < l^*\}|\mathcal{F}_{n-1}]|\mathcal{F}_0] \notag\\
    &\stackrel{(a)}{\leq} \mathbb{E}[\mathbb{E}[e^{\eta Q_{n+1}}\I\{Q_n \geq l^*\}\I_{A_{n-1}}|\mathcal{F}_{n-1}]|\mathcal{F}_0] \\
    &\quad + \mathbb{E}[\mathbb{E}[e^{\eta Q_{n+1}}\I\{Q_n \geq l^*\}\I_{(A_{n-1})^c}|\mathcal{F}_{n-1}]|\mathcal{F}_0] + e^{\eta (l^*+1)} \notag\\
    &\stackrel{(b)}{\leq} \rho\mathbb{E}[ e^{\eta Q_n}|\mathcal{F}_0]+\mathbb{E}[\mathbb{E}[e^{\eta Q_{n+1}}\I\{Q_n \geq l^*\}\I_{(A_{n-1})^c}|\mathcal{F}_{n-1}]|\mathcal{F}_0]+ e^{\eta (l^*+1)} \notag\\
    &\stackrel{(c)}{\leq}\rho\mathbb{E}[ e^{\eta Q_n}|\mathcal{F}_0]+\mathbb{E}[e^{\eta Q_{n+1}}\I_{(A_{n-1})^c}|\mathcal{F}_0]+ e^{\eta (l^*+1)} \notag\\
    &\stackrel{(d)}{\leq}\rho\mathbb{E}[ e^{\eta Q_n}|\mathcal{F}_0]+\sqrt{\mathbb{E}[e^{2\eta Q_{n+1}}|\mathcal{F}_0]\cdot\mathbb{E}[\I^2_{(A_{n-1})^c}|\mathcal{F}_0]}+ e^{\eta (l^*+1)} \notag\\
    &\stackrel{(e)}{\leq} \rho\mathbb{E}[ e^{\eta Q_n}|\mathcal{F}_0]+ \frac{1}{n-1} \sqrt{\mathbb{E}[e^{2\eta Q_{n+1}}|\mathcal{F}_0]}+ e^{\eta (l^*+1)} \label{bound_exp_y_temp}
\end{align}
where (a) is using inequality \eqref{bound_inc_2}; (b) is using inequality \eqref{neg_drift_2}; (c) is upper bounding $\I\{Q_n \geq l^*\}$ by 1; (d) is using Cauchy-Schwarz inequality; (e) is using $\mathbb{E}[\I^2_{(A_{n-1})^c}|\mathcal{F}_0] \leq \frac{1}{(n-1)^2}$. 

Next we will bound $\mathbb{E}[e^{2\eta Q_{n+1}}|\mathcal{F}_0]$.
Let $D=e^\eta$, $Z_n = e^{\eta Q_n}$, then $\mathbb{E}[e^{2\eta Q_{n+1}}|\mathcal{F}_0] = \mathbb{E}[Z_{n+1}^2|\mathcal{F}_0]$. 

Since $|Q_{n+1}-Q_n|\leq 1$,
\begin{align}
    D^{-1} Z_n\leq Z_{n+1} \leq D Z_n \label{bound_z_n}
\end{align}

Since $Var(Z_{n+1}|\mathcal{F}_0) = \mathbb{E}[Z_{n+1}^2|\mathcal{F}_0] - (\mathbb{E}[Z_{n+1}|\mathcal{F}_0])^2$,
\begin{align*}
\frac{\mathbb{E}[Z_{n+1}^2|\mathcal{F}_0]}{(\mathbb{E}[Z_{n+1}|\mathcal{F}_0])^2} &= 1+ \frac{Var(Z_{n+1}|\mathcal{F}_0)}{(\mathbb{E}[Z_{n+1}|\mathcal{F}_0])^2}\\
&\stackrel{(a)}{\leq} 1+ \frac{\frac{1}{4}(D-D^{-1}) \mathbb{E}[Z_n^2|\mathcal{F}_0]}{D^{-2} (\mathbb{E}[Z_n|\mathcal{F}_0])^2}\\
&= 1+ \frac{\frac{1}{4}(D^3-D) \mathbb{E}[Z_n^2|\mathcal{F}_0]}{(\mathbb{E}[Z_n|\mathcal{F}_0])^2}
\end{align*}
where (a) is using the bounded condition of $Z_{n+1}$ in \eqref{bound_z_n} and upper bound of variance for bounded random variable (Popoviciu's inequality).

From \eqref{eta_def}, let $B = \frac{1}{4}(D^3-D) < 1$.

Then 
\begin{align*}
\frac{\mathbb{E}[Z_{n+1}^2|\mathcal{F}_0]}{(\mathbb{E}[Z_{n+1}|\mathcal{F}_0])^2} &\leq 1+ B\frac{ \mathbb{E}[Z_n^2|\mathcal{F}_0]}{(\mathbb{E}[Z_n|\mathcal{F}_0])^2}
\end{align*}

Taking telescoping sum on $n$,
\begin{align}
\frac{\mathbb{E}[Z_n^2|\mathcal{F}_0]}{(\mathbb{E}[Z_n|\mathcal{F}_0])^2} &\leq \frac{1-B^{n-1}}{1-B}+ B^{n}\\
&\leq \frac{1}{1-B}+ 1 \label{2nd_moment}
\end{align}

Therefore, 
\begin{align}
    \sqrt{\mathbb{E}[e^{2\eta Q_{n+1}}|\mathcal{F}_0]} & = \sqrt{\mathbb{E}[Z_{n+1}^2|\mathcal{F}_0]}\notag\\
    &\stackrel{(a)}{\leq}  D\sqrt{\mathbb{E}[Z_n^2|\mathcal{F}_0]} \notag\\
    &\stackrel{(b)}{\leq}  D\sqrt{\Big(\frac{1}{1-B}+ 1\Big)}\mathbb{E}[Z_n|\mathcal{F}_0] \notag\\
    &= D\sqrt{\Big(\frac{1}{1-B}+ 1\Big)}\mathbb{E}[e^{\eta Q_n}|\mathcal{F}_0] \label{bound_2nd_moment}
\end{align}
where (a) is using the bounded condition of $Z_{n+1}$ in \eqref{bound_z_n} and (b) is using the inequality in \eqref{2nd_moment} .

Let $D' = D\sqrt{\Big(\frac{1}{1-B}+ 1\Big)}$, from \eqref{bound_exp_y_temp} and \eqref{bound_2nd_moment}, 
\begin{align*}
    \mathbb{E}[e^{\eta Q_{n+1}}|\mathcal{F}_0] \leq \Big(\rho + \frac{D'}{n-1}\Big)\mathbb{E}[e^{\eta Q_n}|\mathcal{F}_0] + e^{\eta (l^*+1)}
\end{align*}

Taking telescoping sum over $n$,
\begin{align}
   \mathbb{E}[e^{\eta Q_n}|\mathcal{F}_0] \leq \alpha_n e^{\eta Q_{0}}+ \beta_n e^{\eta (l^*+1)}\label{bound_exp_y}
\end{align}
where $\alpha_1=\alpha_2=\rho+D', \beta_1 = \beta_2=1$. For $n>2$,
\begin{align*}
    \alpha_n &= \alpha_1\alpha_2 \Big(\rho+\frac{D'}{(n-2)}\Big)\Big(\rho+\frac{D'}{(n-3)}\Big)\ldots\Big(\rho+D'\Big)\\
    &= \alpha_1\alpha_2 \prod_{m=1}^{n-2}\Big(\rho+\frac{D'}{m}\Big)\\
    \beta_n &= 1+\Big(\rho+\frac{D'}{(n-2)}\Big)+\Big(\rho+\frac{D'}{(n-2)}\Big)\Big(\rho+\frac{D'}{(n-3)}\Big)+\ldots\\ 
    &\quad +\Big(\rho+\frac{D'}{(n-2)}\Big)\Big(\rho+\frac{D'}{(n-3)}\Big)\ldots\Big(\rho+D'\Big)\\
    &\quad +\Big(\rho+\frac{D'}{(n-2)}\Big)\Big(\rho+\frac{D'}{(n-3)}\Big)\ldots\Big(\rho+D'\Big)^2\\
    &=1+\sum_{j=1}^{n-2} \prod_{m=1}^{j}\Big(\rho+\frac{D'}{(n-m)}\Big)  +\Big(\rho+\frac{D'}{(n-2)}\Big)\Big(\rho+\frac{D'}{(n-3)}\Big)\ldots\Big(\rho+D'\Big)^2 .
\end{align*}

Let $n_0 = \min \{n: \rho+\frac{D'}{n}<1\}$. Then let $\gamma = \Big(\rho+\frac{D'}{n_0}\Big)$, $\xi = \Big(\frac{\rho+D'}{\rho+\frac{D'}{n_0}}\Big)^{n_0}$. ($0<\gamma<1$. $\xi$ is a constant.) Using the fact $\forall n>n_0$, $\frac{1}{n}<\frac{1}{n_0}$, and $\forall 1\leq n\leq n_0$, $\frac{1}{n}\leq 1$.
\begin{align}
    \alpha_n &\leq (\rho+D')^{n_0} \Big(\rho+\frac{D'}{n_0}\Big)^{n-n_0}\notag\\
    &= \xi \gamma^{n} \label{bound_alpha_n}\\
    \beta_n &\leq \sum_{m=0}^{n-n_0} \Big(\rho+\frac{D'}{n_0}\Big)^m + \Big(\rho+\frac{D'}{n_0}\Big)^{n-n_0}(\rho+D')^{n_0-1} \notag \\
    &= \frac{1-\gamma^{n-n_0+1}}{1-\gamma}+\xi \gamma^{n}\label{bound_beta_n}
\end{align}

From \eqref{bound_exp_y}, \eqref{bound_alpha_n}, and $\eqref{bound_beta_n}$,
\begin{align}
   \mathbb{E}[e^{\eta Q_{n}}|\mathcal{F}_0] \leq \xi \gamma^{n} e^{\eta Q_{0}}+ \Big(\frac{1-\gamma^{n-n_0+1}}{1-\gamma}+\xi \gamma^{n}\Big) e^{\eta (l^*+1)} \label{bound_exp_y_2}
\end{align}

Define the fist hitting time $\tau = \inf\{n: Q_n\geq z\}$. 
Define event $H^N(z)=\{\max_{1\leq n \leq N}Q_n \geq z | \mathcal{F}_0\}$ and a disjoint set of events $H_n^N(z) = \I(\tau = n)$. $H^N(z) = \cup_{n=1}^N H_n^N(z)$. 
\begin{align*}
    e^{\eta z}\mathbb{P}(H^N(z)) &= \sum_{n=1}^N e^{\eta z}\mathbb{P}(H_n^N(z))\\
    &= \sum_{n=1}^N \mathbb{E}[e^{\eta z}\I_{H_n^N(z)}] \\
    &\stackrel{(a)}{\leq} \sum_{n=1}^N \mathbb{E}[e^{\eta Q_n}\I_{H_n^N(z)}]
\end{align*}
where (a) is using the fact $Q_n\geq z$ under event $H_n^N(z)$.

Assume $z>l^*+1$, 
\begin{align*}
    \sum_{n=1}^N \mathbb{E}[e^{\eta Q_n}\I_{H_n^N(z)}] &=  \sum_{n=1}^N \mathbb{E}[\mathbb{E}[e^{\eta Q_n}\I_{H_n^N(z)}\I\{Q_{n-1}\geq l^*\}|\mathcal{F}_{n-1}]] \\
    & \quad + \sum_{n=1}^N \mathbb{E}[\mathbb{E}[e^{\eta Q_n}\I_{H_n^N(z)}\I\{Q_{n-1}< l^*\}|\mathcal{F}_{n-1}]]
\end{align*}
Since $|Q_n-Q_{n-1}|\leq 1$, $Q_n>z>l^*+1$ is impossible when $Q_{n-1}<l^*$,
\begin{align*}
   \sum_{n=1}^N \mathbb{E}[\mathbb{E}[e^{\eta Q_n}\I_{H_n^N(z)}\I\{Q_{n-1}< l^*\}|\mathcal{F}_{n-1}]] = 0
\end{align*}

Therefore,  
\begin{align*}
   e^{\eta z}\mathbb{P}(H^N(z)|\mathcal{F}_0) &\leq \sum_{n=1}^N \mathbb{E}[\mathbb{E}[e^{\eta Q_n}\I_{H_n^N(z)}\I\{Q_{n-1}\geq l^*\}|\mathcal{F}_{n-1}]|\mathcal{F}_0]\leq \sum_{n=1}^N \mathbb{E}[e^{\eta Q_{n}}|\mathcal{F}_0]\\
   &\stackrel{(a)}{\leq} \xi \frac{1-\gamma^N}{1-\gamma}e^{\eta Q_{0}}+\Big(\frac{N}{1-\gamma}-\gamma^{-n_0+1}\frac{1-\gamma^N}{1-\gamma}+\xi \frac{1-\gamma^N}{1-\gamma}\Big)e^{\eta (l^*+1)}\\
   &\stackrel{(b)}{\leq}  \frac{\xi}{1-\gamma}e^{\eta Q_{0}}+\Big(\frac{N}{1-\gamma}+ \frac{\xi}{1-\gamma}\Big)e^{\eta (l^*+1)}
\end{align*}
where (a) is using \eqref{bound_exp_y_2} and (b) is dropping the negative terms.

Therefore, 
\begin{align}
   \mathbb{P}(\max_{1\leq n \leq N}Q_n \geq z|\mathcal{F}_0) &\leq \frac{\xi}{1-\gamma} e^{-\eta (z-Q_0)}+ \Big(\frac{N}{1-\gamma}+ \frac{\xi}{1-\gamma}\Big)e^{-\eta (z-l^*-1)}\\
   &= \Big(\frac{\xi}{1-\gamma} (e^{\eta Q_0}+e^{\eta (l^*+1)})+ \frac{ e^{\eta(l^*+1)}}{1-\gamma} N \Big)e^{-\eta z}\label{exp_bound_max}
\end{align}

Let $a = \frac{\xi}{1-\gamma} (e^{\eta Q_0}+e^{\eta (l^*+1)})$, $b = \frac{ e^{\eta(l^*+1)}}{1-\gamma}$ and $z_0 = \max\{\frac{\log (a+bN)}{\eta},l^*\}$. 

Then, let $Q_{max} = \max_{1\leq n \leq N}Q_n$. 
\begin{align*}
    \mathbb{E}[Q_{max}|\mathcal{F}_0] &= \mathbb{E}[Q_{max}\mathbb{I}(Q_{max}<z_0)|\mathcal{F}_0]+\mathbb{E}[Q_{max}\mathbb{I}(Q_{max}\geq z_0)|\mathcal{F}_0]\\
    &\stackrel{(a)}{=} \mathbb{E}[Q_{max}\mathbb{I}(Q_{max}<z_0)|\mathcal{F}_0]+\int_{z_0}^N \mathbb{P}(Q_{max}>z) dz + z_0 \mathbb{P}(Q_{max}>z_0)\\
    &\stackrel{(b)}{\leq} z_0+\int_{z_0}^N (a+bN) e^{-\eta z} dz + z_0 (a+bN) e^{-\eta z_0}\\
    & = \frac{\log (a+bN)}{\eta}+ \frac{(a+bN)}{\eta}\Big(\frac{1}{(a+bN)}-e^{-\eta N}\Big)+ \frac{\log (a+bN)}{\eta}  \frac{(a+bN)}{(a+bN)}\\
    &= \frac{2\log (a+bN)+1}{\eta}+\frac{(a+bN)e^{-\eta N}}{ \eta}
\end{align*}
where (a) uses the fact that $\mathbb{E}[X\mathbb{I}(X\geq z_0)] =  \int_{z_0}^N \mathbb{P}(X>z) dz + z_0\mathbb{P}(X>z_0)$ for a random variable $X\in[0,N]$; (b) uses the 
definition of $a$ and $b$ and the inequality~\eqref{exp_bound_max}.

Therefore, 
\begin{align*}
    \mathbb{E}\Big[\max_{1\leq n \leq N}Q_n |\mathcal{F}_0\Big] &\leq \frac{2\log \Big(\frac{\xi}{1-\gamma} (e^{\eta Q_0}+e^{\eta (l^*+1)})+ \frac{ e^{\eta(l^*+1)}}{1-\gamma} N \Big)+1}{\eta} \notag\\
   & \quad +\frac{\Big(\frac{\xi}{1-\gamma} (e^{\eta Q_0}+e^{\eta (l^*+1)})+ \frac{ e^{\eta(l^*+1)}}{1-\gamma} N \Big)e^{-\eta N}}{ \eta}\\
   & = O\big(\log(N)+l^*+1\big)
\end{align*}

\end{proof}

\subsection{Proof of Theorem \ref{thm:regret}}

\begin{proof}
For notation simplicity, in the proof, we will use $\pi$ to represent $\piLy$. From Lemma~\ref{lemma:lyapunov_drift}, under $\piLy$, 
\begin{align*}
    &\bE[L(Q_{n+1}^\pi)|\mathcal{F}_{n-1}^\pi]-\bE[L(Q_n^\pi)|\mathcal{F}_{n-1}^\pi] - V\bE[R_n^\pi|\mathcal{F}_{n-1}^\pi] \\
    \leq &\frac{\sigma^2+\delta^2+2\delta}{2} + \bE\big[X_n^\pi\big|\mathcal{F}_{n-1}^\pi\big]\Big( \Psi_n(Q_n^\pi) - cQ_n^\pi+ \delta Q_n^\pi \Big) \\
     = &\frac{\sigma^2+\delta^2+2\delta}{2}+ \bE\big[X_n^\pi\big|\mathcal{F}_{n-1}^\pi\big]\Big( \Psi_n(I_n^\pi, Q_n^\pi) - cQ_n^\pi + \delta Q_n^\pi\Big) \\
    = &\frac{\sigma^2+\delta^2+2\delta}{2} + \bE\big[X_n^\pi\big|\mathcal{F}_{n-1}^\pi\big]\Big( \I_{A_{n-1}^\pi} \Psi_n(I_n^\pi, Q_n^\pi) + \I_{(A_{n-1}^\pi)^c}\Psi_n(I_n^\pi, Q_n^\pi) - cQ_n^\pi + \delta Q_n^\pi\Big) \\
    \stackrel{(a)}{\leq} & \frac{\sigma^2+\delta^2+2\delta}{2} + \bE\big[X_n^\pi\big|\mathcal{F}_{n-1}^\pi\big]\Big( \I_{A_{n-1}^\pi} \Psi_n(k_n, Q_n^\pi) + \I_{(A_{n-1}^\pi)^c}\Psi_n(k_n, Q_n^\pi)- cQ_n^\pi + \delta Q_n^\pi\\
    & \quad + 2{\tt rad}_{I_n^\pi}(n,\alpha)\cdot\Big(\frac{V(1+r_{\tt max})}{\mu_{\tt min}}+\frac{Q_{n}^\pi(1+y_{\tt max})}{\mu_{\tt min}}\Big) +\I_{(A_{n-1}^\pi)^c}(Vr_{\tt max}+n(1+y_{\tt max})) \Big) \\
    \stackrel{(b)}{\leq} &\frac{\sigma^2+\delta^2+2\delta}{2} + \bE\big[X_n^\pi\big|\mathcal{F}_{n-1}^\pi\big]\Big(\frac{\sum_{k=1}^K p_{k}^*(-V\bE[R_{n,k}]+Q_{n}^\pi\cdot\bE[Y_{n,k}])}{\sum_{k=1}^K p_{k}^*\bE[X_{n,k}]} - cQ_n^\pi + \delta Q_n^\pi\\
    &\quad+2{\tt rad}_{I_n^\pi}(n,\alpha)\cdot\Big(\frac{V(1+r_{\tt max})}{\mu_{\tt min}}+\frac{Q_{n}^\pi(1+y_{\tt max})}{\mu_{\tt min}}\Big)
     + \I_{(A_{n-1}^\pi)^c}(Vr_{\tt max}+n(1+y_{\tt max}))\Big)\\
    = & \yzeq{}\frac{\sigma^2+\delta^2+2\delta}{2} + \bE\big[X_n^\pi\big|\mathcal{F}_{n-1}^\pi\big]\Big(-V r(p^*) +2{\tt rad}_{I_n^\pi}(n,\alpha)\cdot\Big(\frac{V(1+r_{\tt max})}{\mu_{\tt min}}+\frac{Q_{n}^\pi(1+y_{\tt max})}{\mu_{\tt min}}\Big)\\
      &\quad \yzeq{}+ \I_{(A_{n-1}^\pi)^c}(Vr_{\tt max}+n(1+y_{\tt max})\big) + y(p^*)Q_n^\pi - cQ_n^\pi + \delta Q_n^\pi\Big)\\
    \stackrel{(c)}{\leq} & \frac{\sigma^2+\delta^2+2\delta}{2} + \bE\big[X_n^\pi\big|\mathcal{F}_{n-1}^\pi\big]\Big(-V r(p^*) +2{\tt rad}_{I_n^\pi}(n,\alpha)\cdot\Big(\frac{V(1+r_{\tt max})}{\mu_{\tt min}}+\frac{Q_{n}^\pi(1+y_{\tt max})}{\mu_{\tt min}}\Big)\\
     &\quad + \I_{(A_{n-1}^\pi)^c}(Vr_{\tt max}+n(1+y_{\tt max})\big) + \delta Q_n^\pi\Big)
\end{align*}
where (a) is using Lemma~\ref{lemma:bound-psi}; (b) is using Proposition~\ref{prop:offline-determ}; (c) is using is using the definition of $\pi^*$ (Definition~\ref{def:osrp}).

Taking expectation on both sides and summing from $N$ to $0$, 
\begin{align*}
    &\bE[L(Q_N^\pi)] - \bE[L(Q_0^\pi)] \\
    \leq &\frac{N(\sigma^2+\delta^2+2\delta)}{2}+V\sum_{n=1}^N\bE[R_n^\pi] - V \sum_{n=1}^N \bE[X_n^\pi]r(p^*) + \delta \sum_{n=1}^N \bE[X_n^\pi Q_n^\pi]\\
     \quad& + 2V \sum_{n=1}^N \bE[X_n^\pi {\tt rad}_{I_n^\pi}(n,\alpha)]\frac{1+r_{\tt max}}{\mu_{\tt min}} + 2 \sum_{n=1}^N \bE[X_n^\pi {\tt rad}_{I_n^\pi}(n,\alpha)Q_n^\pi]\frac{1+y_{\tt max}}{\mu_{\tt min}}\\
     \quad& +  V \sum_{n=1}^N \bE[X_n^\pi \I_{(A_{n-1}^\pi)^c}]r_{\tt max} + \sum_{n=1}^N (\bE[X_n^\pi \I_{(A_{n-1}^\pi)^c}]\cdot n (1+y_{\tt max}))
\end{align*}

Rearrange the terms, since $\bE[L(Q_n^\pi)] - \bE[L(Q_0^\pi)]\geq 0$
\begin{align}
    \frac{\sum_{n=1}^N \bE[R_n^\pi]}{\sum_{n=1}^N \bE[X_n^\pi]} &\geq r(p^*) - \frac{N(\sigma^2+\delta^2+2\delta)}{2V\sum_{n=1}^N \bE[X_n^\pi]} - \delta \frac{\sum_{n=1}^N \bE[X_n^\pi Q_n^\pi]}{V\sum_{n=1}^N \bE[X_n^\pi]}\notag \\
    & - \frac{2\sum_{n=1}^N \bE[X_n^\pi {\tt rad}_{I_n^\pi}(n,\alpha)]}{\sum_{n=1}^N \bE[X_n^\pi]}\frac{1+r_{\tt max}}{\mu_{\tt min}}- \frac{2\sum_{n=1}^N \bE[X_n^\pi {\tt rad}_{I_n^\pi}(n,\alpha)Q_n^\pi]}{V\sum_{n=1}^N \bE[X_n^\pi]}\frac{1+y_{\tt max}}{\mu_{\tt min}}\notag\\
    & - \frac{\sum_{n=1}^N \bE[X_n^\pi \I_{(A_{n-1}^\pi)^c}]}{\sum_{n=1}^N \bE[X_n^\pi]}r_{\tt max} - \frac{\sum_{n=1}^N (\bE[X_n^\pi \I_{(A_{n-1}^\pi)^c}]\cdot n)}{V\sum_{n=1}^N \bE[X_n^\pi]} (1+y_{\tt max})\notag
\end{align}

\yz{$\bE[X_n^\pi] \geq \mu_{\tt min}$ is true for any $n$, so $\sum_{n=1}^N \bE[X_n^\pi]\geq N \mu_{\tt min}$. Therefore, }
\yz{
\begin{align}
    \frac{\sum_{n=1}^N \bE[R_n^\pi]}{\sum_{n=1}^N \bE[X_n^\pi]} &\geq r(p^*) - \frac{(\sigma^2+\delta^2+2\delta)}{2V\mu_{\tt min}} - \delta \frac{\sum_{n=1}^N \bE[X_n^\pi Q_n^\pi]}{VN\mu_{\tt min}}\notag \\
    & - \frac{2\sum_{n=1}^N \bE[X_n^\pi {\tt rad}_{I_n^\pi}(n,\alpha)]}{N}\frac{1+r_{\tt max}}{\mu_{\tt min}^2}- \frac{2\sum_{n=1}^N \bE[X_n^\pi {\tt rad}_{I_n^\pi}(n,\alpha)Q_n^\pi]}{VN}\frac{1+y_{\tt max}}{\mu_{\tt min}^2}\notag\\
    & - \frac{\sum_{n=1}^N \bE[X_n^\pi \I_{(A_{n-1}^\pi)^c}]}{N\mu_{\tt min}}r_{\tt max} - \frac{\sum_{n=1}^N (\bE[X_n^\pi \I_{(A_{n-1}^\pi)^c}]\cdot n)}{VN\mu_{\tt min}} (1+y_{\tt max})\notag
\end{align}
}
\yz{For all $n\in[1,N]$, using the facts that $\bE[Q_n^\pi] \leq \bE[\max_n Q_n^\pi]$ and $X_n^\pi \in (0,1]$, we have $\sum_{n=1}^N \bE[X_n^\pi Q_n^\pi] \leq  N\cdot\bE[\max_n Q_n^\pi]$ and $\bE[X_n^\pi {\tt rad}_{I_n^\pi}(n,\alpha)Q_n^\pi] \leq N\cdot\bE[\max_n Q_n^\pi]\cdot \bE[X_n^\pi {\tt rad}_{I_n^\pi}(n,\alpha)]$. Also, by definition of the confidence radius (Definition~\ref{def:cradius}), }
\yz{
\begin{align}
    \sum_{n=1}^N \bE[X_n^\pi {\tt rad}_{I_n^\pi}(n,\alpha)] &\leq  \sum_{n=1}^N\bE\Big[\frac{\sqrt{2\alpha \log n}}{\sqrt{T_{I_n^\pi}(n)}}\Big] \leq 2\sqrt{2\alpha K N \log N} \notag
\end{align}
}
\yz{
Therefore, 
\begin{align}
    \frac{\sum_{n=1}^N \bE[R_n^\pi]}{\sum_{n=1}^N \bE[X_n^\pi]} &\geq r(p^*) - \frac{(\sigma^2+\delta^2+2\delta)}{2V\mu_{\tt min}} - \delta \frac{\bE[\max_{n} Q_n^\pi]}{V\mu_{\tt min}}\notag \\
    & - \frac{4\sqrt{2\alpha KN\log N}}{N}\frac{1+r_{\tt max}}{\mu_{\tt min}^2}- \frac{4\sqrt{2\alpha KN\log N}\bE[\max_{n} Q_n^\pi] }{VN}\frac{1+y_{\tt max}}{\mu_{\tt min}^2}\notag\\
    & - \frac{\sum_{n=1}^N \bE[X_n^\pi \I_{(A_{n-1}^\pi)^c}]}{N\mu_{\tt min}}r_{\tt max} - \frac{\sum_{n=1}^N (\bE[X_n^\pi \I_{(A_{n-1}^\pi)^c}]\cdot n)}{VN\mu_{\tt min}} (1+y_{\tt max})\notag
\end{align}
}
\yz{
Using the fact that $\bE[\I_{(A_{n-1}^\pi)^c}]\leq \frac{1}{n^2}$ and $X_n\in(0,1]$, we have $\sum_{n=1}^N \bE[X_n^\pi \I_{(A_{n-1}^\pi)^c}] \leq \pi^2/6$ and $\sum_{n=1}^N \bE[X_n^\pi \I_{(A_{n-1}^\pi)^c}n] \leq \log N$. Therefore, }
\yz{
\begin{align}
    \frac{\sum_{n=1}^N \bE[R_n^\pi]}{\sum_{n=1}^N \bE[X_n^\pi]} &\geq r(p^*) - \frac{(\sigma^2+\delta^2+2\delta)}{2V\mu_{\tt min}} - \delta \frac{\bE[\max_{n} Q_n^\pi]}{V\mu_{\tt min}}\label{eqn:regret_online_V_1}\\
    & - \frac{4\sqrt{2\alpha KN\log N}}{N}\frac{1+r_{\tt max}}{\mu_{\tt min}^2}- \frac{4\sqrt{2\alpha KN\log N}\bE[\max_{n} Q_n^\pi] }{VN}\frac{1+y_{\tt max}}{\mu_{\tt min}^2}\label{eqn:regret_online_V_2} \\
    & - \frac{\pi^2/6}{N\mu_{\tt min}}r_{\tt max} - \frac{\log N}{VN\mu_{\tt min}} (1+y_{\tt max})\label{eqn:regret_online_V_3}
\end{align}
}
Using the expression for regret (Lemma~\ref{lemma:regret-decomp}),
\begin{align*}
    &\bE[{\tt REW}^{\pi^*}(B)]-\bE[{\tt REW}^{\pi}(B)]\\
    \leq& \Big(r(p^*)- \frac{\bE[\sum_{n=1}^{n_0(B)}R_n^\pi]}{\bE[\sum_{n=1}^{n_0(B)} X_n^\pi]}\Big)\cdot \frac{2B}{\mu_{\tt min}}  + \frac{r_{\tt max}}{\mu_{\tt min}^2}\Big(1+8e^{-B\mu_{\tt min}/4}\Big)\\
    \leq& \frac{B(\sigma^2+\delta^2+2\delta) }{V \mu_{\tt min}^2} +\delta\cdot  \frac{2B\cdot \bE[\max_{n} Q_n^\pi]}{V\mu_{\tt min}^2}+ 4\sqrt{2\alpha \cdot KB\log B}\frac{1+r_{\tt max}}{\mu_{\tt min}^2}+ \frac{\log B(1+y_{\tt max})}{V\mu_{\tt min}^2}\\
   \quad &  + \frac{4\sqrt{2\alpha\cdot KB\log B} \bE[\max_{n} Q_n^\pi]}{V}\frac{1+y_{\tt max}}{\mu_{\tt min}^2}
    + \frac{\pi^2/6}{\mu_{\tt min}}r_{\tt max} + \frac{r_{\tt max}}{\mu_{\tt min}^2}\Big(1+8e^{-B\mu_{\tt min}/4}\Big)
\end{align*}

From  Lemma~\ref{lemma:max-Q}, $\bE[\max_{1\leq n \leq n_0(B)}Q_n^\pi] = O(\log(n_0(B))+l^*+1)$. Therefore, 
\begin{align*}
    &\bE[{\tt REW}^{\pi^*}(B)]-\bE[{\tt REW}^{\pi}(B)]\\
    &= O \Bigg(\frac{r_{\tt max}\sqrt{KB\log B}}{\mu_{\tt min}^2}+\Big(\frac{\sigma^2+y_{\tt max}+\delta \log B}{V\mu_{\tt min}^2}+\frac{\delta r_{\tt max}}{\epsilon \mu_{\tt min}^2}\Big)B \Bigg)
\end{align*}
The results in Theorem~\ref{thm:regret} follows by adding the maximal regret during initial exploration $O\Big(\frac{y_{\tt max}^2 K\log B}{\epsilon^2\mu_{\tt min}^2}\Big)$ and the asymptotic optimality of $\pi^*$ (Proposition~\ref{prop:optimality-gap}).

Using the definition $Q_{n+1} = \max\big\{0, Q_{n} + Y_{n}-(c-\delta)X_n\big\}$,
\begin{align*}
    Q_{n+1} \geq Q_n + Y_{n}-(c-\delta)X_n
\end{align*}

Taking telescoping sum from $N$ to $0$, 
\begin{align*}
   \sum_{n=1}^N Y_{n}-(c-\delta) \sum_{n=1}^N X_n \leq Q_N
\end{align*}

Taking expectation on both sides, under $\piLy$ 
\begin{align*}
   \sum_{n=1}^N \bE[Y_{n}^\pi]-c \sum_{n=1}^N \bE[X_n^\pi] \leq \bE[Q_N^\pi] - \delta \sum_{n=1}^N \bE[X_n^\pi]
\end{align*}

Let $n=n_0(B)$, 
\begin{align*}
   \frac{\bE\Big[\sum_{n=1}^{n_0(B)}(Y_n^\pi)\Big]}{\bE\Big[\sum_{n=1}^{n_0(B)}(X_n^\pi)\Big]}-c \leq \frac{\bE[Q_{n_0(B)}^\pi]}{\sum_{i=1}^{n_0(B)} \bE[X_i^\pi]}  -\delta\leq \frac{\bE[Q_{n_0(B)}^\pi]}{n_0(B)\mu_{\tt min}} - \delta 
\end{align*}

Using the decomposition for regret violation (Lemma~\ref{lemma:constraint-decomp}),
\begin{align*}
     D^{\pi}(B) &\leq  \frac{\bE[Q_{n_0(B)}^\pi]}{n_0(B)\mu_{\tt min}} \cdot \frac{n_0(B)}{B} - \frac{2\delta}{\mu_{\tt min}} + \frac{8(y_{\tt max}-c)}{B\mu_{\tt min}^2}e^{-B\mu_{\tt min}/4} +\frac{1}{B}\\
     & = \frac{\bE[Q_{n_0(B)}^\pi]}{B\mu_{\tt min}} - \frac{2\delta}{\mu_{\tt min}} + \frac{8(y_{\tt max}-c)}{B\mu_{\tt min}^2}e^{-B\mu_{\tt min}/4} +\frac{1}{B}
\end{align*}

From  Lemma~\ref{lemma:max-Q}, $\bE[Q_{n_0(B)}^\pi] = O(\log(n_0(B))+l^*+1)$. 
\begin{align}
    D^{\pi}(B) &= O\Bigg( \frac{\log B}{B\mu_{\tt min}^2}+\frac{ r_{\tt max}}{B\mu_{\tt min}\epsilon}-\frac{\delta}{\mu_{\tt min}} \Bigg),
\end{align}
The results in Theorem~\ref{thm:regret} follows by adding the maximal regret during initial exploration $O\Big(\frac{y_{\tt max}^2K\log B}{\epsilon^2\mu_{\tt min}^2 B} \Big)$ and the asymptotic optimality of $\pi^*$ (Proposition~\ref{prop:optimality-gap}).
\end{proof}

\section{Additional Simulation Results}\label{app:simulations}
Figure~\ref{fig:increasing_K} shows the reward rate and constraint-violation of $\tt LyOn$ as $K$ increases ($v_0=1, \delta_0=0.5)$. New arms are added with the principle that high reward rate arm also has high penalty rate. As predicted, as $K$ increases, the $\tt LyOn$ initially has larger constraint-violation and correspondingly larger reward rate, but when budget is large enough they all converge to the optimal rate for all $K$.

\begin{figure*}[htbp]
\centering
\subfloat[]{\includegraphics[width=2.2in]{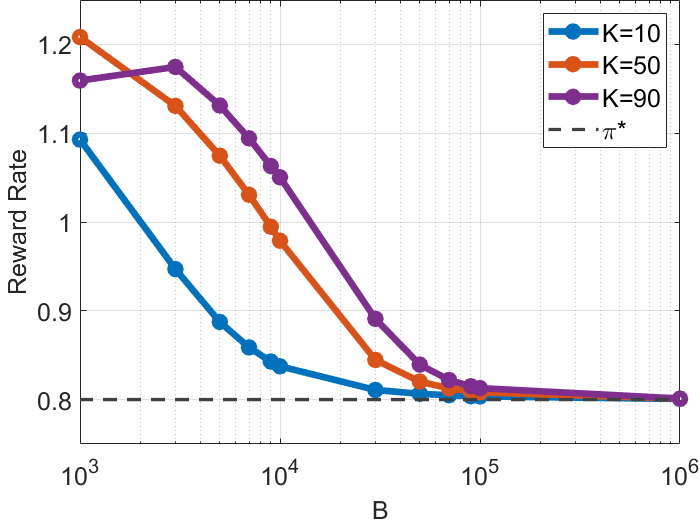}
\label{fig:K_delta_0_5_large_B_reward_rate}}
\hfil
\subfloat[]{\includegraphics[width=2.2in]{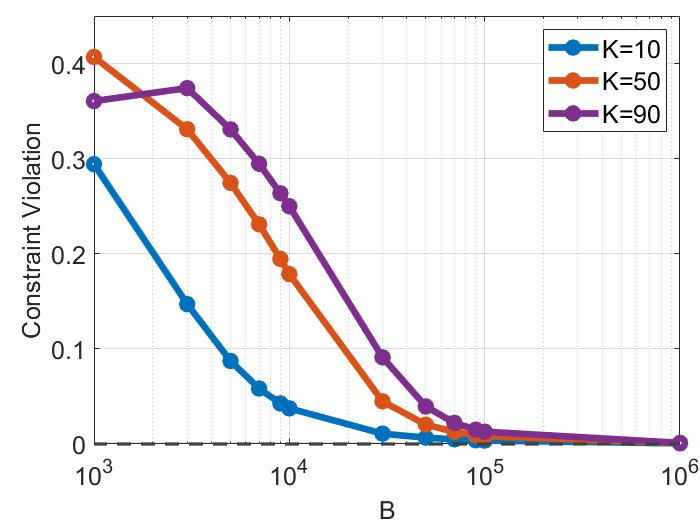}
\label{fig:K_delta_0_5_large_B_constraint_violation}}

\caption{ Constraint-violation and reward rate as $B$ grows for different number of arms. ($v_0=1, \delta_0 = 0.5$)}
\label{fig:increasing_K}
\end{figure*}

\begin{figure*}[htbp]
\centering
\subfloat[]{\includegraphics[width=2.2in]{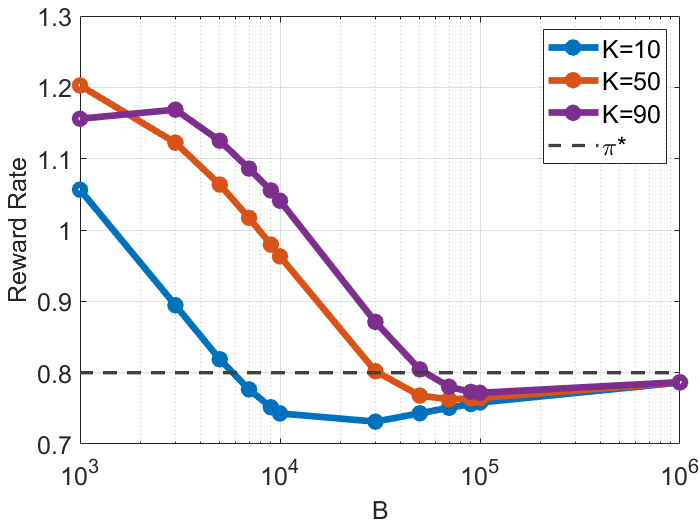}
\label{fig:K_delta_15_large_B_reward_rate}}
\hfil
\subfloat[]{\includegraphics[width=2.2in]{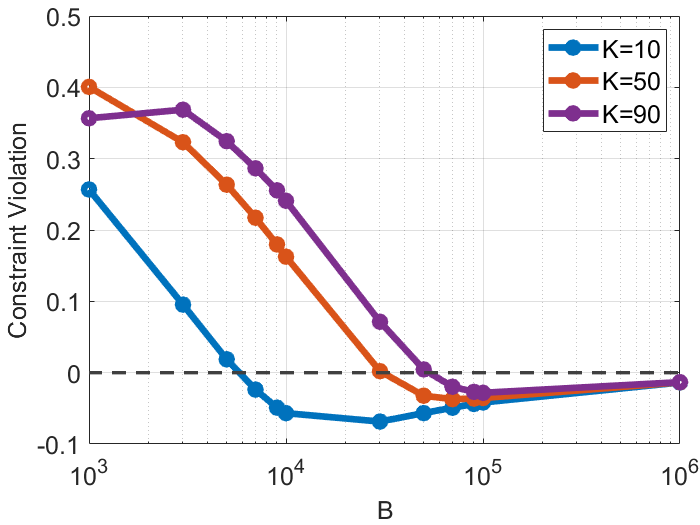}
\label{fig:K_delta_15_large_B_constraint_violation}}

\caption{ Constraint-violation and reward rate as $B$ grows for different number of arms. ($v_0=1, \delta_0 = 15$)}
\label{fig:increasing_K_large_delta}
\end{figure*}

Figure~\ref{fig:increasing_K_large_delta} further confirms the result that selecting $v_0$ and $\delta_0$ properly, we can have zero constraint-violation for different number of arms when $B$ is sufficiently large. At the same time, the reward rate still converges to the optimal.

\begin{figure*}[htbp]
\centering
\subfloat[]{\includegraphics[width=2.2in]{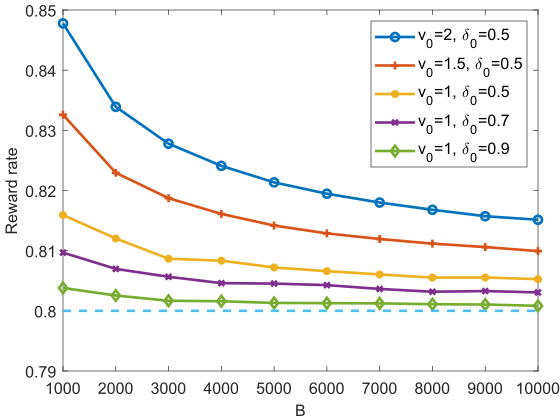}
\label{fig:Offline_V_Delta_Reward_Rate}}
\hfil
\subfloat[]{\includegraphics[width=2.2in]{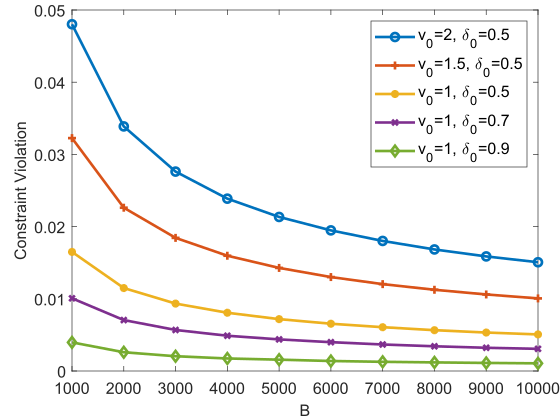}
\label{fig:Offline_V_Delta_Constraint}}
\caption{ Constraint-violation and reward rate as $B$ grows for different $v_0$ and $\delta_0$ values. (a) $\tt LyOff$ reward rate. (b) $\tt LyOff$ constraint-violation.}
\label{fig:tradeoff_delta_V_Offline}
\end{figure*}
 
\begin{figure*}[htbp]
\centering
\subfloat[]{\includegraphics[width=2.2in]{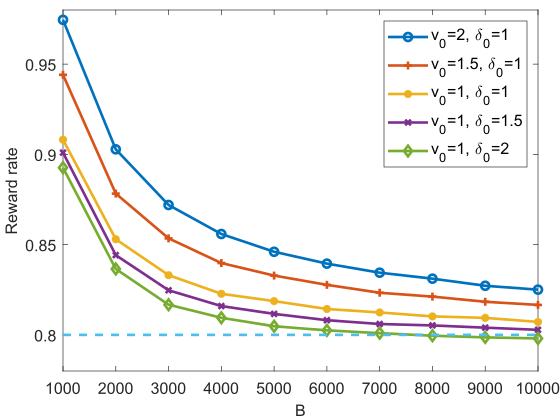}
\label{fig:Online_V_Delta_Reward_Rate}}
\hfil
\subfloat[]{\includegraphics[width=2.2in]{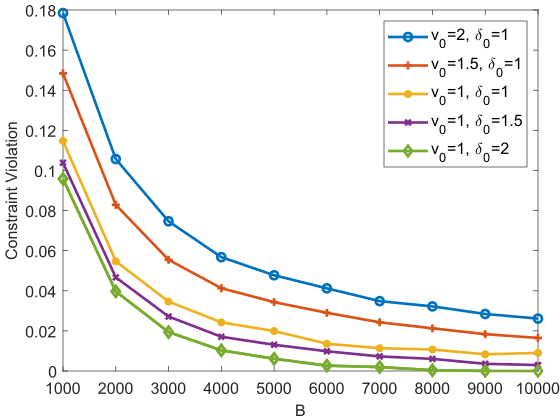}
\label{fig:Online_V_Delta_Constraint}}
\caption{ Constraint-violation and reward rate as $B$ grows for different $v_0$ and $\delta_0$ values. (a) $\tt LyOn$ reward rate. (b) $\tt LyOn$ constraint-violation.}
\label{fig:tradeoff_delta_V_Online}
\end{figure*}

Figure~\ref{fig:tradeoff_delta_V_Offline}  and Figure~\ref{fig:tradeoff_delta_V_Online} show the reward rate and constraint-violation for $\tt LyOff$ and $\tt LyOn$ algorithms with different $v_0$ and $\delta_0$ values for $K=2$ arms. Assuming $c = 0.8$, arm 1 is selected to have a high reward rate and a high penalty rate with $\bE[X_1] = 0.4, \bE[Y_1] = 0.6$, and $\bE[R_1] = 0.6$. Arm 2 is selected to have a low reward rate and a low penalty rate with $\bE[X_2] = 0.6, \bE[Y_2] = 0.3$, and $\bE[R_2] = 0.3$. The results confirm the trade-off between regret and constraint-violation when changing the values of $V$ and $\delta$. Larger $V$ or smaller $\delta$ will result in larger constraint-violation but smaller regret (higher reward rate).  On the other hand, smaller $V$ or larger $\delta$ will result in smaller constraint-violation but larger regret (smaller reward rate). 
\end{document}